\begin{document}
	\title{\bf Change Point Detection with Conceptors}
	\author{Noah D. Gade\footnote{PhD Candidate, Department of Statistics, University of Virginia; Email: ndg5e@virginia.edu}\hspace{.2cm}\\
		and\\
		Jordan Rodu\footnote{Assistant Professor, Department of Statistics, University of Virginia; Email: jsr6q@virginia.edu}}
	\date{\today}
	\maketitle
	\bigskip
	\begin{abstract}
		Offline change point detection retrospectively locates change points in a time series. Many nonparametric methods that target i.i.d. mean and variance changes fail in the presence of nonlinear temporal dependence, and model based methods require a known, rigid structure. For the \emph{at most one change point} problem, we propose use of a conceptor matrix to learn the characteristic dynamics of a baseline training window with arbitrary dependence structure. The associated echo state network acts as a featurizer of the data, and change points are identified from the nature of the interactions between the features and their relationship to the baseline state. This model agnostic method can suggest potential locations of interest that warrant further study. We prove that, under mild assumptions, the method provides a consistent estimate of the true change point, and quantile estimates are produced via a moving block bootstrap of the original data. The method is evaluated with clustering metrics and Type 1 error control on simulated data, and applied to publicly available neural data from rats experiencing bouts of non-REM sleep prior to exploration of a radial maze. With sufficient spacing, the framework provides a simple extension to the sparse, multiple change point problem.
	\end{abstract}
	
	\noindent%
	{\it Keywords:} change point detection, time series, echo state network, recurrent neural network, nonlinear dependence
	\vfill
	\newpage
	\spacingset{1.3}
	
\section{Introduction}
The offline change point identification problem is widely discussed in the literature, and used in a broad range of domains like signal processing, human activity monitoring, and finance \citep{truong20}. For time series data $\mathbf{y}_t\in\mathbb{R}^d$ with $t = 1,\ldots, T$, the goal is to retroactively identify points where the distribution changes. Despite the vast amount of work in this field, most of it is focused on identifying mean and variance changes, and the challenge of change point detection in the presence of nonlinear dependence is unresolved. Model based methods assume a known, rigid structure, and specification of a functional form to fit nonlinearities present in data can be difficult. Many nonparametric methods are only applicable to independent and identically distributed data, and are applied to cases where changes would be easily identified from visual inspection of a time series plot. Unless they target the aforementioned mean and variance scenarios, most nonparametric methods also have an uninterpretable or shrouded definition of the notion of ``change.'' The contribution of this paper is a model agnostic method for detecting change in multivariate and arbitrarily dependent nonlinear time series data.
\par
The conceptor change point (CCP) methodology builds on prevalent ideas in statistics and representation learning literature. High-dimensional featurizations from echo state networks (ESNs), and a conceptor matrix from a specified and interpretable ``baseline'' state, allows for the flexibility of change detection in processes with elaborate dependence structures. This methodology should be used as a tool to suggest potential locations of interest in a dataset where traditional methods and inspection fail.
\par
For this work, the framework is restricted to the at most one change point (AMOC) problem. The distribution functions for each vector in time $\mathbf{y}_1, \ldots, \mathbf{y}_{\tau} \sim \mathcal{F}_{1}$ and $\mathbf{y}_{\tau+1}, \ldots, \mathbf{y}_T\sim \mathcal{F}_{T}$ are compared with the hypotheses
\begin{align}
	H_0 &: \mathcal{F}_{1} = \mathcal{F}_{T}\nonumber\\
	H_A &: \mathcal{F}_{1} \neq \mathcal{F}_{T} \label{ch1:hypothesis},
\end{align}
where $\mathcal{F}_{1}$ and $\mathcal{F}_{T}$ are unknown. Rejection of the null leads to the conclusion that a change took place immediately after time point $\tau$. With sufficient spacing, the framework provides simple extensions to the sparse, multiple change point problem and the online, sequential change point problem.

\subsection{Change Point Detection}
Change point detection methods can broadly be classified as sequential, agglomerative, or divisive. Sequential change point detection, like that of \citet{lai95}, best lends itself to the online problem where changes are identified in sequence while the data is observed. Online change point detection is not the main focus of this manuscript. Agglomerative change point methods begin by labelling each data point as a unique cluster, and proceed by strategically grouping adjacent clusters with an algorithmic criterion; \citet{fryzlewicz18} and \citet{matteson14} provide two examples of these criteria.
\par 
Divisive algorithms cluster a series of points and algorithmically search for breaks that best divide into chronological classifications. Many of these algorithms follow the binary segmentation approach, pioneered by \citet{vostrikova81}, and identify subsequent change points from each of the divided pieces. Penalized techniques like \citet{ombao05}, \citet{lavielle06}, and \citet{killick12} are also prevalent and recent computational work by \citet{haynes17} and \citet{tickle20} build on these methods and aim to optimize the search process for efficient change point detection. CUSUM type statistics provide the most common foundation to locate change points in ordered sequences \citep{picard85, gombay95, gombay99, cho12, holmes21, kojadinovic21}. Many modifications are in the form of the Kolmogorov-Smirnov and Cramer-von Mises criteria, including the self-normalization method of \citet{shao10}. Most initial work in this field, like wild binary segmentation of \citet{fryzlewicz14}, applies only to univariate sequences, and several extensions of these methods to multivariate and high-dimensional applications build on the respective univariate versions \citep{cho15}. \citet{matteson14} propose a clustering algorithm based on a hierarchical divergence measure for the multivariate, multiple change point problem. This method imposes the strict assumption of independent and identically distributed data and is only asymptotically justified in the AMOC problem \citep{arlot19}. Projection of the data into a high-dimensional space by \citet{wang18}, the kernel trick for change point estimation by \citet{arlot19}, and Bayesian estimation like \citet{cappello23} also import the assumption of independent data. \citet{dehling15} and \citet{dehling22} investigate relaxing the i.i.d. requirement by examining the effect of short-term dependence on large sample behavior, and \citet{gerstenberger18} explores only the mean change problem for short-term dependence. Certain types of model based detection like \citet{kirch15} also relax the i.i.d. requirement, but rely on strong parametric assumptions and impose a rigid structure onto the process. Characterizing nonlinear temporal dependence in change point problems is a challenging, relevant problem with relatively little progress.

\subsection{Conceptors}
An ESN expands an input sequence to a nonlinear, high-dimensional feature space that captures distinct characteristics of the data via a fixed, sparsely connected \emph{reservoir} of randomly intialized weights. The universal approximation theorems of \citet{cybenko89} and \citet{hornik91} have been extended to recurrent neural networks (RNNs) and ESNs with a sufficient reservoir size, and features that are not linearly separable in the original input space can become separable in the reservoir \citep{lukosevicius12, grig18, hart20, gonon21}. The reservoir is composed of randomly populated matrices, $\mathbf{W^h}$, $\mathbf{W^i}$, and $\mathbf{b}$ in Equation \ref{eq:CRNNhidden}, and the 
output sequence is obtained from a simple linear mapping ($\mathbf{W^o}$) of the feature space, usually trained  with Ridge regression \citep{jaeger02, jaeger07}. 
\par
\citet{jaeger14} introduced the conceptor as a regularized identity mapping of reservoir states in a propagating ESN. A matrix $\mathbf{C}$ is added to the update equation that recognize, control, regenerate, and predict state patterns in a dynamic reservoir \citep{jaeger14, jaeger17}. The matrix is placed at the front of Equation \ref{eq:CRNNhidden2} to filter the network states in a manner associated with a specific pattern.
\begin{align}
	\text{reservoir states: }\mathbf{h}_t &= g\left(\mathbf{W^h}\tilde{\mathbf{h}}_{t-1} + \mathbf{W^i}\mathbf{y}_t + \mathbf{b}\right)\label{eq:CRNNhidden}\\
	\text{filtered states: }\tilde{\mathbf{h}}_t &= \mathbf{C} \mathbf{h}_t\label{eq:CRNNhidden2}\\
	\text{reservoir output: }\hat{\mathbf{y}}_t &= \mathbf{W^o} \tilde{\mathbf{h}}_t\label{eq:CRNNoutput}
\end{align}
The conceptor matrix $\mathbf{C}$, defined in Equation \ref{eq:ccalc}, is computed from column space of the reservoir states over some training window of data (of integer length $T_{\text{train}}$) that exhibits a pattern of interest. For dynamics associated with the pattern from the training window, the conceptor matrix should leave the state unchanged and act like the identity, or $\tilde{\mathbf{h}}_t \approx \mathbf{h}_t$ \citep{jaeger17}. For states propagating in a fashion atypical for the pattern from the training window, the conceptor matrix will act like the null matrix and suppress the reservoir state \citep{jaeger17}. 
\par
Over the training window, $\mathbf{H}_{\text{train}}\in\mathbb{R}^{T_\text{train} \times N}$ compiles the reservoir states as individual rows (where prior to computation $\mathbf{C} = \mathbf{I}$ in the training window). The conceptor is a positive semidefinite matrix ($\mathbf{0} \preceq \mathbf{C} \preceq \mathbf{I}$) that forms an ellipsoid within the unit sphere. The principal axes of the ellipsoid are the eigenvectors of the reservoir state second moment matrix $\mathbf{R} = T_{\text{train}}^{-1}\mathbf{\mathbf{H}_\text{train}}'\mathbf{\mathbf{H}_\text{train}}$ that have been scaled by their eigenvalues. 
\begin{align}
	\mathbf{C} \equiv\mathbf{R}\left(\mathbf{R} + \alpha^{-2}\mathbf{I}\right)^{-1}\label{eq:ccalc}
\end{align}
A regularization parameter $\alpha\in\left(0,\infty\right)$, known as the aperture, influences the degree of dampening in the reservoir activity \citep{jaeger14, jaeger17}. Large apertures introduce small penalties, and the conceptor will tend to the identity with minimal activity dampening; smaller $\alpha$ shrinks the conceptor to the zero matrix.
\par 
In many applications, conceptors are appealing for prediction and pattern regeneration tasks because it is possible to remove the input data from the update equation. Equation \ref{eq:CRNNhidden} is modified to $\mathbf{h}_t = g\left(\mathbf{W} \tilde{\mathbf{h}}_{t-1} + \mathbf{b}\right)$, where $\mathbf{W}$ is trained from the input and reservoir states, and the conceptor governs the dynamics of the system autonomously \citep{jaeger14, jaeger17}. 
\par 
In this work, conceptors are used to encapsulate the dynamics of a training window of data. The ESN featurization serves as an advantageous functional transformation to a high-dimensional domain where nonlinear relationships are ``linearized'' without imposing a rigid structure. A specified training window serves as an interpretable baseline state, the conceptor records information about the baseline dynamics, and the relationship between the reservoir states and filtered states is used to highlight differences in the mechanism of evolution. 
\par 
Nonlinear temporal dependence is difficult to characterize, especially in multivariate time series. Linear tools often oversimplify the dependence structure, and choosing a ``close enough'' functional form can be a challenging task. The CCP methodology allows for flexible change point detection in processes with elaborate dependence structures. 

\section{Methodology}\label{se:CCPmethods}
The hypothesis in Equation \ref{ch1:hypothesis} is tested under the condition where at most one change point $\tau$ is present in the data. The selected training window should be sufficiently long to capture the original dynamics of the time series and include representative values from the data prior to a suspected change. If the data exhibits a periodic or almost-periodic type structure, the training length should include at least one full cycle. It is assumed that any change takes place after this baseline window, and the initial distribution $\mathcal{F}_{1}$ produces a time series that is at least wide-sense cyclostationary. The second assumption ensures the training data covers a relevant range of the data and changes are not falsely identified from unrecognized network dynamics. The training window need not be at the beginning of the time series; this can be generalized to take any section as a baseline state, where the method looks forward and backward for a potential change.
\par
The proposed method involves three main steps. First, several ESNs are generated and the specified training window is used to select network parameters that satisfy an error tolerance $\varepsilon_\text{train}$, defined in terms of the normalized root mean square error (NRMSE) of the reservoir output, $\text{NRMSE} = \left[\left(\mathbf{y}_t -\hat{\mathbf{y}}_t\right)^2/\left(\frac{1}{2}\text{Var}\left(\mathbf{y}_t\right) + \frac{1}{2}\text{Var}\left(\hat{\mathbf{y}}_t\right)\right)\right]^{1/2}$. For each ESN, a conceptor matrix is computed that encodes information about the dynamics of the training window of a time series. Second, the relative differences between the reservoir and filtered states are examined with a bisection technique that proposes a change point. Last, a moving block bootstrap is used to estimate the strength of evidence for a proposed change. 

\subsection{ESN Featurization}
Along with an integer length for the baseline $T_\text{train}$, an integer length $T_\text{wash}$ is specified and used to washout the initial conditions of a generated ESN reservoir, where generally $T_\text{wash} < T_\text{train} \ll T$. A training error $0<\varepsilon_\text{train}\ll1$ influences hyperparameter settings; this error tolerance is the maximum allowable NRMSE between the conceptor governed ESN output and the original data.  Define $T_0=T_\text{wash} + T_\text{train}$, and the assumption of no distributional change applies to the time points where $t\leq T_0$, restricting the identification of any change point $\tau$ to the interval $[T_0 + 1, T-1]$.
\par
A reservoir size $N$ is selected where $N\gg d$. A series of $r = 1,\ldots,\mathscr{R}$ ESNs are initialized by generating the matrices from Equation \ref{eq:CRNNhidden}. The input and bias matrices, $\mathbf{W^i}_r\in\mathbb{R}^{N\times d}$ and $\mathbf{b}_r\in\mathbb{R}^{N}$, are dense with independent random Gaussian realizations, where the variance is determined from a parameter grid such that the ESN output best fits the data as measured by NRMSE. Each $\mathbf{W^h}_r\in\mathbb{R}^{N\times N}$ is a sparse matrix with independent random Gaussian nonzero entries, and is scaled to a constant spectral radius of 0.8 to ensure the propagating reservoir states remain stable and wash out (or ``forget'') the information from any initial conditions. The chosen spectral radius is not at the limit of stability as a balance between introducing adequate temporal dependence and ensuring the reservoir washout does not waste valuable data \citep{lukosevicius12, yildiz12}.
\par
The reservoir size and aperture of the network are concurrently determined as the first values large enough to produce a NRMSE below $\varepsilon_\text{train}$. Selection of $\varepsilon_\text{train}$ should take place prior to analysis and not with iterations of the method on multiple parameter values. Smaller values may be more sensitive to learning variations in the noise component of the data, while larger values will extract general behavior. In Section \ref{se:CCPsims}, the effect of varying $\varepsilon_\text{train}$ is demonstrated, but the parameter should take a default of about 4 to 8\% NRMSE unless compelling prior knowledge about the type of change sought suggests otherwise. Parameter selection procedures and the full ESN featurization algorithm are given in the supplement.
\par
For each ESN featurization, the associated conceptor matrix is computed from the series of training time points after network washout, $t\in[T_\text{wash} + 1,T_0]$. The network is propagated forward in time via Equations \ref{eq:CRNNhidden} and \ref{eq:CRNNhidden2} with $\mathbf{C} = \mathbf{I}$, and the reservoir states $\tilde{\mathbf{h}}_t$ ($\mathbf{h}_t$) collected. Conceptor matrices $\mathbf{C}_r$ are obtained via Equation \ref{eq:ccalc}; then, each ESN is run with $\mathbf{C}_r$ in place, like Equation \ref{eq:CRNNhidden2}, for all time points $t\geq T_\text{wash} + 1$.

\subsection{Change Point Proposal}\label{ss:CPP}
The angles between the filtered states $\tilde{\mathbf{h}}_{r,t}$ and the reservoir states $\mathbf{h}_{r,t}$ are examined at each time $t$ after the training period. The cosine similarities of these angles $s_{r,t}$, for each featurization $r$, form univariate sequences that quantify the proximity of the reservoir states to the space spanned by the corresponding conceptor matrix.
\begin{align}
	s_{r,t} &= \frac{\tilde{\mathbf{h}}_{r,t}' \mathbf{h}_{r,t}}{\left|\left|\tilde{\mathbf{h}}_{r,t} \right|\right|\left|\left|\mathbf{h}_{r,t}\right|\right|} = \frac{\mathbf{h}_{r,t}' \mathbf{C}_r' \mathbf{h}_{r,t}}{\left|\left|\mathbf{C}_r\mathbf{h}_{r,t} \right|\right|\left|\left|\mathbf{h}_{r,t}\right|\right|}\label{eq:similarity}
\end{align}
Values in each sequence $s_{r,t}$ are contained in the interval $[0,1]$ because, by definition, each $\mathbf{C}_r$ is positive semidefinite. A similarity value can be interpreted as a measure of the strength of relationship between the reservoir state at time $t$ and those in the period of training data. Values of zero imply the ESN is generating states orthogonal to the conceptor space, and those equal to one imply the ESN is generating states exactly in the conceptor space. 
\par
Other distance measurements may be used to quantify proximity to the conceptor space. Cosine similarity is a bounded, interpretable quantity that emphasizes angles further from zero at an increasing rate. The exact values of these similarities will vary and their absolute measure is not important; only relative differences are needed for comparison throughout the time series. Figure \ref{CCPexample1} and others in the supplement illustrate the relative nature of the similarity measure.
\par
Because the networks are randomly generated, there is variation in the computed conceptor matrices that extends to the cosine similarities across the ESNs. To extract a general behavior and reduce the dimension of the information, the average cosine similarity at each time point $t$ is considered, $S_t = \mathscr{R}^{-1}\sum_{r=1}^\mathscr{R} s_{r, t}$. The aggregate cosine similarity sequence acts as an ensemble of weak learners from each generated ESN.
\par
A proposed change point is selected using a modified CUSUM statistic that resembles the two-sample Kolmogorov-Smirnov distributional test \citep{smirnov33}. From the sequence $S_t$, empirical CDFs  $\hat{\mathcal{F}}_{(T_0+1):t}(s)$ and $\hat{\mathcal{F}}_{(t+1):T}(s)$ are constructed by dividing the sample at each potential change point in the series.
\begin{align}
	\hat{\mathcal{F}}_{(T_0+1):t}(s) &= \frac{1}{t - T_0}\sum_{i=T_0 + 1}^{t} \mathbf{1}\left\{S_i \leq s\right\}\label{eq:ecdf1}\\
	\hat{\mathcal{F}}_{(t+1):T}(s) &= \frac{1}{T - t}\sum_{i=t + 1}^{T} \mathbf{1}\left\{S_i \leq s\right\}\label{eq:ecdf2}
\end{align}
A scaled statistic, like that used in \citet{gombay95} and \citet{padilla19}, is computed at each observation, and the point of the maximum is identified as the most likely change point:
\begin{align}
	K &= \max_{t} \frac{\left(t-T_0\right)\left(T - t\right)}{q\left(t\right)\left(T-T_0\right)^{2}} \sup_s \left|\hat{\mathcal{F}}_{(T_0+1):t}(s) - \hat{\mathcal{F}}_{(t+1):T}(s)\right| \label{eq:statistic}\\[4pt]
	\hat{\tau} &= \arg \max_{t} \frac{\left(t-T_0\right)\left(T - t\right)}{q\left(t\right)\left(T-T_0\right)^{2}} \sup_s \left|\hat{\mathcal{F}}_{(T_0+1):t}(s) - \hat{\mathcal{F}}_{(t+1):T}(s)\right|\label{eq:changepoint}.
\end{align}
The coefficient term in Equation \ref{eq:statistic} ensures the statistic converges in distribution under stationarity as $T\rightarrow\infty$ \citep{csorgHo97b, gombay99, holmes13, kojadinovic21}. The $q(t)$ scaling function, with form shown in Equation \ref{eq:scaling}, increases the sensitivity of the method near the edges of the sequence \citep{csorgHo94a, csorgHo94, csorgHo97b, csorgHo97}. Further properties of the statistic under the null are discussed in Section \ref{se:CCPtheory}.
\begin{align}
	q(t) &= \max \left\{ \left(\frac{t-T_0}{T-T_0}\right)^\nu\left(1-\frac{t-T_0}{T-T_0}\right)^\nu, \;\kappa\right\} \label{eq:scaling}
\end{align}
The statistic given in Equation \ref{eq:statistic} is similar to that discussed in \citet{kojadinovic21}, where if $\nu = 1/2$ and $\kappa$ is a small constant near zero, the mean and variance of the series of statistics remain approximately constant in the limit. Values $\nu=1/2$ and $\kappa=0.01$ are chosen for reasons explained in Section \ref{se:CCPtheory}, and the full algorithmic process for proposing a change point is detailed in the supplement.

\subsection{Moving Block Bootstrap}
Estimating the null distribution of the statistic $K$ from Section \ref{ss:CPP} may be of more importance than the identification of a potential change point. By nature, many change point detection problems are non-verifiable in real world applications. Thus, reliable change point algorithms should be robust in their ability to detect both change and the lack of change in a dataset.
\par
The bootstrap method of \citet{efron79} provides a foundation for inference based on repeated sampling. Several variations of the bootstrap for change point methodology are reviewed in \citet{huvskova04}. The moving block bootstrap (MBB), developed by \citet{kunsch89} and \citet{liu92}, samples blocks of consecutive points to retain the dependence structure within each block. Extending the bootstrap to applications with dependent data structures, the MBB is able to asymptotically reproduce the underlying dependence structure \citep{lahiri03}. The block bootstrap is suggested by \citet{dehling02} for statistics of empirical processes, and \citet{synowiecki07} describes its application to non-stationary data with periodic or almost periodic behavior.
\par
Potentially overlapping blocks of length $L$ are selected and combined to form a bootstrapped time series of the original length $T$. When data are treated as i.i.d., as in \citet{matteson14}, the block length parameter reduces to a permutation of the time series with $L=1$. With increasing block length, the estimated null distribution will exhibit less variation, and null values will tend closer to the statistic $K$. Choice of block length is discussed widely in literature, and the cross-validation like technique of \citet{hall95} allows for data driven selection \citep{buhlmann99, lahiri03, politis04, lahiri07, patton09}. A large pilot block length is used to start the \citet{hall95} algorithm to ensure a long range dependence structure is considered, and the process is not iterated as convergence is not guaranteed \citep{lahiri03}. Adjusting the block length for the power and Type 1 error control trade-off should be considered if the researcher possesses some knowledge about the dependence structure of the data.
\par
With the assumption of no change in the washout and training windows, the conceptor matrices do not need to be recomputed: in each bootstrapped series the points $t\leq T_0$ are identical to the original data. All remaining points $t\geq T_0 + 1$ are equally likely to be chosen as the beginning of a bootstrap interval of length $L$. To ensure equal inclusion probability, the series is wrapped such that a block near the end (a time point within $L$ of $T$) will cycle back to the initial time considered, $T_0 + 1$.
\par
For each generated bootstrap time series $b = 1,\ldots,B$, the corresponding maximum statistic is computed as in Section \ref{ss:CPP}, simulating an approximate null distribution. A distribution quantile is estimated from the fraction of the $B$ bootstrapped statistics that exceed the statistic from Equation \ref{eq:statistic}, $p = B^{-1} \sum_{b=1}^B \mathbf{1}\left\{K_b > K \right\}$. The quantile estimate provides a notion of the strength of the evidence for a change point. The validity of applying the MBB to the AMOC problem is investigated through simulation and evaluation of Type 1 error control in Section \ref{se:CCPsims} for a variety of data generating processes. For a dataset with no change point, the proportion of false rejections in $\mathscr{S}$ simulations is expected to be approximately $q \mathscr{S}$, where $q$ is a predefined threshold of Type 1 error. The full algorithm for estimating a null distribution from the MBB is in the supplement.

\section{Theory}\label{se:CCPtheory}
The hypothesis in Equation \ref{ch1:hypothesis} is tested using the featurization and conceptor matrix as outlined in Section \ref{se:CCPmethods}. Under the null hypothesis, the cosine similarity values $S_t$ are expected to retain a consistent relative structure and fall close to one (\textit{i.e.}, remain close to the space of the conceptor matrix). Under the alternative, changes in the relationship between the cosine similarity sequence and the space spanned by the conceptor matrix are observed. These changes, initiated by a shift in the data, are not strictly away from the conceptor space; a reduction in variation may lead to reservoir states that lie closer to the conceptor space.
\par
For clarity, the time index is redefined to $T_0=0$ and $T$ as the number of data points after washout and training. The assumption of any change after washout and training restricts the domain to $t>0$. Suppose $S_t \sim \mathcal{F}_t(s)$ for all $s\in[0,1]$, where each $\mathcal{F}_t$ is a defined distribution function. The hypothesis is reformulated in terms of these distribution functions.
\begin{align}
	H_0&: \mathcal{F}_1(s) = \cdots =\mathcal{F}_T(s) \text{ for all $s\in[0,1]$}\nonumber\\
	H_A&: \text{$\exists$ $\tau\in\mathbb{Z}$, $1\leq \tau < T$, such that $\forall$ $s\in[0,1]$, }\mathcal{F}_1(s) = \cdots =\mathcal{F}_\tau(s)\text{ and }\nonumber\\
	&\hskip0.15in \mathcal{F}_{\tau+1}(s) = \cdots =\mathcal{F}_T(s), \text{ and } \mathcal{F}_{1}(s_0) \neq \mathcal{F}_{T}(s_0) \hskip0.1in\text{for some $s_0\in[0,1]$} \label{eq:hypothesis2}
\end{align}
\par
Define the corresponding empirical distribution functions at each point in the time series $t$, $\hat{\mathcal{F}}_{1:t}(s) = t^{-1}\sum_{i=1}^{t} \mathbf{1}\left\{ S_i \leq s\right\}$ and $\hat{\mathcal{F}}_{(t+1):T}(s) = \left(T-t\right)^{-1}\sum_{i=t + 1}^{T} \mathbf{1}\left\{ S_i \leq s\right\}$. By the Glivenko-Cantelli theorem, these empirical CDFs are consistent estimators of the true distribution functions and they uniformly converge in the limit under stationarity and ergodicity \citep{tucker59, yu93, dehling02}. Summarizing the data with the univariate sequence generated by the conceptor space $S_t$, rather than using the original multivariate time series, may also make investigation and verification of theoretical assumptions more accessible. 

\subsection{Limiting Distribution Under the Null Hypothesis}\label{ss:LimitDist}
Asymptotic behavior of empirical processes for i.i.d. sequences stems from \citet{kiefer72}, that proved almost sure convergence to a Gaussian process. In this work, the $\mathcal{S}$-mixing definition of \citet{berkes09} is used to obtain a similar result for the statistic in Equation \ref{eq:statistic}.
\par
Assume the sequence $S_t$ is stationary under the null hypothesis, satisfied by construction in Equation \ref{eq:hypothesis2}, and can be represented as a shift process of i.i.d. random variables $\varepsilon_t$, $S_t = f(\varepsilon_t, \varepsilon_{t-1},\ldots)$. Most stationary processes in practice admit a representation as a shift process, and this is causal due to the forward dynamics of the time series \citep{berkes09}. These assumptions apply only to the sequence indicating the relationship of a given state to the baseline space spanned by the conceptor matrix; they are not imposed on the original data. In a time window without a change point present, assume $S_t$ will arise from some common distribution. A process as $\mathcal{S}$-mixing if the two conditions in Definition \ref{def:smix} are satisfied. 
\begin{definition}
	A random process $S_t$ is $\mathcal{S}$-mixing if:
	\begin{enumerate}
		\item[(1)] For any $t\in\mathbb{Z}$ and $m\in\mathbb{N}$, one can find a random variable $S_{tm}$ such that \\$P(|S_t-S_{tm}|\geq \gamma_m) \leq \delta_m$ for some numerical sequences $\gamma_m\rightarrow 0$, $\delta_m\rightarrow 0$.
		\item[(2)] For any disjoint intervals $\mathcal{I}_1,\ldots,\mathcal{I}_r$ of integers and any positive integers \\$m_1,\ldots,m_r$, the vectors $\{S_{jm_1},j\in\mathcal{I}_1\},\ldots,\{S_{jm_r},j\in\mathcal{I}_r\}$ are independent \\provided the separation between $\mathcal{I}_1$ and $\mathcal{I}_r$ is greater than $m_1+m_r$.
	\end{enumerate}
	\label{def:smix}
\end{definition}
With mild assumptions on the function $f$, \citet{berkes09} easily show the shift process representation for a general class of nonlinear processes. Construction of the approximating sequence $S_{tm}$ is discussed via substitution, truncation, coupling, and smoothing techniques \citep{berkes09}. $\mathcal{S}$-mixing is not directly comparable to classical mixing conditions, like $\alpha$-, $\beta$-, or $\rho$-mixing. The classical mixing definitions lead to clean and precise theoretical results, but verifying the required conditions can be challenging and their scope of application is limited \citep{berkes09}. $\mathcal{S}$-mixing relaxes these requirements to the existence of an approximating sequence that satisfies the above properties. Within the targeted class of shift processes, verification of assumptions is almost immediate, and the resulting strong approximation is used to derive the limiting distribution \citep{berkes09}.
\par
Define the function $q:[0,1]\to (0,1)$ by $q(\delta) = \max\{\delta^{1/2}(1-\delta)^{1/2}, \kappa\}$ and some small $\kappa>0$ resembling Equation \ref{eq:scaling} with $\nu = 1/2$.
\begin{theorem}
	Let $S_t$ be a stationary sequence such that $\mathcal{F}(s) = P(S_1 \leq s)$ is Lipschitz continuous of order $C > 0$. Assume $S_t$ is $\mathcal{S}$-mixing and that condition (1) of Definition \ref{def:smix} holds with $\gamma_m=m^{-AC}$, $\delta_m = m^{-A}$ for some $A>4$. Under the null hypothesis for every $\kappa\in\left(0,\frac{1}{2}\right)$,
	\begin{align}\label{mainthmresult}
		\sqrt{T}\,\max_{1 \leq t < T} \frac{1}{q\left(\frac{t}{T} \right)}\left[\frac{t(T-t)}{T^2} \right] \sup_{s\in[0,1]}\left|\hat{\mathcal{F}}_{1:t}(s) - \hat{\mathcal{F}}_{(t+1):T}(s)\right| \xrightarrow{D} \sup_{\delta \in [0,1]}  \sup_{s\in[0,1]} \left| \mathcal{K}(s, \delta)\right| / q(\delta)
	\end{align}
	as $T\to\infty$, where $\mathcal{K}(s, \delta)$ is a Gaussian process with
	\begin{align}\label{mainthmdetails}
		&\mathbb{E}\left[\mathcal{K}(s,\delta)\right] = 0,\nonumber\\
		&\mathbb{E}\left[\mathcal{K}(s,\delta)\;\mathcal{K}(s',\delta')\right] = (\delta \wedge \delta')\; \Gamma(s,s'),\nonumber\\[8pt]
		\text{and} \hskip0.2in&\Gamma(s,s') = \sum_{-\infty < t < \infty} \mathbb{E}\left[S_1(s) S_t(s')\right],
	\end{align}and the limiting random variable is almost surely finite.
	\label{mainthm}
\end{theorem}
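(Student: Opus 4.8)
The plan is to rewrite the weighted two-sample Kolmogorov--Smirnov statistic as a functional of a tied-down sequential empirical process, apply an invariance principle for that process under $\mathcal{S}$-mixing, and pass to the limit by the continuous mapping theorem; the fact that $q(\cdot)\ge\kappa>0$ is what keeps the last two steps routine. I would first carry out the algebraic reduction. With $U_t(s)=\sum_{i=1}^{t}\mathbf 1\{S_i\le s\}$, a short computation gives, for every $1\le t<T$,
\begin{align*}
\frac{t(T-t)}{T^{2}}\bigl(\hat{\mathcal F}_{1:t}(s)-\hat{\mathcal F}_{(t+1):T}(s)\bigr)
&=\frac{1}{T}\Bigl(U_t(s)-\tfrac{t}{T}\,U_T(s)\Bigr)\\
&=\frac{1}{T}\bigl(U_t(s)-t\mathcal F(s)\bigr)-\frac{t}{T^{2}}\bigl(U_T(s)-T\mathcal F(s)\bigr).
\end{align*}
Writing $Z_T(s,\delta)=T^{-1/2}\sum_{i=1}^{\lfloor T\delta\rfloor}\bigl(\mathbf 1\{S_i\le s\}-\mathcal F(s)\bigr)$ for the centered sequential empirical process, the left-hand side of \eqref{mainthmresult} therefore equals $\max_{1\le t<T}\,q(t/T)^{-1}\sup_{s\in[0,1]}\bigl|Z_T(s,t/T)-(t/T)\,Z_T(s,1)\bigr|$. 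Because $q\ge\kappa$, the boundary indices $t\in\{1,T-1\}$ contribute only $O\bigl((\sqrt T\,\kappa)^{-1}\bigr)$, so this discrete maximum differs by $o_P(1)$ from $\sup_{\delta\in[0,1]}q(\delta)^{-1}\sup_{s}\bigl|Z_T(s,\delta)-\delta\,Z_T(s,1)\bigr|$.

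Next I would establish the underlying functional limit theorem. Under stationarity, the causal shift representation $S_t=f(\varepsilon_t,\varepsilon_{t-1},\dots)$, $\mathcal S$-mixing with $\gamma_m=m^{-AC}$, $\delta_m=m^{-A}$ and $A>4$, and the order-$C$ continuity of $\mathcal F$ posited in the hypothesis, the empirical-process results of \citet{berkes09} apply to the bounded family $\{\mathbf 1\{S_t\le s\}:s\in[0,1]\}$ and yield weak convergence (equivalently a strong Gaussian approximation) of $Z_T$ in $D([0,1]^2)$ to a centered Gaussian process $\mathcal K_0$ with covariance $(\delta\wedge\delta')\,\Gamma(s,s')$, where $\Gamma(s,s')=\sum_{t\in\mathbb Z}\mathbb E\bigl[S_1(s)\,S_t(s')\bigr]$ and $S_t(s):=\mathbf 1\{S_t\le s\}-\mathcal F(s)$. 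The series converges absolutely since $\mathcal S$-mixing at rate $A$ makes the covariances of these bounded increments decay polynomially, once the size-$\gamma_m$ perturbation is absorbed through the continuity of $\mathcal F$. Its tied-down version $\mathcal K(s,\delta):=\mathcal K_0(s,\delta)-\delta\,\mathcal K_0(s,1)$ is the centered Gaussian process entering \eqref{mainthmresult}--\eqref{mainthmdetails}. I expect this step --- specifically the joint tightness of $Z_T$ in $(s,\delta)$ --- to be the main obstacle: the oscillations in $\delta$ require a maximal inequality for partial sums of the weakly dependent array, those in $s$ are handled through the continuity modulus of $\mathcal F$, and the hypothesis $A>4$ is what makes the relevant fourth-moment (Billingsley-type) bounds summable; I would quote the empirical-process statement of \citet{berkes09} rather than rebuild it.

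Finally, the functional $\Phi(x)=\sup_{\delta\in[0,1]}\sup_{s\in[0,1]}\bigl|x(s,\delta)-\delta\,x(s,1)\bigr|/q(\delta)$ is continuous on $D([0,1]^2)$ in the uniform topology precisely because $q$ is bounded below by $\kappa$, so the continuous mapping theorem together with the $o_P(1)$ reduction identifies the limit of \eqref{mainthmresult} as $\Phi(\mathcal K_0)=\sup_{\delta,s}\bigl|\mathcal K(s,\delta)\bigr|/q(\delta)$. For almost-sure finiteness it is enough that $\Phi(\mathcal K_0)\le\kappa^{-1}\sup_{[0,1]^2}|\mathcal K|$ and that $\mathcal K$ has a.s.\ bounded (indeed uniformly continuous) sample paths; the latter holds because the covariance kernel of $\mathcal K$ is continuous on $[0,1]^2\times[0,1]^2$ --- jointly in $\delta$ from the $(\delta\wedge\delta')$ structure and in $s$ from the continuity of $\mathcal F$ --- so Dudley's entropy bound for Gaussian processes gives $\sup_{[0,1]^2}|\mathcal K|<\infty$ almost surely, which is the theorem's final assertion.
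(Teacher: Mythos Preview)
Your approach is correct and takes a genuinely different, more direct route than the paper. The paper follows the classical Cs\"org\H{o}--Horv\'ath framework: it first establishes an almost-sure strong approximation of $\sqrt{T}\,K_T(s,\delta)$ by a \emph{sequence} of Gaussian processes $\mathcal{K}_T$ (built separately on $[0,1/2]$ and $[1/2,1]$ from two applications of the Berkes et al.\ approximation to the forward and backward partial sums), then argues $\mathcal{K}_T\xrightarrow{D}\mathcal{K}$, and finally verifies the weight-function integral condition
\[
I_{0,1}(q,c)=\int_0^1 \frac{1}{\delta(1-\delta)}\exp\Bigl\{-\frac{cq^2(\delta)}{\delta(1-\delta)}\Bigr\}\,d\delta<\infty
\]
to control the boundary behaviour at $\delta\to 0,1$. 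You instead observe that since $q\ge\kappa>0$ the functional $x\mapsto\sup_{\delta,s}|x(s,\delta)-\delta\,x(s,1)|/q(\delta)$ is uniformly continuous on $D([0,1]^2)$, so weak convergence of $Z_T$ together with the continuous mapping theorem suffices; the integral condition, the two-piece construction, and the passage through an intermediate sequence $\mathcal{K}_T$ are all unnecessary. The paper's machinery is what one would genuinely need if $\kappa=0$ and $q(\delta)=\delta^\nu(1-\delta)^\nu$ with $\nu<1/2$, where your functional ceases to be continuous and the boundary has to be handled via $I_{0,1}$; for the theorem as stated your argument is the cleaner one, and your use of $q\ge\kappa$ to get almost-sure finiteness from $\kappa^{-1}\sup|\mathcal K|$ is likewise more economical than the paper's appeal to Kolmogorov's zero--one law and the integral condition.

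One point to flag: the tied-down process you construct, $\mathcal{K}(s,\delta)=\mathcal{K}_0(s,\delta)-\delta\,\mathcal{K}_0(s,1)$, has covariance $\bigl[(\delta\wedge\delta')-\delta\delta'\bigr]\Gamma(s,s')$, not $(\delta\wedge\delta')\Gamma(s,s')$ as displayed in \eqref{mainthmdetails}. Your bridge covariance is the correct limit for this statistic---the paper's own construction in its Lemma also produces a tied-down process, and the covariance quoted there appears to be a slip---so your argument is right, but be aware that it does not literally reproduce the covariance as written in the theorem statement.
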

The mathematical exposition and proof is an extension of the independent case found in \citet{csorgHo97b}, Theorem 2.6.1 and can be found in the supplement. Theorem \ref{mainthm} implies the statistic $K$ converges in probability to zero under the null hypothesis.
\par
Stationarity of the average cosine similarities depends on the training window of data and adherence to the AMOC problem. With a well specified, sufficiently long training window such that a relevant range of the time series is covered, the reservoir will emit dynamics close to the conceptor space and the assumption is likely satisfied. With multiple changes present in a dataset, the stationarity assumption may be violated. In practice, the data should be at least wide-sense cyclostationary, contain at most one change, and not exhibit some long run trend. 

\subsection{Consistent Change Point Estimation}\label{ss:ConsistentCP}
The behavior of the change point estimate $\hat{\tau}$ is examined under the general class of alternatives given in the hypothesis of Equation \ref{eq:hypothesis2}. Under construction of the alternative, the average cosine similarity sequence divides into two stationary ergodic pieces on either side of a true change point $\tau$ represented by $\mathcal{F}_{1}(s)$ and $\mathcal{F}_{T}(s)$. When satisfying modest conditions, $\hat{\tau}$ is a consistent estimator of $\tau$.
\begin{theorem}\label{consistent}
	Suppose the sequence $S_t$, $1,\ldots,T$ divides into two stationary ergodic pieces on either side of the change point $\tau$, and $\mathcal{F}_{1}(s_0) \neq \mathcal{F}_{T}(s_0)$ for some $s_0\in[0,1]$. Then for every $\kappa\in\left(0,\frac{1}{2}\right)$, the change point estimate
	\begin{align}
		\hat{\tau} &= \arg \max_{1\leq t < T} \frac{1}{q(\frac{t}{T})}\left[\frac{t(T-t)}{T^2}\right]\sup_{s\in[0,1]}\left|\hat{\mathcal{F}}_{1:t}(s) - \hat{\mathcal{F}}_{(t+1):T}(s)\right| 
	\end{align}
	converges in probability to the true value $\tau$ under the domain restriction
	\begin{align}
		\tau \in \left[\frac{T}{2}-\frac{T}{2}\sqrt{1-4\kappa^2}, \frac{T}{2}+\frac{T}{2}\sqrt{1-4\kappa^2}\right].
	\end{align}
\end{theorem}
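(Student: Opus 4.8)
The plan is to treat $\hat\tau$ as an M-estimator and establish consistency of the rescaled change-point fraction $\hat\tau/T$ for $\lambda:=\tau/T$ by the standard argmax route: identify a deterministic limiting objective, show it is uniquely maximized at $\lambda$, prove the random objective converges to it uniformly away from the endpoints, and rule out endpoint maximizers. Write $\delta=t/T$ and
\[
G_T(t) = \frac{1}{q(t/T)}\left[\frac{t(T-t)}{T^2}\right]\sup_{s\in[0,1]}\left|\hat{\mathcal{F}}_{1:t}(s)-\hat{\mathcal{F}}_{(t+1):T}(s)\right|,
\]
so that $\hat\tau=\arg\max_{1\le t<T}G_T(t)$, and note that by ``converges in probability to $\tau$'' we mean consistency in the usual rescaled-time sense, $\hat\tau/T\xrightarrow{P}\lambda$.

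First I would compute the pointwise limit of $G_T(\lfloor\delta T\rfloor)$. Fix $\delta\in(0,1)$ with $\delta<\lambda$. The initial segment $S_1,\dots,S_{\lfloor\delta T\rfloor}$ lies entirely in the first ergodic piece, so the Glivenko--Cantelli theorem for stationary ergodic sequences \citep{tucker59,yu93} gives $\sup_s|\hat{\mathcal{F}}_{1:\lfloor\delta T\rfloor}(s)-\mathcal{F}_1(s)|\to0$ a.s.; the tail segment splits into a block of $\tau-\lfloor\delta T\rfloor$ observations from $\mathcal{F}_1$ and $T-\tau$ from $\mathcal{F}_T$, and applying Glivenko--Cantelli to each block yields $\hat{\mathcal{F}}_{(\lfloor\delta T\rfloor+1):T}(s)\to\frac{\lambda-\delta}{1-\delta}\mathcal{F}_1(s)+\frac{1-\lambda}{1-\delta}\mathcal{F}_T(s)$ uniformly in $s$. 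Hence $\sup_s|\hat{\mathcal{F}}_{1:t}-\hat{\mathcal{F}}_{(t+1):T}|\to\frac{1-\lambda}{1-\delta}\Delta$ with $\Delta:=\sup_s|\mathcal{F}_1(s)-\mathcal{F}_T(s)|>0$ (nonzero by hypothesis), and $G_T(\lfloor\delta T\rfloor)\to G(\delta):=\Delta\,\delta(1-\lambda)/q(\delta)$. The mirror-image computation for $\delta>\lambda$ gives $G(\delta)=\Delta\,(1-\delta)\lambda/q(\delta)$, and the two expressions agree at $\delta=\lambda$. Next I would check that $G$ has a unique, well-separated maximum at $\delta=\lambda$: this reduces to showing $\delta\mapsto\delta/q(\delta)$ is strictly increasing on $(0,\lambda]$ and $\delta\mapsto(1-\delta)/q(\delta)$ strictly decreasing on $[\lambda,1)$, which follows directly from $q(\delta)=\max\{\sqrt{\delta(1-\delta)},\kappa\}$, since on the clamped portion $\delta/q(\delta)=\delta/\kappa$ and on the unclamped portion $\delta/q(\delta)=\sqrt{\delta/(1-\delta)}$, both strictly increasing and matching at the junction $\delta(1-\delta)=\kappa^2$, and the decreasing claim follows by the symmetry $\delta\leftrightarrow1-\delta$. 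The stated domain restriction is precisely $\lambda(1-\lambda)\ge\kappa^2$, i.e.\ $q(\lambda)=\sqrt{\lambda(1-\lambda)}$, so that the peak value is $G(\lambda)=\Delta\sqrt{\lambda(1-\lambda)}>0$ and the true change point sits in the unclamped regime where $q$ is the natural Kolmogorov--Smirnov weighting; this keeps the argument consistent with the hypotheses underlying Theorem \ref{mainthm} and avoids the degenerate clamped behavior near the endpoints.

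The main work, and the main obstacle, is upgrading pointwise convergence to (i) $\sup_{\varepsilon T\le t\le(1-\varepsilon)T}|G_T(t)-G(t/T)|\xrightarrow{P}0$ for every fixed $\varepsilon>0$, together with (ii) exclusion of endpoint maximizers. For (i) the delicate point is uniformity in the split location $t$: one writes $t\,\hat{\mathcal{F}}_{1:t}(s)=\sum_{i\le t}\mathbf{1}\{S_i\le s\}$ as a partial sum and combines the ergodic theorem on a finite $s$-grid with monotonicity in $s$ and a maximal inequality over $t$, treating the tail c.d.f.\ (which is non-stationary whenever $t<\tau$) by the same split-into-two-ergodic-blocks device so that the mixture weights $\frac{\tau-t}{T-t},\frac{T-\tau}{T-t}$ are controlled uniformly. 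This uniform-in-$t$ Glivenko--Cantelli statement across the mixture is the hard step. For (ii), on $[1,\varepsilon T]\cup[(1-\varepsilon)T,T-1]$ one bounds $G_T(t)\le\frac{1}{\kappa}\,\frac{t(T-t)}{T^2}\le\frac{\min(t,T-t)}{\kappa T}\le\varepsilon/\kappa$, which is below $G(\lambda)$ once $\varepsilon<\kappa\Delta\sqrt{\lambda(1-\lambda)}$, so $\hat\tau\in[\varepsilon T,(1-\varepsilon)T]$ with probability tending to one. Finally, combining (i)--(ii) with $G_T(\hat\tau)\ge G_T(\tau)\xrightarrow{P}G(\lambda)$ and the separation of the maximum of the continuous $G$ gives, for any $\eta>0$, $P(|\hat\tau/T-\lambda|\ge\eta)\le P(G(\lambda)-G(\hat\tau/T)\ge c_\eta)\to0$ with $c_\eta:=\inf_{|\delta-\lambda|\ge\eta}\{G(\lambda)-G(\delta)\}>0$, which is the asserted consistency $\hat\tau/T\xrightarrow{P}\tau/T$.
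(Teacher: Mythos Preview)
Your proposal is correct and follows the same high-level argmax/extremum-estimator route as the paper, but the execution of the two key technical steps differs in instructive ways.

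For the deterministic limit and its unique maximum, you and the paper agree: both compute the same piecewise function $G(\delta)=Q_0(\delta)$ and verify it peaks uniquely at $\lambda=\delta_0$. Your monotonicity argument for $\delta/q(\delta)$ and $(1-\delta)/q(\delta)$ is a clean alternative to the paper's direct inspection of \eqref{q0pieces}.

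The substantive divergence is in how uniform convergence of $\hat Q_T$ to $Q_0$ is obtained. The paper invokes Newey's (1994) extremum-estimator theorem on the \emph{compact} interval $\Delta=[\tfrac12-\tfrac12\sqrt{1-4\kappa^2},\,\tfrac12+\tfrac12\sqrt{1-4\kappa^2}]$ and establishes condition~(4) via Newey's (1991) pointwise-plus-stochastic-equicontinuity criterion: a H\"older-type bound $|\hat Q_T(\tilde\delta)-\hat Q_T(\delta)|\le \hat A_T+\hat B_T\|\tilde\delta-\delta\|^\kappa$ is proved by a six-case decomposition according to the ordering of $\tilde\delta,\delta,\delta_0$, with $\hat A_T$ built from Glivenko--Cantelli errors over the relevant sub-blocks and $\hat B_T=C\theta$ with $C=(2/\kappa)\sqrt{1-\kappa^2}+1$. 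You instead work on $[\varepsilon T,(1-\varepsilon)T]$ directly, writing the empirical CDFs as partial sums and exploiting that almost-sure convergence of ergodic averages automatically gives $\sup_{n\ge \varepsilon T}$-uniformity; the ``maximal inequality'' you allude to is really just this elementary consequence of a.s.\ convergence, and the mixture weights absorb the small-block contributions near $\delta\approx\lambda$. Both arguments are valid; the paper's equicontinuity route is more systematic and packages the result for reuse, while yours is lighter and avoids the six-case bookkeeping.

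Finally, you handle the boundary region $[1,\varepsilon T]\cup[(1-\varepsilon)T,T-1]$ by the crude bound $G_T(t)\le \varepsilon/\kappa$, showing the argmax over $1\le t<T$ cannot land there. The paper instead restricts the optimization to $\Delta$ from the outset (so its $\hat\delta$ is literally $\arg\max_{\delta\in\Delta}\hat Q_T(\delta)$), which makes Newey's compactness condition immediate but, strictly speaking, proves consistency of a slightly different estimator than the one in the theorem statement. Your endpoint-exclusion step therefore matches the stated $\hat\tau=\arg\max_{1\le t<T}$ more faithfully.
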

The proof in the supplement follows Theorem 2.1 from \citet{newey94} for consistency of extremeum estimators. Restricting a possible change point to the interval shown in Theorem \ref{consistent} does not shrink the domain in practice if the chosen $\kappa < \sqrt{\frac{1}{4} - \frac{1}{4}\left(1-\frac{2}{T}\right)^2}$.

\section{Simulation Study}\label{se:CCPsims}
Performance of the CCP method is demonstrated through simulations restricted to the AMOC problem. The CCP method is compared to the e-divisive (EDiv) method of \citet{matteson14} and the kernel change point (KCP) algorithm of \citet{arlot19}, both via the \verb|ecp| R package by \citet{james13}, along with the sparsified binary segmentation (SBS) methods of \citet{cho15}, via the \verb|sbs| R package by the same authors. Type 1 sparsified binary segmentation searches for changes in the center of the data, and Type 2 seeks other forms of distributional change \citep{cho15}.

\subsection{Simulation Settings}
Simulated time series fall into the broad classes of VAR, periodic, Gaussian, and white noise processes. All simulated data $\mathbf{y}_t\in\mathbb{R}^2$, $t=1,\ldots,T$, has length $T=1000$ with a potential change located from $\tau=181$ to $\tau=999$. Table \ref{CCPsimsettings} summarizes the settings used for each method in the study. CCP requires specification of a training length of data with an associated training error tolerance.
\begin{table}[htb]
	\centering
		\begin{tabular}{cc}
			\hline
			Method & Settings\\
			\hline
			Conceptor Change Point (CCP) & $\varepsilon_\text{train} = 2, 4, 8, 16$\\
			E-Divisive (EDiv) &  $q = 0.05$\\
			Kernel Change Point (KCP) & $\Pi = 1$, $C = 2$\\
			Sparsified Binary Segmentation (SBS) & $q = 0.05$, $\text{Type}=1,2$\\
			\hline
		\end{tabular}
		\caption{Parameter settings for methods in simulation study. $\varepsilon_\text{train}$ is the error tolerance in \% NRMSE, $q$ the significance threshold, $\Pi$ the maximum number of change points, $C$ the KCP penalty scaling.}
	\label{CCPsimsettings}
\end{table}
The washout length $T_\text{wash}=60$ and the training length $T_\text{train}=120$ are fixed to ensure a constant window of estimation $\hat{\tau}\in[181,999]$ for all compared methods. The error tolerance $\varepsilon_\text{train}$ varies from 2 to 16 percent of NRMSE. The EDiv, KCP, and SBS methods are restricted to the AMOC framework. For EDiv and SBS the estimate is the initial segmentation chosen by the algorithm. KCP accepts an input parameter to restrict to the AMOC problem. EDiv, SBS1, and SBS2 require a significance threshold for change point identification that is set to $q=0.05$. The same value is used in the CCP method to set an upper threshold on the bootstrap null distribution. KCP requires specification of a penalty parameter for change point identification; \citet{arlot19} outline a procedure for selection of this parameter, and the suggested penalty scaling is used.
\par
When a change in the data is present, the adjusted Rand index (ARI) of \citet{hubert85} compares the assignment of time points to the correct class, and the empirical CDF of the difference between the identified point and the true change point is computed. Given in Equation \ref{graphcdf}, with $\delta$ the fraction of the time series away from the true change point and $\mathscr{S}$ the total number of simulations for a selected setting, the empirical CDF shape in some neighborhood $\{\delta: 0\leq\delta\leq T^*\ll T\}$ compares performance of the methods.
\begin{align}
	\hat{\mathcal{H}}_\tau(\delta) = \frac{1}{\mathscr{S}}\sum_{i=1}^\mathscr{S} \mathbf{1}\left\{\frac{1}{T - T_\text{wash} - T_\text{train}}\left|\hat{\tau}_i-\tau_i\right| \leq \delta\right\}\label{graphcdf}
\end{align}
Better performing methods will quickly increase to 1, and those that fail to identify an existing change point are evaluated as if it was placed at the end of the series, $\hat{\tau} = 1000$. When no change is present, $\hat{q}$ is defined as the observed Type 1 error and compared with the defined threshold $q$. 
\par
Tables \ref{simdetails1} and \ref{simdetails2} detail the datasets examined in the simulation study. For each case, a no change point scenario is included where the initial data generating process held constant for the full time series. Each setting is indicated by a unique ID and repeated $300$ times to create over $13000$ simulated datasets.
\begin{table}[htb]
	\centering
		\begin{tabular}{ccccc}
			\hline
			ID & Simulated Data & & ID & Simulated Data \\\hline
			(1a); (2a) & $\rho = 0.5 \rightarrow 0.5$ & & (3a) & $\omega = 1 \rightarrow 0.5$\\
			(1b); (2b) & $\rho = 0.5 \rightarrow 0.8$ & & (3b) & $\omega = 1 \rightarrow 0.8$\\
			(1c); (2c) & $\rho = 0.8 \rightarrow 0.5$ & & (3c) & $\omega = 1 \rightarrow 1.2$\\
			(1d); (2d) & $\rho = 0.8 \rightarrow 0.8$ & & (3d) & $\omega = 1 \rightarrow 1.5$\\
			(1e); (2e) & $\rho = 0.5 \rightarrow \text{NC}$ & & (3e) & $\omega = 1 \rightarrow \text{NC}$\\
			(1f); (2f) & $\rho = 0.8 \rightarrow \text{NC}$ & & &  \\\hline
		\end{tabular}
		\caption{$\text{VAR}(1)+\frac{1}{2}\mathcal{N}_2\left(\mathbf{0}_2,\mathbf{I}_2\right)$, $\text{VAR}(2)+\frac{1}{2}\mathcal{N}_2\left(\mathbf{0}_2,\mathbf{I}_2\right)$ spectral radius $\rho$ change simulations, and periodic process frequency $\omega$ change simulations $\sin\left(\omega t \left\{{\scriptstyle+\omega\frac{\pi}{2}}\right\}\right)\mathbf{1}_2+\frac{1}{2}\mathcal{N}_2\left(\mathbf{0}_2,\mathbf{I}_2\right)$. All data $\mathbf{y}_t\in\mathbb{R}^2$, $t=1,\ldots,T$, has length $T=1000$ and the change point varies randomly $\tau\in[181,999]$ or no change (NC). VAR(1) simulations are indicated by ID(1), VAR(2) by ID(2), and periodic by ID(3).}
	\label{simdetails1}
\end{table}
\begin{table}[htb]
	\centering
		\begin{tabular}{ccccc}\hline
			ID & Simulated Data & & ID & Simulated Data \\\hline
			(4a) & $\theta = 0.5\rightarrow 0; \lambda = 0.5$ & & (5a) & $\mu = 0 \rightarrow 0.5$\\
			(4b) & $\theta = 0.5\rightarrow 1; \lambda = 0.5$ & & (5b) & $\mu = 0 \rightarrow 0.8$\\
			(4c) & $\theta = 1\rightarrow 0; \lambda = 0.5$ & & (5c) & $\mu = 0 \rightarrow 1$\\
			(4d) & $\theta = 1\rightarrow 0.5; \lambda = 0.5$ & & (5d) & $\sigma = 1 \rightarrow 0.5$\\
			(4e) & $\theta = 0.5; \lambda = 0.5 \rightarrow 0.2$ & & (5e) & $\sigma = 1 \rightarrow 0.8$\\
			(4f) & $\theta = 0.5; \lambda = 0.5 \rightarrow 0.8$ & & (5f) &  $\sigma = 1 \rightarrow 1.2$\\
			(4g) & $\theta = 0.5; \lambda = 0.5 \rightarrow 1$ & & (5g) &  $\sigma = 1 \rightarrow 1.5$\\
			(4h) & $\theta = 0.5; \lambda = 0.5 \rightarrow \text{NC}$ & & (5h) &  $\rho = 0 \rightarrow 0.8$\\
			(4i) & $\theta = 1; \lambda = 0.5 \rightarrow \text{NC}$ & &(5i) & $\mu, \rho = 0; \sigma = 1\rightarrow \text{NC}$ \\
			\hline
		\end{tabular}
		\caption{Ornstein-Uhlenbeck mean reverting $\theta$ and volatility $\lambda$ change simulations $\mathcal{OU}_2\left(\gamma \mathbf{I}_2,\lambda^2\mathbf{I}_2\right)$ and white noise $\mathcal{N}_2\left(\mathbf{0}_2 + \mu\mathbf{1}_2,\sigma^2\mathbf{I}_2 + \rho\mathbf{J}_2\right)$ mean $\mu$, variance $\sigma$, and covariance $\rho$ change simulations, where $\mathbf{J}_2$ refers to the anti-diagonal matrix of ones. All data $\mathbf{y}_t\in\mathbb{R}^2$, $t=1,\ldots,T$, has length $T=1000$ and the change point varies randomly $\tau\in[181,999]$ or no change (NC). Ornstein-Uhlenbeck simulations are indicated by ID(4) and white noise by ID(5).}
	\label{simdetails2}
\end{table}
\par
For VAR$(\gamma)$ processes, the coefficient matrix is randomly generated to have a fixed spectral radius $\rho$ (within a tolerance of 0.02). Change points from autoregressive processes with similar $\rho$ may be more difficult to identify as they can exhibit similar dynamics. All VAR$(\gamma)$ processes contain a white noise error term defined in Table \ref{simdetails1}. For periodic processes, the second dimension is shifted by of $\pi/2$ as noted by the braced parenthesis in Table \ref{simdetails1}, and all contain a white noise error term. The Ornstein-Uhlenbeck processes are defined by the stochastic differential equation $d\mathbf{x}_t = \theta \mathbf{x}_t dt + \lambda d\mathcal{W}_t$, where $\mathcal{W}_t$ denotes a two-dimensional Wiener process. The two-dimensional Ornstein-Uhlenbeck process is denoted $\mathcal{OU}_2(\Theta,\Lambda)$, where $\Theta$ is the $2\times 2$ mean-reverting matrix and $\Lambda$ is the $2\times2$ volatility matrix. Gaussian white noise processes with mean, variance, and covariance shifts are included in the simulation to compare CCP methodology to existing methods in benchmark scenarios. 

\subsection{Simulation Results}
Simulation results and figures for dependent processes are presented in this section. Tables of results, and tables and figures for white noise processes included for comparison to existing methods, are given in the supplement.
\par
For VAR($\gamma$) processes with a change point present, the CCP method outperforms existing methodology. This advantage increases with more lagged values in the dependence structure and when process transitions to a relatively large spectral radius. Figure \ref{VARplot} displays the graphical evaluation technique defined in Equation \ref{graphcdf}. 
\begin{figure}[htb]
	\centering
		\includegraphics[width = \textwidth]{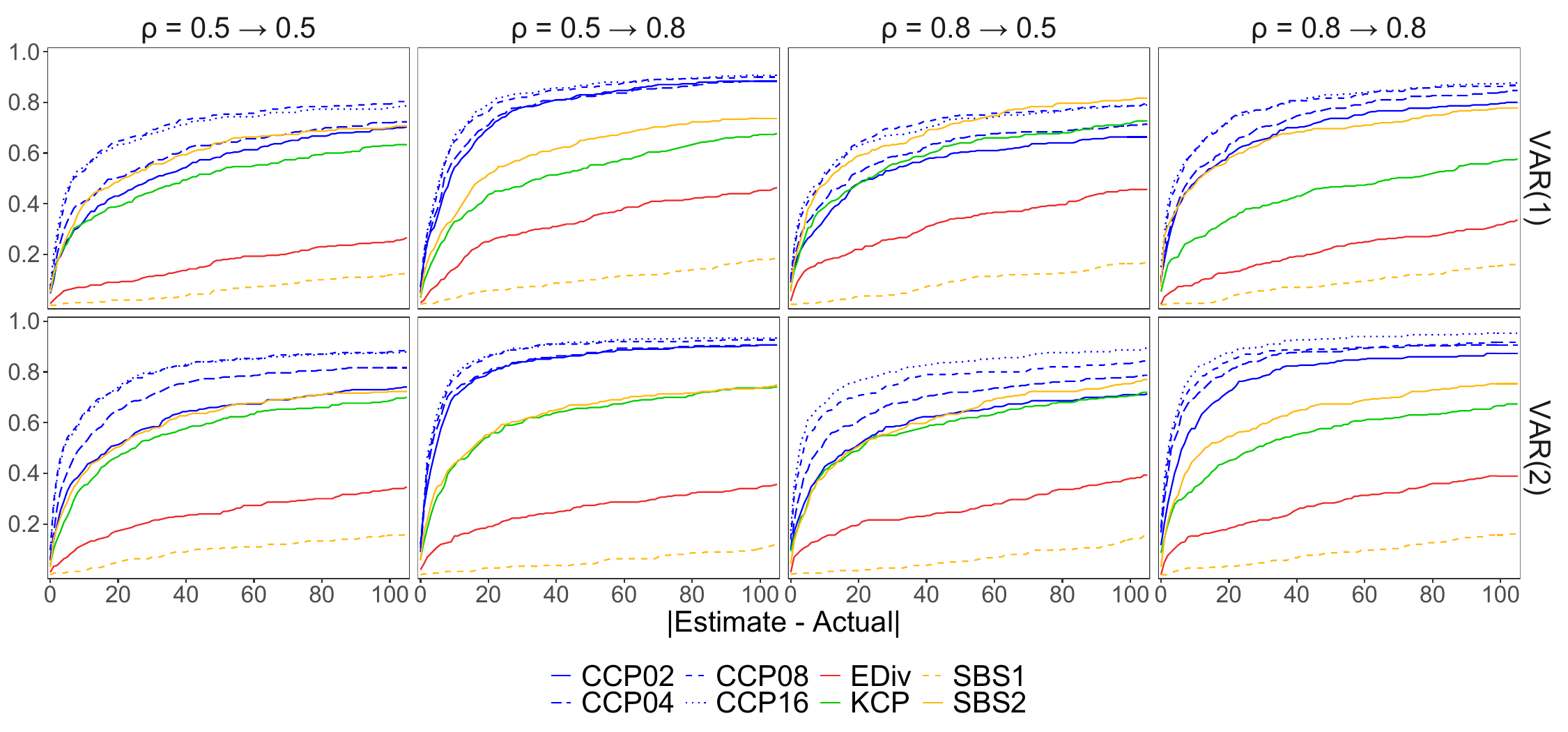}
		\caption{Fraction of identified points within error, VAR($\gamma$) simulation results with spectral radius change $\rho$, IDs (1a-d, 2a-d).}
	\label{VARplot}
\end{figure}
Among the conceptor methods, the higher error tolerances (CCP08 and CCP16) provide a more general fit to the data, where smaller error tolerances (CCP02) produce networks with larger reservoirs that learn the minor deviations of the data. In the presence of noise, these minor deviations occlude the true signal, potentially leading to inconsistencies in the learned behavior. Caution should be taken when specifying the error tolerance in noisy data; moderate tolerances may perform better than small tolerances as they fit networks with constrained internal dynamics, placing more emphasis on a general signal. This phenomenon can be likened to an overfitting problem. The KCP and SBS2 methods are the closest existing methods to the conceptor performance. One major drawback of the SBS method is that the type of change point sought must be specified; the algorithm run with a Type 1 designation produces very low ARI scores for all VAR($\gamma$) simulations.
\par
Figure \ref{PERplot} shows the results for simulations where the underlying data is generated by a periodic process. The CCP method is able to reliably detect changes in the frequency of periodic processes when the deviation from an initial state is sufficiently large; existing methodology struggles with this class of processes. While methods in the frequency domain easily detect this type of change, methods that exist in the temporal domain often fail with periodic data. The CCP method in the temporal domain is able to capture this type of nonlinear dependence, as well as those more readily described by the time axis.
\begin{figure}[htb]
	\centering
		\includegraphics[width = \textwidth]{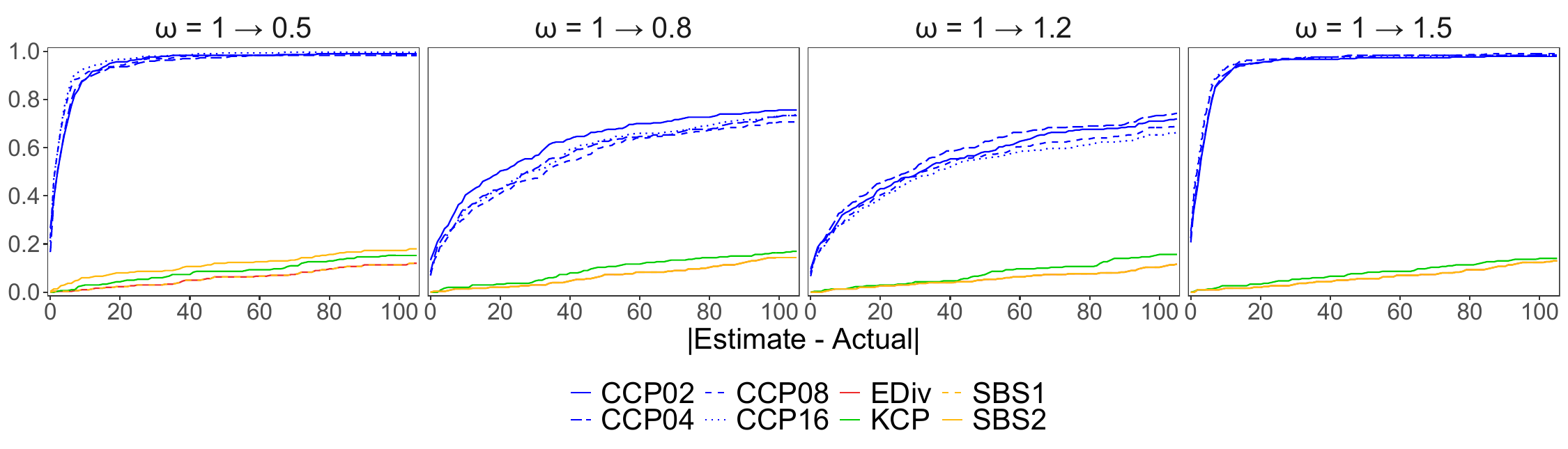}
		\caption{Fraction of identified points within error, periodic simulation results with frequency change $\omega$, IDs (3a-d).}
	\label{PERplot}
\end{figure}
\par
Figures \ref{OUplot1} and \ref{OUplot2} display results for Ornstein-Uhlenbeck simulations with a mean reverting or volatility parameter change. The CCP method surpasses most other methods for detection in mean reverting parameter changes except in some cases when the data shifts to a random walk process (or the parameter goes to zero). The difficulties for all methods can likely be attributed to a relatively low signal to noise ratio. For the volatility, the CCP method is competitive with existing methodology. This behavior is also seen in white noise variance simulations (see suplementary material) that play to the strengths of the comparator methods. With a high signal to noise ratio, the CCP method is also competitive in detecting mean changes in white noise processes.
\begin{figure}[htb]
		\centering
		\includegraphics[width = \textwidth]{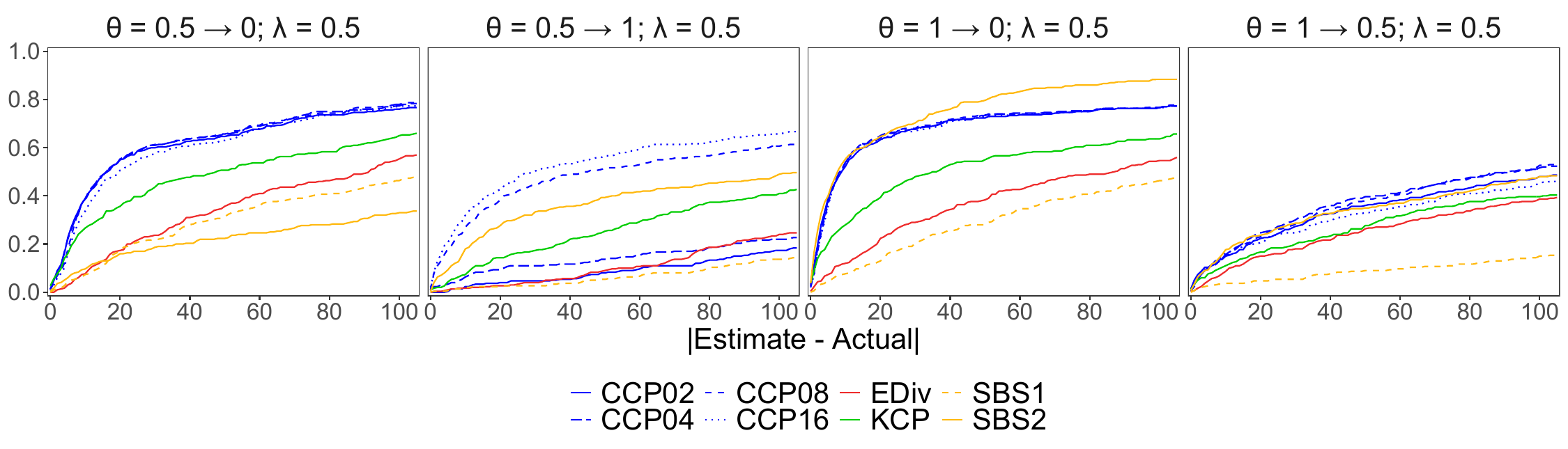}
		\caption{Fraction of identified points within error, Ornstein-Uhlenbeck simulation results with mean reverting change $\theta$, IDs (4a-d).}
		\label{OUplot1}
\end{figure}
\begin{figure}[htb]
		\centering
		\includegraphics[width = \textwidth]{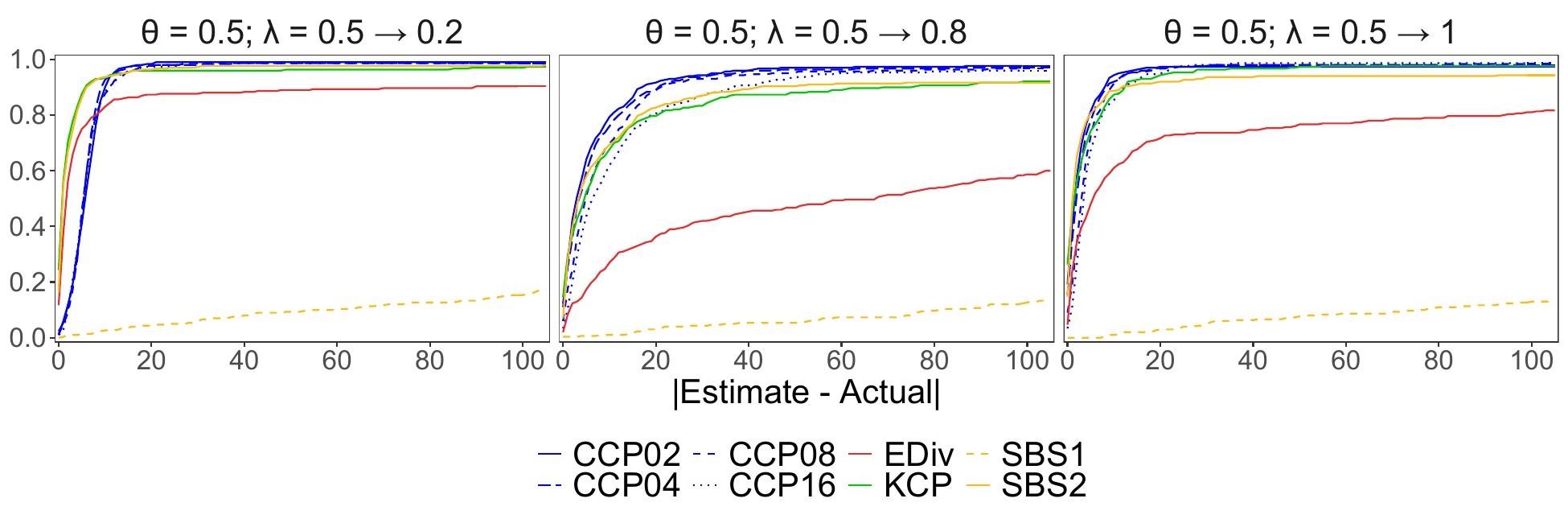}
		\caption{Fraction of identified points within error, Ornstein-Uhlenbeck simulation results with volatility change $\lambda$, IDs (4e-g).}
	\label{OUplot2}
\end{figure}
\par
To evaluate the validity of the moving block bootstrap in the CCP method, the Type 1 error of each method is observed when no change takes place in the time series. Control for false discovery of change points is as important as the correct identification of a change. Figure \ref{NCPfig} shows the observed probability of erroneous detection for each method. SBS methods provide conservative error control, KCP methodology almost always flags a change point, and EDiv does not hold to a desired level for data that is not Gaussian white noise. SBS methods do not return a quantile estimate, but only a binary ``present'' or ``not present'' flag; to estimate coverage at the points indicated, the method was run with different values of the threshold $q$. CCP methodology tracks along the uniform cdf with only slight undercoverage in periodic data and VAR data with large spectral radii.
\begin{figure}[htb]
	\centering
		\includegraphics[width = \textwidth]{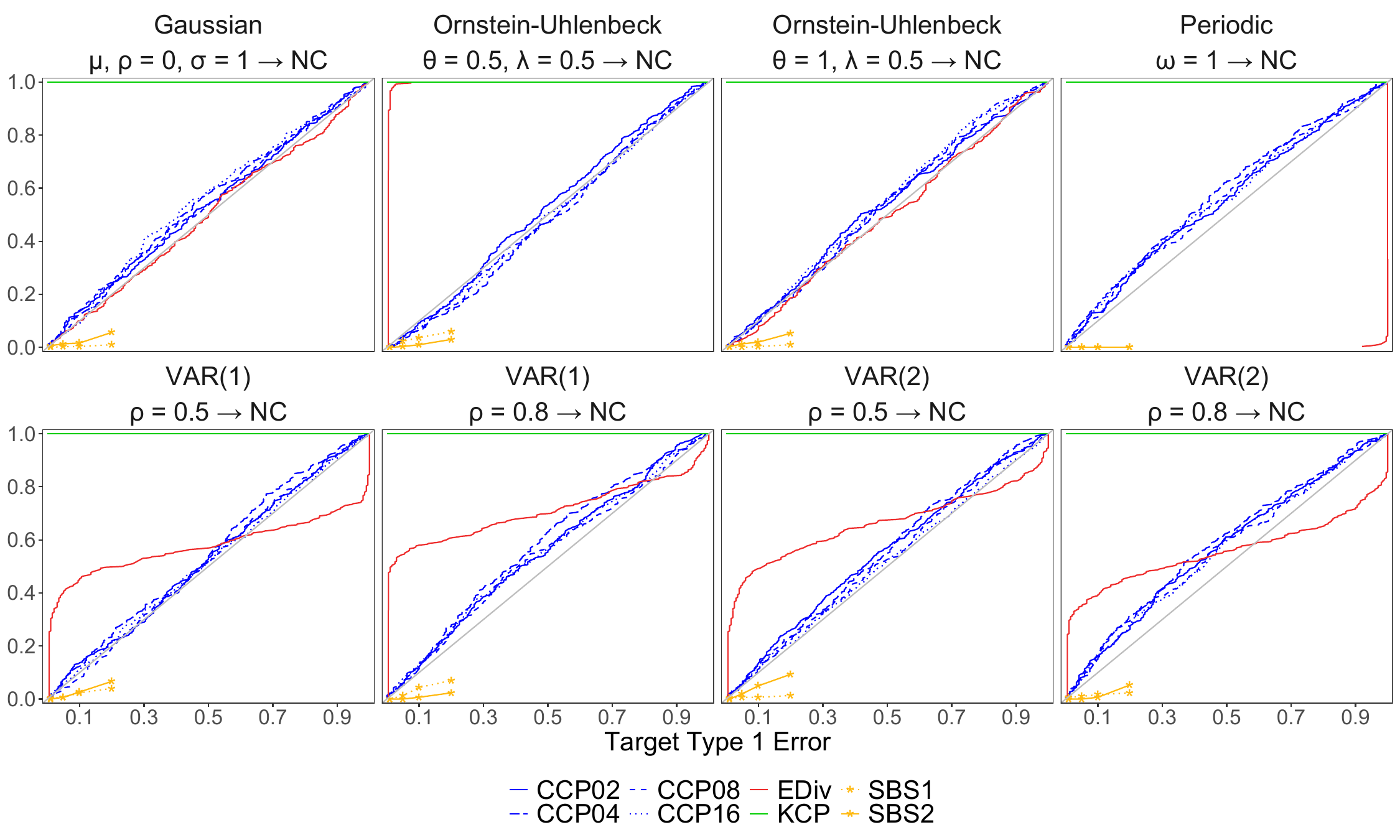}
		\caption{Type 1 error control with uniform cdf included for reference, IDs (1e-f, 2e-f, 3e, 4h-i, 5j).}
	\label{NCPfig}
\end{figure}

\section{Application Study}
\label{se:CCPapplication}
The CCP method is applied to data from \citet{varela19}. The authors record local field potential (LFP) up to 600Hz in the midline thalamus (THAL), medial prefrontal cortex (PFC), and the CA1 region of the hippocampus (HC) in rats experiencing bouts of non-REM sleep and wakefulness while they remained in a quiet, square-shaped enclosure \citep{varela19, varela19b}. The data, obtained from the Collaborative Research in Computational Neuroscience data sharing website, also includes determinations of sleep state (awake or non-REM sleep), as well as spiking, spindle, and sharp-wave ripple information over the course of the experiment \citep{varela19}. Session 1 is selected (prior to exploration of a radial maze), and the data is filtered with a finite impulse response filter to focus on the delta band (1-4Hz), characterizing slow wave sleep. Finally, the data is downsampled to a frequency of 4Hz.
\par
Three periods of transition identified by \citet{varela19} are isolated; each spans 100 seconds: sleep to wake (650 to 750s, change point at 740s), wake to sleep (740 to 840s, change point at 800s), and wake to sleep (1080 to 1180s, change point at 1150s). Relatively short windows are selected to satisfy the AMOC assumption; shorter time periods focus the methods to detecting the sleep state transition rather than other dynamic neural process changes almost certainly present in the data. The CCP, EDiv, SBS2, and KCP methods are evaluated on their ability to locate the change points in each transition period. 
\par
Approximately 10 seconds are reserved for reservoir washout, and 30 seconds are used for conceptor training with the CCP method. Change points are identified in the remaining one minute for all methods. Settings for methods are given in Table \ref{CCPsimsettings}, and the CCP error tolerance is kept at $\varepsilon_\text{train}=4\%$ NRMSE. Figure \ref{CCPexample} presents the results from applying the methods to the LFP data. Methods that fail to identify a change point display as a vertical line at the far right edge of the figure. 
\begin{figure}[htb]
		\centering
		\includegraphics[width = \textwidth]{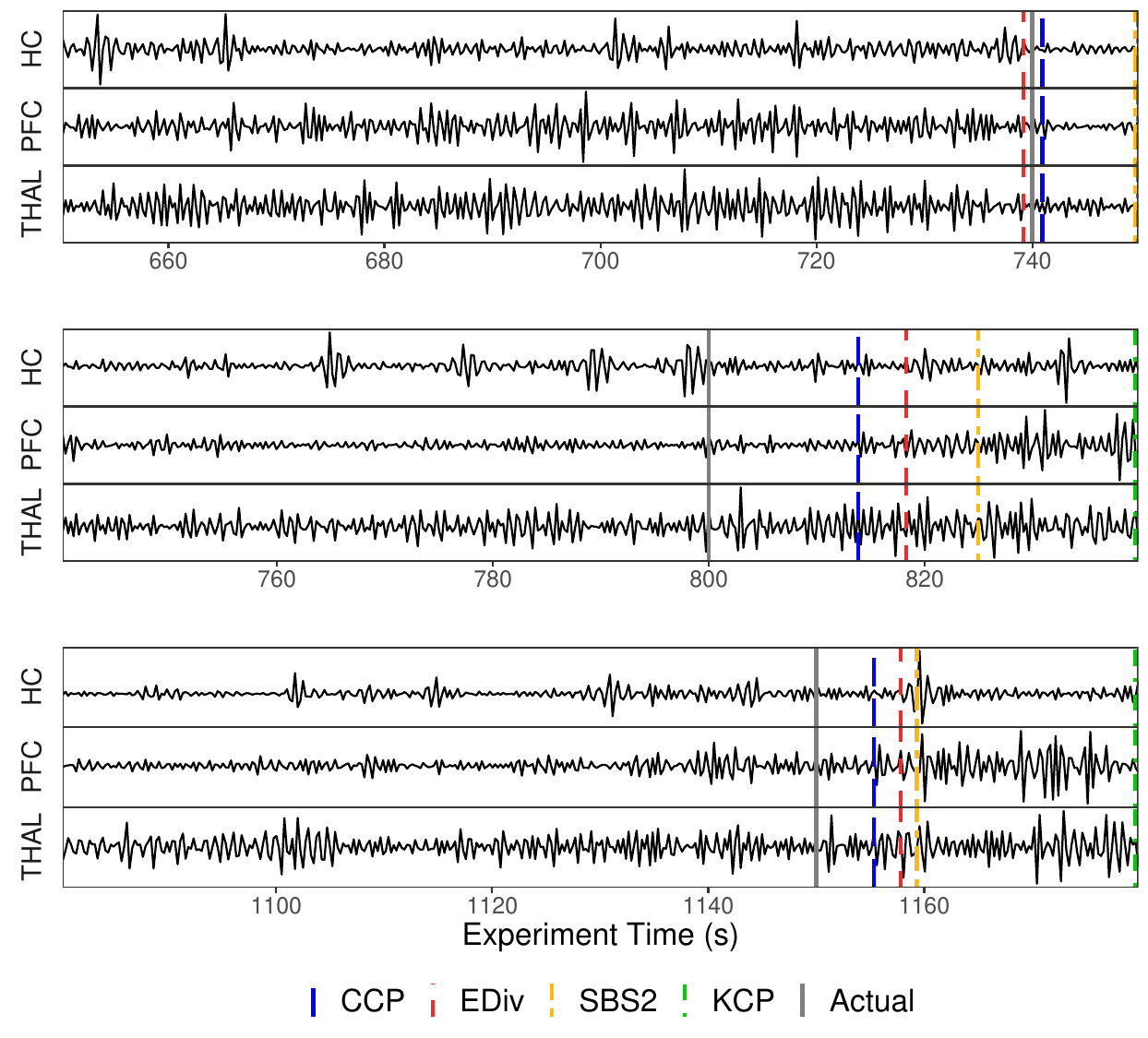}
		\caption{Estimated change points in LFP examples with actual points identified in \citet{varela19}. \textit{Top}: Sleep to Wake, \textit{Middle \& Bottom}: Wake to Sleep. }
	\label{CCPexample}
\end{figure}
\par
Figure \ref{CCPexample1} displays the internal dynamics of the conceptor methodology applied to the first (top) time series segment in Figure \ref{CCPexample}. Similar visuals of the second and third segments are shown in the supplement. 
\begin{figure}[htb]
		\centering
		\includegraphics[width = \textwidth]{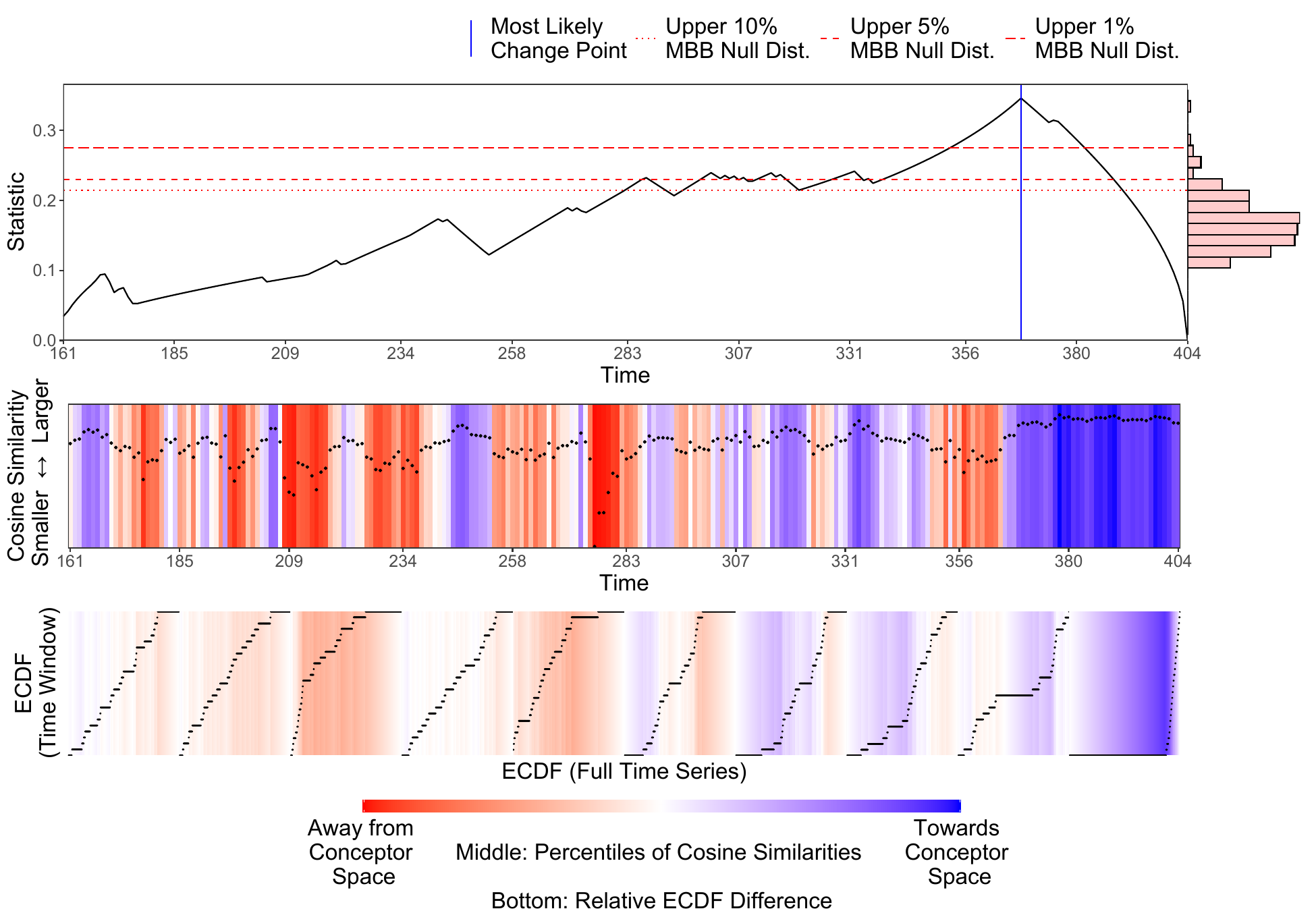}
		\caption{CCP method visualization of Figure \ref{CCPexample} (\textit{top}). Proposed Change Point: 368 (740.9s), Statistic = 0.346, MBB quantile = 0. \textit{Top}: Identification of most likely change point from CUSUM-like statistics. Null bootstrap distribution included on the right vertical axis with estimated quantiles. \textit{Middle}: Cosine similarities between conceptor and reservoir space over the time series. Shading represents percentiles of cosine similarities over the full time series. \textit{Bottom}: Compares segment specific cosine similarity empirical CDFs to the full time series. Shading represents a relative difference of empirical CDFs.}
	\label{CCPexample1}
\end{figure}
The top plot of Figure \ref{CCPexample1} displays the series of CUSUM-like statistics, with an estimated bootstrap null distribution on the right vertical axis and quantiles as horizontal lines on the plot. The middle plot displays $S_t$ with a relative vertical axis, as only comparative differences through the time series are sought. The bottom plot gives empirical CDFs of $S_t$ over segmented windows of time; the plotted points display the difference between the empirical CDF of the specific window and the overall empirical CDF of the full time series. Shading in the middle and bottom plots represents the internal reservoir dynamics and their relationship to the conceptor space; blue refers to dynamics that are behaving similarly to the original conceptor space, and red indicates further away. The scale of color shading in the middle plot is tied to percentiles the sequence $S_t$, and in the bottom plot is a relative difference between empirical CDFs. Change points will be identified as a peak in the top plot and a color transition in the middle and bottom plots. Excessive undulation, a secondary peak, or multiple shifts may suggest violation of the AMOC assumption or a slow transition between states.

\section{Discussion}
The CCP method provides a model agnostic framework for addressing nonlinear temporal dependence in change point identification problems. This relaxes the common i.i.d. assumption of most existing methodology, and allows for flexible definition of a baseline state without the rigidity of an imposed structure. The method also alleviates the problem of specifying a functional nonlinear form, which can be challenging. 
\par
The ESN learns the characteristic dynamics of a training window, and the deviation from the conceptor space is examined with a CUSUM-like statistic that consistently estimates the true change point under mild assumptions. The method is able to flag important locations for future scrutiny and provide guidance on the strength of evidence for a change point via the moving block bootstrap. CCP outperforms existing methods in temporally dependent and periodic processes, and is even competitive in i.i.d. processes with a change in variation or a high signal to noise ratio. In practice, the training window should be sufficiently long to capture representative variation of the original time series, and $\varepsilon_\text{train}$ left at a default value unless there is prior knowledge of the type of change sought. Assumptions include a baseline period of stable data generation where the data is at least wide-sense cyclostationary, and stationarity of the obtained similarity sequence $S_t$ on either side of at most one potential change point . Violation of the cosine similarity stationarity assumption will affect theoretical results, but does not diminish use of the method for investigative study of a dataset. Implementation of the method may require isolation of a time segment of interest, like the application in Section \ref{se:CCPapplication}, so that the AMOC assumption is met. These segments must be identified by the researcher with prior knowledge of their data. This method provides straightforward extensions to the multiple change point problem, and to online, sequential detection provided changes are sparse and sufficiently spaced. Future work can improve the featurization process so that information pertaining to individual series in the data is preserved, making qualitative conclusions about change points more accessible.

\bibliography{CCP_arXiv}
	
	\FloatBarrier
	\newpage
	\FloatBarrier
	\begin{center}
		{\large\bf SUPPLEMENTARY MATERIAL}\\
		{\normalsize \bf Change Point Detection with Conceptors}
	\end{center}
	\begin{description}
		\item[Additional Tables \& Figures:] Tables and figures of simulation and application results not presented in the main paper.
		\item[Algorithms \& Pseudocode:] Presentation of algorithms and procedures mentioned in Section \ref{se:CCPmethods} in the form of pseudocode. See main text for explanation.
		\item[Proofs:] Proofs to Theorems presented in Section \ref{se:CCPtheory} of the paper.
 		\item[Code \& Supporting Material:] Files (.RData) used to assess performance of change point methods, code (.R) used to generate results and figures, and data (.mat and .csv) and code (.R) used to generate output in Section \ref{se:CCPapplication} can be found at $$\verb|github.com/noahgade/ChangePointDetectionWithConceptors|.$$
	\end{description}
	\FloatBarrier
	\newpage
	\FloatBarrier
	\section*{Additional Tables \& Figures}
	\subsection*{Tables}
	\begin{table}[htb]
		\centering
			\begin{tabular}{ccccccccc}
				\hline
				& CCP02 & CCP04 & CCP08 & CCP16 & EDiv & KCP & SBS1 & SBS2 \\ 
				\hline
				$\rho = 0.5 \rightarrow 0.5$ & 0.576 & 0.620 & \bf 0.716 & 0.700 & 0.192 & 0.556 & 0.011 & 0.584 \\ 
				$\rho = 0.5 \rightarrow 0.8$ & 0.755 & 0.751 & 0.784 & \bf 0.795 & 0.385 & 0.600 & 0.029 & 0.606 \\ 
				$\rho = 0.8 \rightarrow 0.5$ & 0.549 & 0.609 & \bf 0.698 & 0.682 & 0.393 & 0.668 & 0.044 & 0.679 \\ 
				$\rho = 0.8 \rightarrow 0.8$ & 0.685 & 0.719 & 0.766 & \bf 0.775 & 0.288 & 0.523 & 0.032 & 0.656 \\ 
				\hline
			\end{tabular}
			\caption{Mean ARI of VAR(1) spectral radius $\rho$ changes, IDs (1a-d).}
		\label{VAR1sims}
	\end{table}
	
	\begin{table}[htb]
			\centering
			\begin{tabular}{ccccccccc}
				\hline
				& CCP02 & CCP04 & CCP08 & CCP16 & EDiv & KCP & SBS1 & SBS2 \\ 
				\hline
				$\rho = 0.5 \rightarrow 0.5$ & 0.595 & 0.694 & 0.774 & \bf 0.769 & 0.239 & 0.607 & 0.012 & 0.575 \\ 
				$\rho = 0.5 \rightarrow 0.8$ & 0.837 & 0.848 & 0.873 & \bf 0.883 & 0.293 & 0.677 & 0.012 & 0.654 \\ 
				$\rho = 0.8 \rightarrow 0.5$ & 0.610 & 0.697 & 0.769 & \bf 0.816 & 0.302 & 0.655 & 0.016 & 0.622 \\ 
				$\rho = 0.8 \rightarrow 0.8$ & 0.776 & 0.822 & 0.839 &\bf  0.880 & 0.310 & 0.606 & 0.027 & 0.621 \\ 
				\hline
			\end{tabular}
			\caption{Mean ARI of VAR(2) spectral radius $\rho$ changes, IDs (2a-d).}
		\label{VAR2sims}
	\end{table}
	
	\begin{table}[htb]
			\centering
			\begin{tabular}{ccccccccc}
				\hline
				& CCP02 & CCP04 & CCP08 & CCP16 & EDiv & KCP & SBS1 & SBS2 \\ 
				\hline
				$\omega = 1 \rightarrow 0.5$ & 0.943 & 0.936 & 0.953 & \bf 0.961 & 0.000 & 0.057 & 0.000 & 0.057 \\ 
				$\omega = 1 \rightarrow 0.8$ & \bf 0.649 & 0.597 & 0.559 & 0.586 & 0.000 & 0.056 & 0.000 & 0.000 \\ 
				$\omega = 1 \rightarrow 1.2$ & 0.587 & \bf 0.605 & 0.538 & 0.532 & 0.000 & 0.056 & 0.000 & 0.000 \\ 
				$\omega = 1 \rightarrow 1.5$ & 0.932 & 0.935 & 0.940 & \bf 0.941 & 0.000 & 0.055 & 0.000 & 0.000 \\ 
				\hline
			\end{tabular}
			\caption{Mean ARI of periodic frequency $\omega$ changes, IDs (3a-d).}
		\label{PERsims}
	\end{table}
	
	\begin{table}[htb]
			\centering
			\begin{tabular}{ccccccccc}
				\hline
				& CCP02 & CCP04 & CCP08 & CCP16 & EDiv & KCP & SBS1 & SBS2 \\ 
				\hline
				$\theta = 0.5 \rightarrow 0$ & 0.709 & \bf 0.718 & 0.716 & 0.701 & 0.516 & 0.633 & 0.419 & 0.199 \\ 
				$\theta = 0.5 \rightarrow 1$ & 0.056 & 0.124 & 0.479 & \bf 0.538 & 0.241 & 0.359 & 0.018 & 0.368 \\ 
				$\theta = 1 \rightarrow 0$ & 0.738 & \bf 0.743 & 0.738 & 0.736 & 0.523 & 0.633 & 0.406 & 0.720 \\ 
				$\theta = 1 \rightarrow 0.5$ & 0.328 & \bf 0.367 & 0.365 & 0.311 & 0.304 & 0.348 & 0.020 & 0.313 \\ 
				\hline
			\end{tabular}
			\caption{Mean ARI of Ornstein-Uhlenbeck mean reverting $\theta$ changes, IDs (4a-d).}
			\label{OUsims1}
\end{table}
\begin{table}[htb]
	\centering
			\begin{tabular}{ccccccccc}
				\hline
				& CCP02 & CCP04 & CCP08 & CCP16 & EDiv & KCP & SBS1 & SBS2 \\ 
				\hline
				$\lambda = 0.5 \rightarrow 0.2$ & 0.922 & 0.924 & 0.924 & 0.926 & 0.874 & \bf 0.949 & 0.053 & 0.927 \\ 
				$\lambda = 0.5 \rightarrow 0.8$ & \bf 0.909 & 0.896 & 0.882 & 0.870 & 0.558 & 0.861 & 0.005 & 0.836 \\ 
				$\lambda = 0.5 \rightarrow 1$ & 0.936 & 0.935 & 0.934 & 0.931 & 0.767 & \bf 0.942 & 0.016 & 0.886 \\
				\hline
			\end{tabular}
			\caption{Mean ARI of Ornstein-Uhlenbeck volatility $\lambda$ changes, IDs (4e-g).}
		\label{OUsims2}
	\end{table}
	
	\begin{table}[htb]
			\centering
			\begin{tabular}{ccccccccc}
				\hline
				& CCP02 & CCP04 & CCP08 & CCP16 & EDiv & KCP & SBS1 & SBS2 \\ 
				\hline
				$\mu = 0 \rightarrow 0.5$ & 0.478 & 0.532 & 0.620 & 0.638 & 0.868 & \bf 0.904 & 0.802 & 0.000 \\ 
				$\mu = 0 \rightarrow 0.8$ & 0.840 & 0.846 & 0.850 & 0.845 & 0.922 & \bf 0.962 & 0.899 & 0.002 \\ 
				$\mu = 0 \rightarrow 1$ & 0.899 & 0.894 & 0.897 & 0.898 & 0.940 & \bf 0.974 & 0.898 & 0.000 \\ 
				\hline
			\end{tabular}
			\caption{Mean ARI of white noise mean $\mu$ changes, IDs (5a-c).}
			\label{WNsims1}
\end{table}
		\begin{table}[htb]
			\centering
			\begin{tabular}{ccccccccc}
				\hline
				& CCP02 & CCP04 & CCP08 & CCP16 & EDiv & KCP & SBS1 & SBS2 \\ 
				\hline
				$\sigma = 1 \rightarrow 0.5$ & 0.921 & 0.916 & 0.905 & 0.905 & 0.883 & \bf 0.964 & 0.011 & 0.866 \\ 
				$\sigma = 1 \rightarrow 0.8$ & \bf 0.720 & 0.718 & 0.578 & 0.527 & 0.039 & 0.716 & 0.007 & 0.345 \\ 
				$\sigma = 1 \rightarrow 1.2$ & \bf 0.631 & 0.604 & 0.503 & 0.409 & 0.032 & 0.599 & 0.001 & 0.214 \\ 
				$\sigma = 1 \rightarrow 1.5$ & 0.917 & 0.903 & 0.891 & 0.872 & 0.690 & \bf 0.920 & 0.005 & 0.796 \\ 
				$\rho = 0 \rightarrow 0.8$ & 0.034 & 0.040 & 0.082 & 0.087 & 0.299 & \bf 0.916 & 0.000 & 0.857 \\ 
				\hline
			\end{tabular}
			\caption{Mean ARI of white noise variance $\sigma$ and covariance $\rho$ changes, IDs (5d-h).}
		\label{WNsims2}
	\end{table}
	
	\begin{table}[htb]
			\centering
			\begin{tabular}{ccccccccccc}
				\hline
				ID & Class & CCP02 & CCP04 & CCP08 & CCP16 & EDiv & KCP & SBS1 & SBS2 \\ 
				\hline
				(1e) & VAR(1) & 0.07 & 0.07 & 0.04 & 0.06 & 0.41 & 1.00 & 0.01 & 0.01 \\ 
				(1f) & VAR(1) & 0.06 & 0.07 & 0.05 & 0.08 & 0.54 & 1.00 & 0.01 & 0.00 \\ 
				(2e) & VAR(2) & 0.06 & 0.06 & 0.07 & 0.06 & 0.44 & 1.00 & 0.01 & 0.02 \\ 
				(2f) & VAR(2) & 0.09 & 0.08 & 0.08 & 0.08 & 0.35 & 1.00 & 0.01 & 0.00 \\ 
				(3e) & Periodic & 0.07 & 0.09 & 0.06 & 0.09 & 0.00 & 1.00 & 0.00 & 0.00 \\ 
				(4h) & Orn.-Uhl. & 0.04 & 0.04 & 0.03 & 0.04 & 0.99 & 1.00 & 0.02 & 0.00 \\ 
				(4i) & Orn.-Uhl. & 0.05 & 0.05 & 0.04 & 0.05 & 0.03 & 1.00 & 0.00 & 0.01 \\ 
				(5i) & Wh. Noise & 0.06 & 0.08 & 0.05 & 0.05 & 0.05 & 1.00 & 0.00 & 0.01 \\ 
				\hline
			\end{tabular}
			\caption{Observed Type 1 error for $q=0.05$ with no change point.}
		\label{NCPsims}
	\end{table}
	
	\FloatBarrier
	\subsection*{Figures}
	\begin{figure}[htb]
			\centering
			\includegraphics[width = \textwidth]{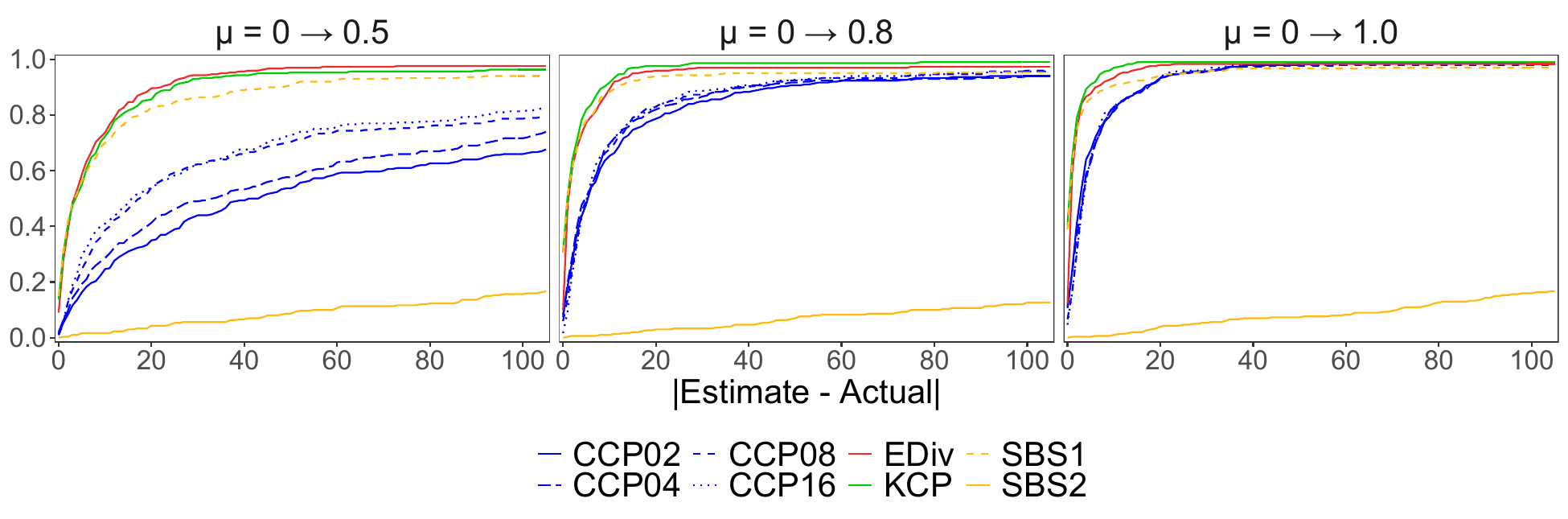}
			\caption{Fraction of identified points within error, white noise simulation results with mean change $\mu$, IDs (5a-c).}
			\label{WNplot1}
	\end{figure}

	\begin{figure}[htb]
			\centering
			\includegraphics[width = \textwidth]{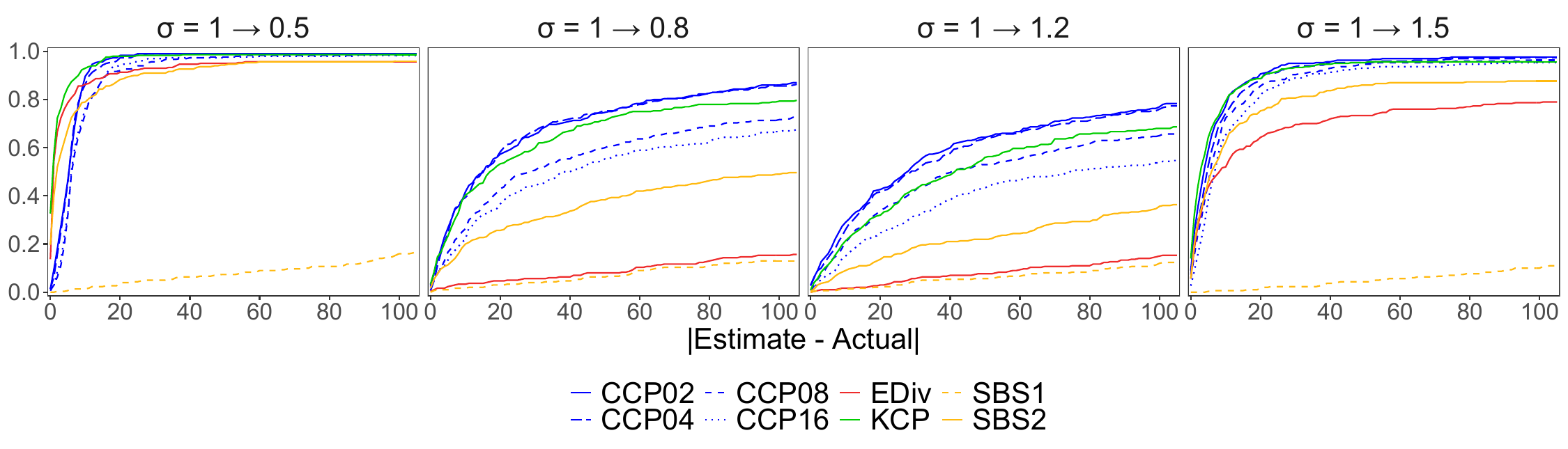}
			\caption{Fraction of identified points within error, white noise simulation results with variance change $\sigma$, IDs (5d-g).}
			\label{WNplot2}
		\end{figure}

	\begin{figure}[htb]
			\centering
			\includegraphics[width = 0.75\textwidth]{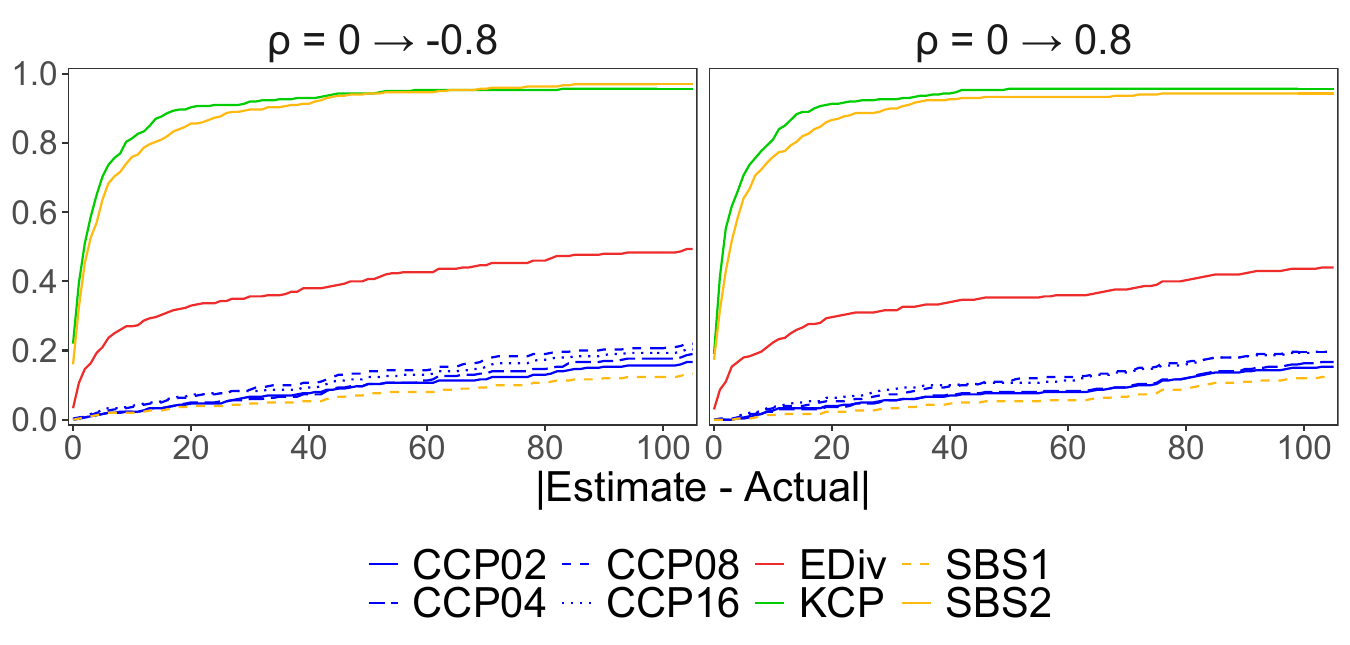}
			\caption{Fraction of identified points within error, white noise simulation results with covariance change $\rho$, ID (5h).}
		\label{WNplot3}
	\end{figure}
	
	\begin{figure}[htb]
			\centering
			\includegraphics[width = \textwidth]{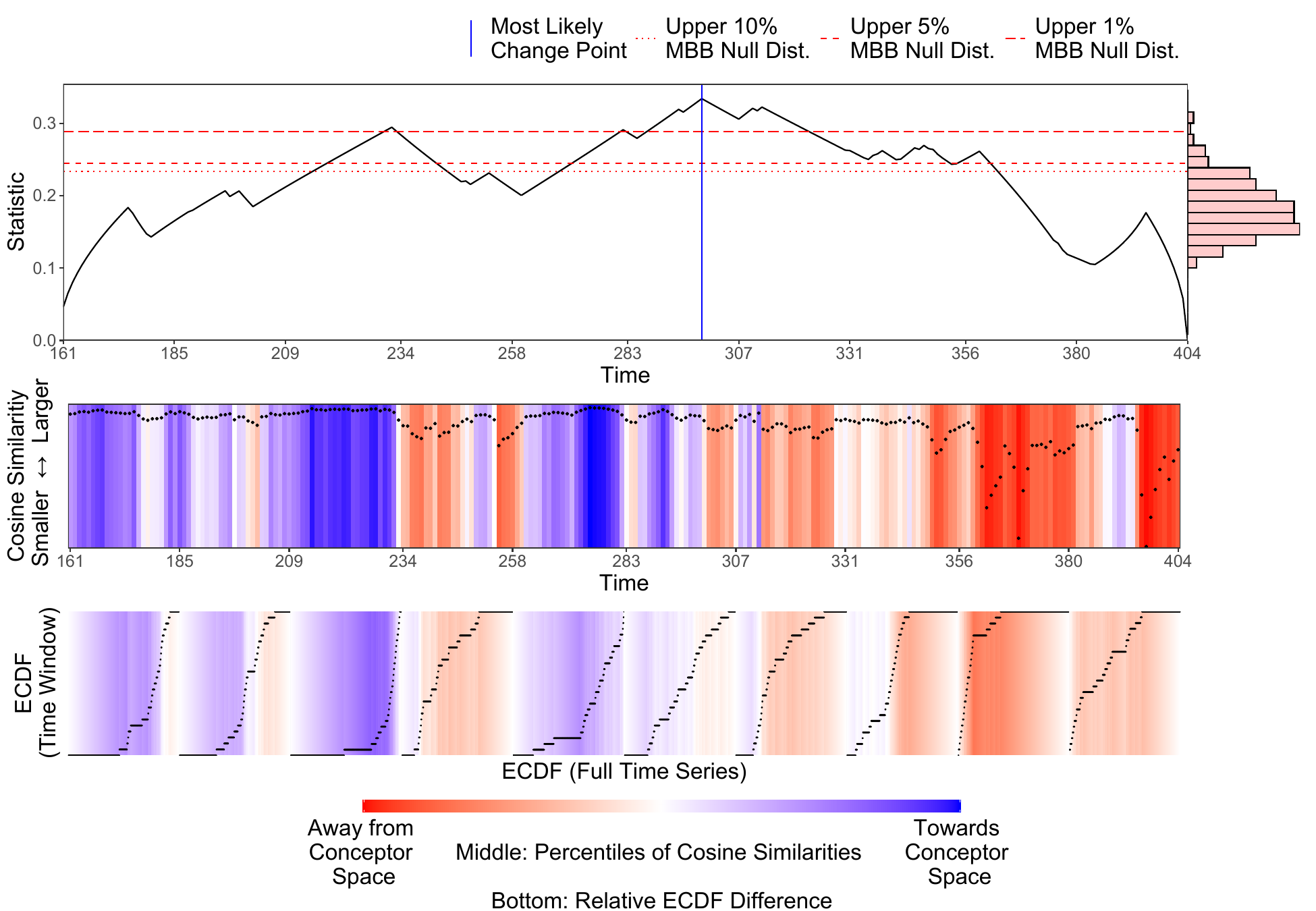}
			\caption{CCP method visualization of Figure \ref{CCPexample} (\textit{middle}). Proposed Change Point: 299 (813.8s), Statistic = 0.334, MBB quantile = 0. \textit{Top}: Identification of most likely change point from CUSUM-like statistics. Null bootstrap distribution included on the right vertical axis with estimated quantiles. \textit{Middle}: Cosine similarities between conceptor and reservoir space over the time series. Shading represents percentiles of cosine similarities over the full time series. \textit{Bottom}: Compares segment specific cosine similarity empirical CDFs to the full time series. Shading represents a relative difference of empirical CDFs.}
		\label{CCPexample2}
	\end{figure}
	
	\begin{figure}[htb]
			\centering
			\includegraphics[width = \textwidth]{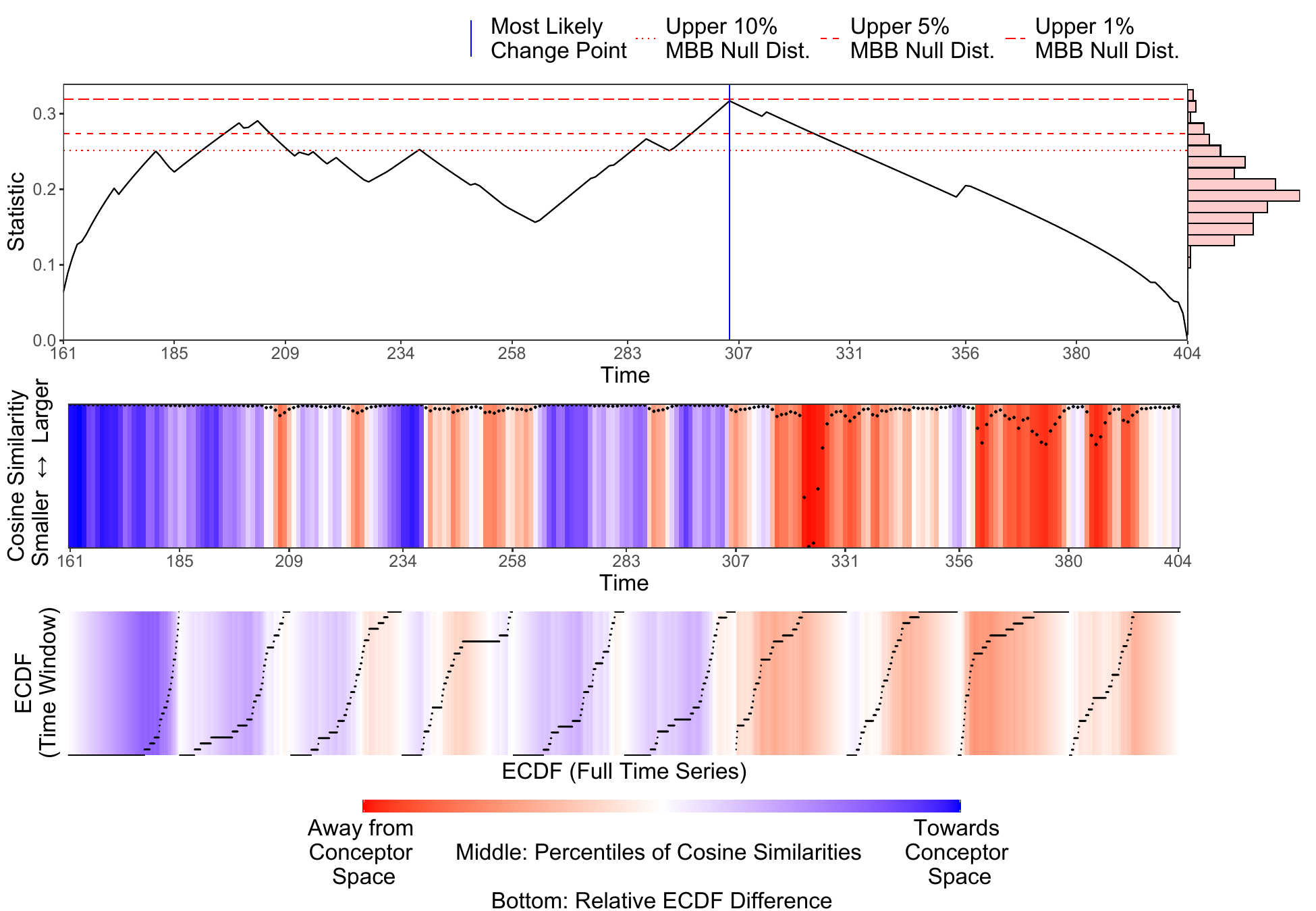}
			\caption*{CCP method visualization of Figure \ref{CCPexample} (\textit{bottom}). Proposed Change Point: 305 (1155.4s), Statistic = 0.317, MBB quantile = 0.013. \textit{Top}: Identification of most likely change point from CUSUM-like statistics. Null bootstrap distribution included on the right vertical axis with estimated quantiles. \textit{Middle}: Cosine similarities between conceptor and reservoir space over the time series. Shading represents percentiles of cosine similarities over the full time series. \textit{Bottom}: Compares segment specific cosine similarity empirical CDFs to the full time series. Shading represents a relative difference of empirical CDFs.}
		\label{CCPexample3}
	\end{figure} 
	
	\FloatBarrier
	\newpage
	\FloatBarrier
	\section*{Algorithms \& Pseudocode}
	\floatname{algorithm}{Algorithm}
	\begin{algorithm}
		\caption{ESN Featurization}
		\label{algorithm1}
		\vskip0in
		\textbf{Inputs}: time series data $\mathbf{y}_t \in \mathbb{R}^d$; training window length $T_\text{train}$
		\vskip0in
		\textbf{Outputs}: ESN parameters; ESN network size $N$; aperture $\alpha$; washout length $T_\text{wash}$
		\vskip0in
		\begin{algorithmic}[1]
			\State perform Procedure \ref{procedure1} to obtain $\text{ESN scaling}: \left\{c_\text{input}, c_\text{bias}, \rho\right\}$
			
			\State perform Procedure \ref{procedure3} to obtain ESN reservoir size $N$ and aperture $\alpha$
			
			\Return $\text{ESN scaling}: \left\{c_\text{input}, c_\text{bias}, \rho\right\}$, $N$, $\alpha$, $T_\text{wash}$
		\end{algorithmic}
	\end{algorithm}
	
	\floatname{algorithm}{Algorithm}
	\begin{algorithm}
		\caption{Change Point Proposal}
		\label{algorithm2}
		\vskip0in
		\textbf{Inputs}: time series data $\mathbf{y}_t \in \mathbb{R}^d$; training window length $T_\text{train}$; ESN scaling: $\left\{c_\text{input}, c_\text{bias}, \rho\right\}$, $N$, $\alpha$, $T_\text{wash}$ from Algorithm \ref{algorithm1}
		\vskip0in
		\textbf{Outputs}: most likely change point $\hat{\tau}$; statistic $K$
		\vskip0in
		\textbf{Default Parameters}: number of featurizations $\mathscr{R} \leftarrow 100$
		\vskip0in
		\begin{algorithmic}[1]
			\For{$r$ in $1:\mathscr{R}$}
			
			\State initialize $\mathbf{W^i}_r,\mathbf{b}_r,\mathbf{W^h}_r$ where each element $\mathcal{N}(0,1)$, and $\mathbf{W^h}_r$ is sparse
			
			\State $\mathbf{W^i}_r\leftarrow c_\text{input}\mathbf{W^i}_r$;\hskip0.1in $\mathbf{b}_r\leftarrow c_\text{bias}\mathbf{b}_r$
			
			\State $\mathbf{W^h}_r\leftarrow \rho\mathbf{W^h}_r/\max\left\{\mathbf{v}'\mathbf{W^h}_r\mathbf{v}: ||\mathbf{v}||=1\right\}$
			
			\State $\mathbf{h}_{r,t} \leftarrow \tanh\left(\mathbf{W^h}_r \mathbf{h}_{r,t-1} + \mathbf{W^i}_r \mathbf{y}_{t} + \mathbf{b}_r\right)$ \hskip0.05in for $t=1, \ldots, T_\text{wash}+T_\text{train}$ 
			
			\State $\mathbf{C}_r \leftarrow \frac{1}{T_\text{train}}\mathbf{H}_r\mathbf{H}_r'\left(\frac{1}{T_\text{train}}\mathbf{H}_r\mathbf{H}_r' + \alpha^{-2}\mathbf{I}\right)^{-1}$ 
			\Statex \hspace*{5cm}where $\mathbf{H}_r = \left[\mathbf{h}_{r,T_\text{wash}+1} \;\cdots\;\mathbf{h}_{r,T_\text{wash}+T_\text{train}}\right]$
			
			\State $\mathbf{h}_{r,t} \leftarrow \tanh\left(\mathbf{W^h}_r \tilde{\mathbf{h}}_{r,t-1} + \mathbf{W^i}_r \mathbf{y}_{t} + \mathbf{b}_r\right)$; \hskip0.05in $\tilde{\mathbf{h}}_{r,t} \leftarrow \mathbf{C}_r\mathbf{h}_{r,t}$ for $t=T_\text{wash} + 1, \ldots, T$
			
			\State  $s_{r,t} \leftarrow \frac{\tilde{\mathbf{h}}_{r,t}' \mathbf{h}_{r,t}}{\left|\left|\tilde{\mathbf{h}}_{r,t} \right|\right|\left|\left|\mathbf{h}_{r,t}\right|\right|}$ \hskip0.05in for $t = T_\text{wash}+T_\text{train}+1,\ldots,T$
			
			\EndFor
			
			\State $S_t \leftarrow {\displaystyle \frac{1}{\mathscr{R}}\sum_{r=1}^\mathscr{R} s_{r, t}}$ \hskip0.05in for $t = T_\text{wash}+T_\text{train}+1,\ldots,T$
			
			\For{$t$ in $(T_\text{wash} + T_\text{train} + 1):(T-1)$}
			
			\State $\hat{\mathcal{F}}_{(T_\text{wash}+T_\text{train}+1):t}(s) \leftarrow {\displaystyle \frac{1}{t - T_\text{wash} - T_\text{train}}\sum_{i=T_\text{wash} + T_\text{train} + 1}^{t}} \mathbf{1}\left\{S_i \leq s\right\}$
			
			\State $\hat{\mathcal{F}}_{(t+1):T}(s) \leftarrow {\displaystyle \frac{1}{T - t}\sum_{i=t + 1}^{T}} \mathbf{1}\left\{S_i \leq s\right\}$
			
			\State $K_t \leftarrow \frac{\left(t-T_\text{wash}-T_\text{train}\right)\left(T - t\right)}{q\left(t\right)\left(T-T_\text{wash}-T_\text{train}\right)^{2}} \;{\displaystyle \sup_s} \left|\hat{\mathcal{F}}_{(T_\text{wash}+T_\text{train}+1):t}(s) - \hat{\mathcal{F}}_{(t+1):T}(s)\right|$
			
			\EndFor
			
			\State $K \leftarrow {\displaystyle \max_{t}} K_t$
			
			\State $\hat{\tau} \leftarrow {\displaystyle \arg \max_{t}} K_t$
			
			\Return $K, \hat{\tau}$, all $\mathbf{C}_r, \mathbf{W^i}_r$, $\mathbf{b}_r$, $\mathbf{W^h}_r$
		\end{algorithmic}
	\end{algorithm}
	
	\floatname{algorithm}{Algorithm}
	\begin{algorithm}
		\caption{Null Distribution Estimate via Moving Block Bootstrap}
		\label{algorithm3}
		\vskip0in
		\textbf{Inputs}: training window length $T_\text{train}$; $T_\text{wash}$ from Algorithm \ref{algorithm1}; $K$, all $\mathbf{C}_r, \mathbf{W^i}_r$, $\mathbf{b}_r$, $\mathbf{W^h}_r$, $\mathscr{R}$ from Algorithm \ref{algorithm2}
		\vskip0in
		\textbf{Outputs}: null distribution estimate of statistic $K_b$; quantile estimate at a defined Type 1 error threshold $p$
		\vskip0in
		\begin{algorithmic}[1]
			\State perform Procedure \ref{procedure4} to obtain bootstrapped data $\mathbf{y}_{b,t}$ and $B$
			
			\For{$b$ in $1:B$}
			
			\For{$r$ in $1:\mathscr{R}$}
			
			\State $\mathbf{h}_{b,r,t} \leftarrow \tanh\left(\mathbf{W^h}_r \tilde{\mathbf{h}}_{b,r,t-1} + \mathbf{W^i}_r \mathbf{y}_{b,t} + \mathbf{b}_r\right)$; \hskip0.05in $\tilde{\mathbf{h}}_{b,r,t} \leftarrow \mathbf{C}_r\mathbf{h}_{b,r,t}$ 
			
			\State  $s_{b,r,t} \leftarrow \frac{\tilde{\mathbf{h}}_{b,r,t}' \mathbf{h}_{b,r,t}}{\left|\left|\tilde{\mathbf{h}}_{b,r,t} \right|\right|\left|\left|\mathbf{h}_{b,r,t}\right|\right|}$ \hskip0.05in for $t = T_\text{wash}+T_\text{train}+1,\ldots,T$
			
			\EndFor
			
			\State $S_{b,t} \leftarrow {\displaystyle \frac{1}{\mathscr{R}}\sum_{r=1}^\mathscr{R} s_{b,r, t}}$ \hskip0.05in for $t = T_\text{wash}+T_\text{train}+1,\ldots,T$
			
			\For{$t$ in $(T_\text{wash} + T_\text{train} + 1):(T-1)$}
			
			\State $\hat{\mathcal{F}}^b_{(T_\text{wash}+T_\text{train}+1):t}(s) \leftarrow {\displaystyle \frac{1}{t - T_\text{wash} - T_\text{train}}\sum_{i=T_\text{wash} + T_\text{train} + 1}^{t}} \mathbf{1}\left\{S_{b,i} \leq s\right\}$
			
			\State $\hat{\mathcal{F}}^b_{(t+1):T}(s) \leftarrow {\displaystyle \frac{1}{T - t}\sum_{i=t + 1}^{T}} \mathbf{1}\left\{S_{b,i} \leq s\right\}$
			
			\State $K_{b,t} \leftarrow \frac{\left(t-T_\text{wash}-T_\text{train}\right)\left(T - t\right)}{q\left(t\right)\left(T-T_\text{wash}-T_\text{train}\right)^{3/2}} \; {\displaystyle \sup_s} \left|\hat{\mathcal{F}}^b_{(T_\text{wash}+T_\text{train}+1):t}(s) - \hat{\mathcal{F}}^b_{(t+1):T}(s)\right|$
			
			\EndFor
			
			\State $K_b \leftarrow {\displaystyle \max_{t}} K_{b,t}$
			
			\EndFor
			
			\State $p \leftarrow {\displaystyle \frac{1}{B} \sum_{b=1}^B}\; \mathbf{1}\left\{K_b > K \right\}$
			
			\Return $p$, all $K_b$
		\end{algorithmic}
	\end{algorithm}
	
	\newpage
	\FloatBarrier
	\subsection{Procedures Composing Main Algorithms}
	\floatname{algorithm}{Procedure}
	\begin{algorithm}
		\caption{ESN Featurization: I. Scaling}
		\label{procedure1}
		\vskip0in
		\textbf{Inputs}: time series data $\mathbf{y}_t \in \mathbb{R}^d$; training window length $T_\text{train}$
		\vskip0in
		\textbf{Outputs}: ESN scaling parameters
		\vskip0in
		\textbf{Default Parameters}: ESN spectral radius $\rho \leftarrow 0.8$; grid of possible $\mathbf{W^i}_r$, $\mathbf{b}_r$ scalings $G \leftarrow \left\{c_\text{input} \leftarrow \{0.2, 0.6, 1.0, 1.4\}, c_\text{bias} \leftarrow \{0.1, 0.3, 0.5\}\right\}$; test reservoir size $N \leftarrow 10 d$; number of initializations $\mathscr{R} \leftarrow 10$; approximate washout length $T_\text{wash}^* \leftarrow 50$;  output regularization parameter $\lambda \leftarrow 10^{-4}$
		\vskip0in
		\begin{algorithmic}[1]
			\For{each grid scaling combination of $c_\text{input}$ and $c_\text{bias}$ in $G$}
			
			\For{$r$ in $1:\mathscr{R}$}
			
			\State initialize $\mathbf{W^i}_r,\mathbf{b}_r,\mathbf{W^h}_r$ where each element $\mathcal{N}(0,1)$, and $\mathbf{W^h}_r$ is sparse
			
			\State $\mathbf{W^i}_r\leftarrow c_\text{input}\mathbf{W^i}_r$;\hskip0.1in $\mathbf{b}_r\leftarrow c_\text{bias}\mathbf{b}_r$
			
			\State $\mathbf{W^h}_r\leftarrow \rho\mathbf{W^h}_r/\max\left\{\mathbf{v}'\mathbf{W^h}_r\mathbf{v}: ||\mathbf{v}||=1\right\}$
			
			\State $\mathbf{h}_{r,t} \leftarrow \tanh\left(\mathbf{W^h}_r \mathbf{h}_{r,t-1} + \mathbf{W^i}_r \mathbf{y}_{t} + \mathbf{b}_r\right)$ \hskip0.05in for $t=1, \ldots, T_\text{wash}^*+T_\text{train}$
			
			\State $\mathbf{W^o}_r \leftarrow \left(\mathbf{H}_r\mathbf{H}_r' + \lambda\mathbf{I}\right)^{-1}\mathbf{H}_r\mathbf{Y}$ \hskip0.05in where $\mathbf{H}_r = \left[\mathbf{h}_{r,T_\text{wash}^*+1} \;\cdots\;\mathbf{h}_{r,T_\text{wash}^*+T_\text{train}}\right]$
			
			\EndFor
			
			\State $\text{NRMSE} \leftarrow {\displaystyle \frac{1}{\mathscr{R}} \sum_{j=1}^\mathscr{R}} \sqrt{\frac{\left(\mathbf{Y} -\mathbf{W^o}_r \mathbf{H}_r\right)^2}{\frac{1}{2}\text{Var}\left(\mathbf{Y}\right) + \frac{1}{2}\text{Var}\left(\mathbf{W^o}_r \mathbf{H}_r\right)}}$ 
			
			\EndFor
			
			\Return $\text{ESN scaling}: \left\{c_\text{input}, c_\text{bias}, \rho\right\} \leftarrow {\displaystyle \arg \min_{G}} \left\{\text{NRMSE}\right\}$
		\end{algorithmic}
	\end{algorithm}
	
	\floatname{algorithm}{Procedure}
	\begin{algorithm}
		\caption{ESN Featurization: II. Washout Length}
		\label{procedure2}
		\vskip0in
		\textbf{Inputs}: time series data $\mathbf{y}_t \in \mathbb{R}^d$; training window length $T_\text{train}$; $\text{ESN scaling}: \left\{c_\text{input}, c_\text{bias}, \rho\right\}$ from Procedure \ref{procedure1}; reservoir size $N$
		\vskip0in
		\textbf{Outputs}: washout length $T_\text{wash}$
		\vskip0in
		\textbf{Default Parameters}: initial reservoir states $\mathbf{h}_{r,0,0} \leftarrow 0$, $\mathbf{h}_{r,0,1} \leftarrow 1$; washout tolerance $\varepsilon_\text{wash} \leftarrow 10^{-6}$; initial time state $t\leftarrow 0$; number of initializations $\mathscr{R} \leftarrow 10$
		\vskip0in
		\begin{algorithmic}[1]
			\State $\delta_{01} \leftarrow \left|\mathbf{h}_{r,0,0}-\mathbf{h}_{r,0,1}\right|$
			\While{$\delta_{01}>\varepsilon_\text{wash}$}
			
			\For{$r$ in $1:\mathscr{R}$}
			
			\State initialize $\mathbf{W^i}_r,\mathbf{b}_r,\mathbf{W^h}_r$ where each element $\mathcal{N}(0,1)$, and $\mathbf{W^h}_r$ is sparse
			
			\State $\mathbf{W^i}_r\leftarrow c_\text{input}\mathbf{W^i}_j$;\hskip0.1in $\mathbf{b}_r\leftarrow c_\text{bias}\mathbf{b}_r$ 
			
			\State $\mathbf{W^h}_r\leftarrow \rho\mathbf{W^h}_r/\max\left\{\mathbf{v}'\mathbf{W^h}_r\mathbf{v}: ||\mathbf{v}||=1\right\}$
			
			\State $\mathbf{h}_{r,t,0} \leftarrow \tanh\left(\mathbf{W^h}_r \mathbf{h}_{r,t-1,0} + \mathbf{W^i}_r \mathbf{y}_{t} + \mathbf{b}_r\right)$
			
			\State $\mathbf{h}_{r,t,1} \leftarrow \tanh\left(\mathbf{W^h}_r \mathbf{h}_{r,t-1,1} + \mathbf{W^i}_r \mathbf{y}_{t} + \mathbf{b}_r\right)$
			
			\EndFor
			
			\State $\delta_{01} \leftarrow \max_j\left|\mathbf{h}_{r,t,0}-\mathbf{h}_{r,t,1}\right|$
			
			\If{$\delta_{01}>\varepsilon_\text{wash}$}
			
			\State $t\leftarrow t+1$
			
			\Else
			
			\State $T_\text{wash}\leftarrow t$
			
			\EndIf
			
			\EndWhile
			
			\Return $T_\text{wash}$, $\mathbf{W^i}_r,\mathbf{b}_r,\mathbf{W^h}_r$
			
		\end{algorithmic}
	\end{algorithm}
	
	\floatname{algorithm}{Procedure}
	\begin{algorithm}
		\caption{ESN Featurization: III. Parameter Computation}
		\label{procedure3}
		\vskip0in
		\textbf{Inputs}: time series data $\mathbf{y}_t \in \mathbb{R}^d$; training window length $T_\text{train}$; ESN scaling ($c_\text{input}$, $c_\text{bias}$, and $\rho\leftarrow0.8$) from Procedure \ref{procedure1}; $T_\text{wash}$ from specified value or Procedure \ref{procedure2}
		\vskip0in
		\textbf{Outputs}: ESN reservoir size $N$; aperture $\alpha$
		\vskip0in
		\textbf{Default Parameters}: training error tolerance $\varepsilon_\text{train} \leftarrow 0.04$; initial reservoir size $N \leftarrow 10 d$; initial aperture value $\alpha \leftarrow N$; number of initializations $\mathscr{R} \leftarrow 10$; output regularization parameter $\lambda \leftarrow 10^{-4}$
		\vskip0in
		\begin{algorithmic}[1]
			\While{NRMSE $> \varepsilon_\text{train}$}
			
			\State perform Procedure \ref{procedure2} to obtain $T_\text{wash}$, $\mathbf{W^i}_r,\mathbf{b}_r,\mathbf{W^h}_r$
			
			\For{$r$ in $1:\mathscr{R}$}
			
			\State $\mathbf{h}_{r,t} \leftarrow \tanh\left(\mathbf{W^h}_r \mathbf{h}_{r,t-1} + \mathbf{W^i}_r \mathbf{y}_{t} + \mathbf{b}_r\right)$ \hskip0.05in for $t=1, \ldots, T_\text{wash}+T_\text{train}$ 
			
			\State $\mathbf{C}_r \leftarrow \frac{1}{T_\text{train}}\mathbf{H}_r\mathbf{H}_r'\left(\frac{1}{T_\text{train}}\mathbf{H}_r\mathbf{H}_r' + \alpha^{-2}\mathbf{I}\right)^{-1}$ 
			\Statex \hspace*{5cm}where $\mathbf{H}_r = \left[\mathbf{h}_{r,T_\text{wash}+1} \;\cdots\;\mathbf{h}_{r,T_\text{wash}+T_\text{train}}\right]$
			
			\State $\mathbf{h}_{r,t} \leftarrow \tanh\left(\mathbf{W^h}_r \tilde{\mathbf{h}}_{r,t-1} + \mathbf{W^i}_r \mathbf{y}_{t} + \mathbf{b}_r\right)$
			\State $\tilde{\mathbf{h}}_{r,t} \leftarrow \mathbf{C}_r\mathbf{h}_{r,t}$ for $t=T_\text{wash} + 1, \ldots, T_\text{wash}+T_\text{train}$ 
			
			\State $\mathbf{W^o}_r \leftarrow \left(\tilde{\mathbf{H}}_r\tilde{\mathbf{H}}_r' + \lambda\mathbf{I}\right)^{-1}\tilde{\mathbf{H}}_r\mathbf{Y}$ \hskip0.05in where $\tilde{\mathbf{H}}_r = \left[\tilde{\mathbf{h}}_{r,T_\text{wash}+1} \;\cdots\;\tilde{\mathbf{h}}_{r,T_\text{wash}+T_\text{train}}\right]$
			
			\EndFor
			
			\State $\text{NRMSE} \leftarrow {\displaystyle \frac{1}{\mathscr{R}} \sum_{j=1}^\mathscr{R}} \sqrt{\frac{\left(\mathbf{Y} -\mathbf{W^o}_r \tilde{\mathbf{H}}_r\right)^2}{\frac{1}{2}\text{Var}\left(\mathbf{Y}\right) + \frac{1}{2}\text{Var}\left(\mathbf{W^o}_r \tilde{\mathbf{H}}_r\right)}}$ 
			
			\If{NRMSE $>\varepsilon_\text{train}$ and $\alpha \leq 100N$}
			
			\State $\alpha \leftarrow \sqrt{10}\alpha$
			
			\Else
			
			\State $N \leftarrow dN$; $\alpha \leftarrow N$
			
			\EndIf
			
			\EndWhile
			
			\Return $N$, $\alpha, T_\text{wash}$
		\end{algorithmic}
	\end{algorithm}
	
	\floatname{algorithm}{Procedure}
	\begin{algorithm}
		\caption{Generating Bootstrapped Time Series}
		\label{procedure4}
		\vskip0in
		\textbf{Inputs}: time series data $\mathbf{y}_t \in \mathbb{R}^d$; training length $T_\text{train}$; $T_\text{wash}$ from Algorithm \ref{algorithm1}
		\vskip0in
		\textbf{Outputs}: bootstrapped time series $\mathbf{y}_{b,t}$
		\vskip0in
		\textbf{Default Parameters}: number of bootstraps $B \leftarrow 240$; pilot block length $\ell \leftarrow T_\text{wash}$
		\vskip0in
		\begin{algorithmic}[1]
			\State Perform \citet{hall95} algorithm with $\ell$, five proposed block lengths equally spaced between $\lceil T^{1/5} \rceil$ and $\lceil T^{1/2} \rceil$, and 40 bootstrapped series each to obtain MBB block length parameter $L$.
			
			\For{$b$ in $1:B$}
			
			\For{$i$ in $1:\left\lceil \left(T - T_\text{wash} - T_\text{train}\right) / L \right\rceil$}
			
			\State $\beta_{b,i} \leftarrow \beta \sim \text{Uniform}\left\{T_\text{wash}+T_\text{train}+1, \; T\right\}$
			
			\State $\mathbf{b}_i \leftarrow \mathbf{y}_{\beta_{b,i}:\left(\beta_{b,i} + L - 1\right)}$
			
			\EndFor
			
			\State $\mathbf{y}_t^b \leftarrow \left[ \mathbf{y}'_{1:(T_\text{wash}+T_\text{train})} \; \mathbf{b}'_1 \;\cdots \;\mathbf{b}'_{\left\lceil \left(T - T_\text{wash} - T_\text{train}\right) / L \right\rceil}\right]'_{1:T}$
			
			\EndFor
			
			\Return all $\mathbf{y}_{b,t}$
		\end{algorithmic}
	\end{algorithm}
	
	\FloatBarrier
	\pagebreak
	\FloatBarrier
	\section*{Proofs}
	Proof of Theorem \ref{mainthm} follows \citet{csorgHo97b}, Theorem 2.6.1 with the relaxation of the i.i.d. sequence to a stationary, $\mathcal{S}$-mixing sequence. The proof consists of two major steps.  First, we show the statistic converges to a sequence of Gaussian processes (Lemma \ref{helplemma}). Because of the relaxation from an i.i.d. sequence to an $\mathcal{S}$-mixing sequence, the rates of convergence shown in \citet{csorgHo97b}, Lemma 2.6.1 are adjusted. Next we show that the sequence of Gaussian processes, in turn, converges in distribution to the desired limiting process (Lemma \ref{helplemma2}).
	\par
Define the quantity $K_T(s,t)$ in Equation \ref{KTst} on the domain $1 \leq t \leq T-1$ as a scaled difference between the two empirical CDFs. This is a common form akin to that used in \citet{csorgHo97b}, Chapter 2 with a modification of the denominator.
	\begin{align}
		K_T(s,t) &= \left[\frac{t(T-t)}{T^{2}} \right]\left(\hat{\mathcal{F}}_{1:t}(s) - \hat{\mathcal{F}}_{(t+1):T}(s)\right) \label{KTst}
	\end{align}
	\par
We scale the time domain to $\delta \in[0,1]$ such that $\delta = t / T$, and define $\mathcal{F}_{1:\delta T}(s)=\mathcal{F}_1(s)$ and $\mathcal{F}_{\delta T:T}(s)=\mathcal{F}_T(s)$ as the distribution functions on the intervals $(0,\delta]$ and $[\delta, 1)$, respectively, with $\hat{\mathcal{F}}_{1:\delta T}(s)$ and $\hat{\mathcal{F}}_{\delta T:T}(s)$ as their corresponding empirical estimates. Equation \ref{KTst} can be adjusted to the analogous form,
	\begin{align}
		K_T(s,\delta) &= \left[\frac{\delta T(T-\delta T)}{T^{2}} \right]\left(\hat{\mathcal{F}}_{1:\delta T}(s) - \hat{\mathcal{F}}_{\delta T:T}(s)\right). \label{KTsdelta}
	\end{align}
	\par
We now state and prove our Lemma \ref{helplemma}. The proof requires Theorem 2 from \citet{berkes09}, which we restate at the end of this section, without proof, for convenience.
	\begin{lemma}\label{helplemma}
		Let $S_t$ be a stationary sequence such that $\mathcal{F}(s) = P(S_1 \leq s)$ is Lipschitz continuous of order $C > 0$. Assume $S_t$ is $\mathcal{S}$-mixing and that condition (1) of Definition \ref{def:smix} holds with $\gamma_m=m^{-AC}$, $\delta_m = m^{-A}$ for some $A>4$. Then under the null hypothesis with $q(\delta)$ a positive function on $(0,1)$ that increases in a neighborhood of zero and decreases in a neighborhood of one,
		\begin{align}
			\sup_{1/T \leq \delta\leq (T-1)/T} \sup_{s\in[0,1]} \left|\sqrt{T}\;K_T(s,\delta) - \mathcal{K}_T(s,\delta)\right| /  q(\delta) = o(1),
			\label{helplemmaresult}
		\end{align}
		where $K_T(s,\delta)$ is defined in Equation \ref{KTsdelta}, $\{\mathcal{K}_T(s,\delta),\;0\leq \delta \leq 1\}$ is a sequence of Gaussian processes with
		\begin{align}\label{helplemmadetails}
			&\mathbb{E}\left[\mathcal{K}_T(s,\delta)\right] = 0,\nonumber\\
			&\mathbb{E}\left[\mathcal{K}_T(s,\delta)\;\mathcal{K}_T(s',\delta')\right] = (\delta \wedge \delta')\; \Gamma(s,s'),\nonumber\\[8pt]
			\text{and} \hskip0.2in&\Gamma(s,s') = \sum_{-\infty < t < \infty} \mathbb{E}\left[S_1(s) S_t(s')\right],
		\end{align}
		provided $I_{0,1}(q,c)<\infty$ for all $c>0$, where
		\begin{align}\label{condition}
			I_{0,1}(q,c) &= \int_0^1 \frac{1}{\delta(1-\delta)}\exp\left\{-\frac{cq^2(\delta)}{\delta(1-\delta)}\right\}d\delta.
		\end{align}
	\end{lemma}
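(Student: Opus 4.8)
\textbf{Proof proposal for Lemma \ref{helplemma}.} The plan is to imitate the argument of \citet{csorgHo97b}, Theorem 2.6.1, replacing their i.i.d.\ empirical-process bound by the $\mathcal{S}$-mixing strong approximation of \citet{berkes09}. First I would reduce $\sqrt{T}\,K_T(s,\delta)$ to a tied-down two-parameter empirical process. Write $\xi_i(s) = \mathbf{1}\{S_i\le s\} - \mathcal{F}(s)$, $V_t(s) = \sum_{i=1}^t \xi_i(s)$, and $Z_T(s,\delta) = T^{-1/2}V_{\lfloor\delta T\rfloor}(s)$ for the sequential empirical process of $S_t$. Under the null both empirical CDFs in (\ref{KTsdelta}) estimate the common $\mathcal{F}$, and one line of algebra gives, with $t=\lfloor\delta T\rfloor$,
\[
\sqrt{T}\,K_T(s,\delta) \;=\; \frac{V_{\lfloor\delta T\rfloor}(s)}{\sqrt{T}} \;-\; \frac{\lfloor\delta T\rfloor}{T}\,\frac{V_T(s)}{\sqrt{T}} \;=\; Z_T(s,\delta) - \tfrac{\lfloor\delta T\rfloor}{T}\,Z_T(s,1).
\]
Since $\lfloor\delta T\rfloor/T - \delta = O(T^{-1})$ uniformly, this rounding is negligible even after division by $q(\delta)$, so it suffices to approximate $Z_T(s,\delta) - \delta\,Z_T(s,1)$, weighted by $1/q(\delta)$, by a Gaussian process on $1/T\le\delta\le(T-1)/T$.

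Second I would invoke the strong approximation. Because $\mathcal{F}$ is Lipschitz of order $C$, the indicator family $\{\xi_i(s)\}_{s\in[0,1]}$ inherits the $\mathcal{S}$-mixing structure of $S_t$ with coupling rate $\gamma_m=m^{-AC}$, $\delta_m=m^{-A}$; applying Theorem 2 of \citet{berkes09} (restated below) yields, on a suitable probability space, a sequence of mean-zero Gaussian processes $G_T(s,\delta)$ with $\mathbb{E}[G_T(s,\delta)G_T(s',\delta')] = (\delta\wedge\delta')\,\Gamma(s,s')$, $\Gamma(s,s') = \sum_{t\in\mathbb{Z}}\mathbb{E}[\xi_1(s)\xi_t(s')]$ (the series converging absolutely under the assumed rate), such that $\sup_{s\in[0,1]}\sup_{0\le\delta\le1}|Z_T(s,\delta) - G_T(s,\delta)| = o(T^{-\eta})$ almost surely for some $\eta>0$. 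The hypothesis $A>4$ is exactly what makes this rate small enough to survive the later weighting, and it is here that the convergence rates of \citet{csorgHo97b}, Lemma 2.6.1, must be readjusted for the weaker dependence. Taking $\mathcal{K}_T$ to be the tied-down Gaussian process built from $G_T$ in this way furnishes the process in the statement, with mean and covariance as in (\ref{helplemmadetails}), and combining with the first step gives $\sup_{s}\sup_{\delta}|\sqrt{T}K_T(s,\delta)-\mathcal{K}_T(s,\delta)| = o(T^{-\eta})$ a.s., uniformly.

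Third I would put back the weight $1/q(\delta)$. On any compact $[\eta_0,1-\eta_0]\subset(0,1)$ the function $q$ is bounded away from zero, so the unweighted bound immediately yields (\ref{helplemmaresult}) there. The substance is the boundary strips $\delta\in[1/T,\eta_0]$ and $\delta\in[1-\eta_0,(T-1)/T]$, where $q(\delta)\downarrow 0$: on the Gaussian side I would use the classical weighted-approximation integral test — $I_{0,1}(q,c)<\infty$ from (\ref{condition}) is precisely the Erd\H{o}s--Feller--Kolmogorov--Petrovsky-type condition ensuring $\sup_{\delta\to0}|\mathcal{K}_T(s,\delta)|/q(\delta)$ is a.s.\ finite and that the strong-approximation error, reweighted by $1/q$, still tends to zero — and on the empirical side a maximal inequality for the partial sums $V_t(s)$ near the ends, together with a chaining argument over $s\in[0,1]$ controlled by the Lipschitz-$C$ modulus of $\mathcal{F}$. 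Assembling the interior estimate and the two boundary estimates gives (\ref{helplemmaresult}).

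I expect this last step to be the main obstacle: controlling $|\sqrt{T}K_T - \mathcal{K}_T|/q(\delta)$ uniformly in $s\in[0,1]$ and in $\delta$ down to $\delta=1/T$, where the normalization $t(T-t)/T^2$ is of order $1/T$ and the empirical CDFs rest on as few as one observation. Both the weaker $\mathcal{S}$-mixing rate (hence the need for $A>4$) and the integrability condition $I_{0,1}(q,c)<\infty$ are spent here; everything upstream — the reduction to a tied-down sequential process, the identification of the long-run covariance $\Gamma$, and the floor-rounding error — is routine bookkeeping.
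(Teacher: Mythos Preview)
Your high-level strategy---reduce $\sqrt{T}K_T$ to a tied-down sequential empirical process and substitute the \citet{berkes09} strong approximation for the i.i.d.\ Kiefer bound in \citet{csorgHo97b}---matches the paper's. The paper differs in one structural choice: rather than a single tied-down process, it splits the time axis at $T/2$, writes $\sqrt{T}K_T$ separately via forward sums $\sum_{i=1}^t$ on $t\le T/2$ and backward sums $\sum_{i=t+1}^T$ on $t\ge T/2$, applies Berkes et al.\ twice to obtain two Gaussian processes $\mathcal{K}_1,\mathcal{K}_2$, and assembles $\mathcal{K}_T$ piecewise from them. This is the verbatim Cs\"org\H{o}--Horv\'ath construction; your single-process route is equivalent in spirit and arguably cleaner, but note that your tied-down $G_T(s,\delta)-\delta G_T(s,1)$ has bridge covariance $(\delta\wedge\delta'-\delta\delta')\Gamma(s,s')$, not the $(\delta\wedge\delta')\Gamma$ announced in (\ref{helplemmadetails}), so you would need to reconcile your construction with the stated covariance.

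There is one concrete slip. The rate you quote from \citet{berkes09} is too strong: their Theorem~2 (restated at the end of the supplement) gives only
\[
\sup_{t,s}\Bigl|V_t(s)-\mathcal{K}(s,t)\Bigr|=o\bigl(T^{1/2}(\log T)^{-\alpha}\bigr)\quad\text{a.s.},
\]
i.e.\ $o\bigl((\log T)^{-\alpha}\bigr)$ after normalizing by $T^{-1/2}$, not $o(T^{-\eta})$ for any $\eta>0$. The hypothesis $A>4$ is a standing condition for that theorem to apply, not a dial that upgrades the rate to polynomial. With only a logarithmic error in hand, your third step---where you want the approximation error divided by $q(\delta)$ to vanish on $[1/T,\eta_0]$ for general $q$---does not go through as written. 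The paper, for its part, simply asserts that assembling the partial-sum representation with the two strong approximations ``immediately'' yields (\ref{helplemmaresult}), leaning on the integral condition $I_{0,1}(q,c)<\infty$ and the split-at-$T/2$ construction without spelling out the boundary bookkeeping. Your instinct that the endpoints are where the real work lies is sound; it is the tool you reach for (a polynomial approximation rate) that is unavailable here.
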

	\begin{proof}[Proof of Lemma \ref{helplemma}]
		With $\mathcal{F}(s)$ denoting the true distribution function of all $S_t$ under the null, we can write $K_T(s,t)$ from Equation \ref{KTst} as
		\begin{align}
			\sqrt{T}\;K_T(s,t) = \begin{cases}
				&\frac{1}{\sqrt{T}} \left(\sum_{i=1}^t \left(\mathbf{1}\{S_i \leq s\} - \mathcal{F}(s)\right) - \frac{t}{T}\sum_{i=1}^T\left(\mathbf{1}\{S_i \leq s\} - \mathcal{F}(s)\right)\right)\\[12pt]
				&\hskip3.2in\text{for $1 \leq t \leq T/2$,}\\[16pt]
				&\frac{1}{\sqrt{T}}\left(\frac{T-t}{T}\sum_{i=1}^T \left(\mathbf{1}\{S_i \leq s\} - \mathcal{F}(s)\right) - \sum_{i=t+1}^T\left(\mathbf{1}\{S_i \leq s\} - \mathcal{F}(s)\right)\right)\\[12pt]
				&\hskip3.2in\text{for $T/2 \leq t \leq T-1 $}.
			\end{cases}
			\label{partialsums}
		\end{align}
		Replacing the strong approximation of empirical processes used in \citet{csorgHo97b}, Lemma 2.6.1 we  use the the $\mathcal{S}$-mixing conditions and Theorem 2 from \citet{berkes09}. Define two Gaussian processes $\{\mathcal{K}_{1}(s,t),\;1 \leq t \leq T/2\}$ and $\{\mathcal{K}_{2}(s,t),\; T/2 \leq t \leq T$\} such that
		\begin{align}
			\sup_{1\leq t \leq T/2} \sup_{s\in[0,1]} \left| \sum_{i=1}^t \left(\mathbf{1}\{S_i\leq s\} - \mathcal{F}(s)\right) - \mathcal{K}_{1}(s,t)\right|  = o\left(\left(\frac{T}{2}\right)^{1/2} \left(\log \frac{T}{2}\right)^{-\alpha}\right),
			\label{ps1}
		\end{align}
		and 
		\begin{align}
			\sup_{T/2\leq t \leq T} \sup_{s\in[0,1]} \left| \sum_{i=t+1}^T \left(\mathbf{1}\{S_i\leq s\} - \mathcal{F}(s)\right) - \mathcal{K}_{2}(s,t)\right|  = o\left(\left(\frac{T}{2}\right)^{1/2} \left(\log \frac{T}{2}\right)^{-\alpha}\right),
			\label{ps2}
		\end{align}
		for some $\alpha>0$ where the two processes have identical expected value and covariance functions.
		\begin{align}
			&\mathbb{E}\left[\mathcal{K}_{1}(s,t)\right] = \mathbb{E}\left[\mathcal{K}_{2}(s,t)\right]= 0\nonumber\\[4pt] 
			&\mathbb{E}\left[\mathcal{K}_{1}(s,t)\;\mathcal{K}_{1}(s',t')\right] = \mathbb{E}\left[\mathcal{K}_{2}(s,t)\;\mathcal{K}_{2}(s',t')\right] = 
			(t\wedge t')\;\Gamma\left(s,s'\right)\nonumber\\[4pt]
			& \Gamma(s,s') = \sum_{-\infty < t < \infty} \mathbb{E}\left[S_1(s) S_t(s')\right]\label{details}
		\end{align}
		From the strong approximation in Equations \ref{ps1} and \ref{ps2}, we define a sequence of Gaussian processes $\mathcal{K}_T(s,
		\delta)$ based on the partial sums in Equation \ref{partialsums} using the scaled time domain $\delta\in[0,1]$.
		\begin{align}\label{processeq}
			\mathcal{K}_T(s,\delta) =\begin{cases}
				\frac{1}{\sqrt{T}} \left(\mathcal{K}_1(s,\delta T) - \delta \left[\mathcal{K}_1(s,T/2) + \mathcal{K}_2(s,T/2)\right]\right) &\text{for $0\leq \delta \leq 1/2$,}\\[8pt]
				\frac{1}{\sqrt{T}} \left(-\mathcal{K}_2(s,\delta T) + (1-\delta) \left[\mathcal{K}_1(s,T/2) + \mathcal{K}_2(s,T/2)\right]\right) &\text{for $1/2\leq \delta \leq 1$}
			\end{cases}
		\end{align}
		Assembling Equations \ref{partialsums}, \ref{ps1}, and \ref{ps2}, we immediately obtain the result stated in Equation \ref{helplemmaresult}.
	\end{proof}
	\begin{lemma}\label{helplemma2}
		Let  $\{\mathcal{K}_T(s,\delta),\;0\leq \delta \leq 1\}$ be a sequence of Gaussian processes defined in Equation \ref{helplemmadetails} and $\{\mathcal{K}(s,\delta),\;0\leq \delta \leq 1\}$ be a Gaussian process defined in Equation \ref{mainthmdetails}. Under the null hypothesis with $q(\delta)$ a positive function on $(0,1)$ that increases in a neighborhood of zero and decreases in a neighborhood of one,
		\begin{align}
			\sup_{\delta\in[0,1]} \sup_{s\in[0,1]} \left|\mathcal{K}_T(s,\delta)\right| /  q(\delta) \xrightarrow{D} \sup_{\delta\in[0,1]} \sup_{s\in[0,1]} \left|\mathcal{K}(s,\delta)\right| /  q(\delta),
			\label{helplemma2result}
		\end{align}
		provided the result of Lemma \ref{helplemma} holds for $K_T(s,\delta)$ defined in Equation \ref{KTsdelta}, and $I_{0,1}(q,c)<\infty$ for all $c>0$, where $I_{0,1}(q,c)$ is defined in Equation \ref{condition}.
	\end{lemma}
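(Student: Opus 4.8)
The plan is to transfer the weighted-supremum functional from the approximating sequence $\mathcal{K}_T$ to the limit $\mathcal{K}$ by a truncation-plus-continuous-mapping argument, the only delicate part being the behaviour of the weighted field near $\delta=0$ and $\delta=1$, which is precisely what the hypothesis $I_{0,1}(q,c)<\infty$ is there to handle; the structure parallels \citet{csorgHo97b}, Chapter 2, with the i.i.d.\ strong approximation replaced by the $\mathcal{S}$-mixing one. At the outset I would collect the structural facts available. By Equations \ref{helplemmadetails} and \ref{mainthmdetails}, $\mathcal{K}_T$ and $\mathcal{K}$ are centred Gaussian fields on $[0,1]\times[0,1]$ with a product-type covariance built from $\Gamma(s,s')=\sum_{t}\mathbb{E}[S_1(s)S_t(s')]$; under the standing assumptions ($\mathcal{F}$ Lipschitz of order $C$ together with the summable rates $\gamma_m=m^{-AC}$, $\delta_m=m^{-A}$, $A>4$) the series defining $\Gamma$ converges absolutely and $\Gamma$ is a bounded, continuous covariance on $[0,1]^2$, as provided by Theorem~2 of \citet{berkes09}. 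Consequently $(s,\delta)\mapsto\mathcal{K}(s,\delta)$ and $(s,\delta)\mapsto\mathcal{K}_T(s,\delta)$ have a.s.\ bounded, separable sample paths, the variance over $I\times[0,1]$ for an interval $I\subseteq[0,1]$ is at most $\|\Gamma\|_\infty\sup_{\delta\in I}\delta(1-\delta)$, and the Borell--TIS inequality gives Gaussian concentration for the field supremum over such sets; the tail bounds for $\mathcal{K}_T$ are uniform in $T$ because its covariance is dominated by that of $\mathcal{K}$.

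Next I would truncate. Fix $\epsilon\in(0,\tfrac12)$ and write
\[
M_T \;:=\; \sup_{\delta\in[0,1]}\sup_{s\in[0,1]}\frac{|\mathcal{K}_T(s,\delta)|}{q(\delta)}\;=\;\max\bigl\{M_T^{\epsilon},\;B_T^{\epsilon}\bigr\},
\]
where $M_T^{\epsilon}$ is the supremum restricted to $\delta\in[\epsilon,1-\epsilon]$ and $B_T^{\epsilon}$ the one over $(0,\epsilon)\cup(1-\epsilon,1)$, and define $M$, $M^{\epsilon}$, $B^{\epsilon}$ analogously from $\mathcal{K}$. On the compact rectangle $[\epsilon,1-\epsilon]\times[0,1]$ the weight $1/q(\delta)$ is bounded and continuous; $\mathcal{K}_T$ converges weakly to $\mathcal{K}$ there, since their finite-dimensional distributions agree and tightness is automatic for a Gaussian field with continuous covariance, so the continuous mapping theorem yields $M_T^{\epsilon}\xrightarrow{D}M^{\epsilon}$ as $T\to\infty$, while $M^{\epsilon}\uparrow M$ monotonically as $\epsilon\downarrow0$.

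The heart of the proof is the uniform negligibility of the boundary pieces: for every $\eta>0$,
\[
\lim_{\epsilon\downarrow0}\;\limsup_{T\to\infty}\;\mathbb{P}\bigl(B_T^{\epsilon}>\eta\bigr)=0
\qquad\text{and}\qquad
\lim_{\epsilon\downarrow0}\;\mathbb{P}\bigl(B^{\epsilon}>\eta\bigr)=0 .
\]
I would prove these with the classical dyadic maximal-inequality argument for weighted bridge-type processes. Partition $(0,\epsilon]$ into blocks $I_k=(2^{-k-1}\epsilon,\;2^{-k}\epsilon]$; on $I_k$ the monotonicity of $q$ near zero gives $q(\delta)\ge q(2^{-k-1}\epsilon)$, while the variance of $\mathcal{K}_T(s,\delta)$ over $I_k\times[0,1]$ is at most $\|\Gamma\|_\infty\,2^{-k}\epsilon$. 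Writing $\sup_{\delta\in I_k}|\mathcal{K}_T(s,\delta)|$ as an endpoint value plus a small increment and invoking Borell--TIS yields
\[
\mathbb{P}\!\left(\sup_{\delta\in I_k}\sup_{s\in[0,1]}\frac{|\mathcal{K}_T(s,\delta)|}{q(\delta)}>\eta\right)\;\le\;C\exp\!\left\{-\,c\,\eta^{2}\,\frac{q^{2}(2^{-k-1}\epsilon)}{2^{-k}\epsilon}\right\},
\]
valid once $\epsilon$ is small enough that $q^{2}(\delta)/\delta$ is large on $(0,\epsilon]$ --- which $I_{0,1}(q,c)<\infty$ forces, as that integral condition obliges $q^{2}(\delta)/(\delta(1-\delta))\to\infty$ at the endpoints. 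Summing over $k$ and comparing with $\int_0^{\epsilon}\delta^{-1}\exp\{-c\eta^{2}q^{2}(\delta)/\delta\}\,d\delta$ (note $\delta(1-\delta)\sim\delta$ near $0$) shows the sum is finite precisely because $I_{0,1}(q,c\eta^{2})<\infty$ and tends to $0$ as $\epsilon\downarrow0$, being the tail of a convergent integral; the $\delta\to1$ region is symmetric. The estimate is uniform in $T$, and taking $\mathcal{K}_T\equiv0$ on $[0,1/T)\cup((T-1)/T,1]$ only strengthens it. Running the same computation with $\mathcal{K}$ gives $B^{\epsilon}\to0$ in probability and, on letting $\epsilon\downarrow0$, shows $M<\infty$ almost surely --- the finiteness asserted in Theorem \ref{mainthm}.

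Finally I would assemble the pieces by the standard three-$\epsilon$ reasoning: for a continuity point $x$ of the law of $M$ and $\eta\le x$, the events $\{M_T\le x\}$ and $\{M_T^{\epsilon}\le x\}$ differ only on $\{B_T^{\epsilon}>x\}\subseteq\{B_T^{\epsilon}>\eta\}$, so combining $\mathbb{P}(M_T^{\epsilon}\le x)\to\mathbb{P}(M^{\epsilon}\le x)$, $\mathbb{P}(M^{\epsilon}\le x)\to\mathbb{P}(M\le x)$, and the uniform smallness of the boundary probabilities gives $\mathbb{P}(M_T\le x)\to\mathbb{P}(M\le x)$, i.e.\ the convergence in Equation \ref{helplemma2result}. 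I expect the dyadic maximal inequality to be the main obstacle: one must blend the monotonicity hypothesis on $q$, the linear variance growth of the bridge part, and the extra $\sup_{s}$ (which costs only a polynomial factor, absorbed into the exponent through the Gaussian concentration constant) so that the block probabilities line up with the integrand of $I_{0,1}(q,c)$. When $q$ is bounded away from zero --- as for $q(\delta)=\max\{\delta^{1/2}(1-\delta)^{1/2},\kappa\}$ with $\kappa>0$ used in Theorem \ref{mainthm} --- this step is vacuous, and only the continuous-mapping argument on a fixed compact rectangle is needed.
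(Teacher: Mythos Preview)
Your approach is correct and shares the paper's overall structure --- establish weak convergence of $\mathcal{K}_T$ to $\mathcal{K}$ away from the endpoints, then show the weighted boundary contributions are negligible under $I_{0,1}(q,c)<\infty$ --- but the execution differs. The paper is terse: it notes $\sqrt{T}K_T$ vanishes on $(0,1/T)\cup((T-1)/T,1)$, combines this with Lemma~\ref{helplemma} to conclude $\sup|\mathcal{K}_T|/q=o(1)$ on those shrinking strips, asserts $\mathcal{K}_T\xrightarrow{D}\mathcal{K}$ by matching covariances, and then invokes Kolmogorov's zero--one law together with Theorem~A.7.3 of \citet{csorgHo97b} to get $\lim_{\delta\downarrow0}\sup_s|\mathcal{K}(s,\delta)|/q(\delta)=0$ almost surely (and symmetrically at $\delta=1$). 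You instead truncate at a fixed level $\epsilon$, apply continuous mapping on $[\epsilon,1-\epsilon]$, and handle the boundary for both $\mathcal{K}_T$ and $\mathcal{K}$ by an explicit dyadic maximal inequality via Borell--TIS, summing block probabilities against the integrand of $I_{0,1}$, followed by the three-$\epsilon$ assembly. Your route is more self-contained and makes the role of the integral condition transparent, at the cost of having to control the expected supremum in Borell--TIS (you gesture at an endpoint-plus-increment decomposition but would need a Dudley entropy bound or similar to close it). The paper's appeal to the zero--one law gives almost-sure vanishing at the endpoints directly, whereas you reach almost-sure finiteness of $M$ through the Gaussian tail bound; either suffices for the distributional conclusion in Equation~\ref{helplemma2result}.
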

	\begin{proof}[Proof of Lemma \ref{helplemma2}]
		We continue as in Theorem 2.6.1 from \citet{csorgHo97b}. From the definition of $K_T(s,\delta)$ in Equation \ref{KTsdelta} at the extreme ends of the domain,
		\begin{align}\label{lowerpiece}
			\sup_{0<\delta<1/T} \sup_{s\in[0,1]} \left|\sqrt{T}\;K_T(s,\delta)\right| /  q(\delta) = 0,
		\end{align}
		and
		\begin{align}\label{higherpiece}
			\sup_{(T-1)/T<\delta<1} \sup_{s\in[0,1]} \left|\sqrt{T}\;K_T(s,\delta)\right| /  q(\delta) = 0.
		\end{align}
		From the result of Lemma \ref{helplemma} in Equation \ref{helplemmaresult} with the condition $I_{0,1}(q,c)<\infty$ for all $c>0$, we can obtain Equations \ref{lowerpiece2} and \ref{higherpiece2}.
		\begin{align}
			\sup_{0<\delta<1/T} \sup_{s\in[0,1]} \left|\mathcal{K}_T(s,\delta)\right| /  q(\delta) = o(1)\label{lowerpiece2}\\
			\sup_{(T-1)/T<\delta<1} \sup_{s\in[0,1]} \left|\mathcal{K}_T(s,\delta)\right| /  q(\delta) = o(1)\label{higherpiece2}
		\end{align}
		Examining the covariance structure of $\mathcal{K}_T(s,\delta)$ will verify that
		\begin{align}\label{distconverge}
			\left\{\mathcal{K}_T(s,\delta),\;0\leq \delta \leq 1\right\} \xrightarrow{D} \left\{\mathcal{K}(s,\delta),\;0\leq \delta \leq 1\right\},
		\end{align}
		and via Kolmogorov's zero-one law and Theorem A.7.3 from \citet{csorgHo97b}, 
		\begin{align}
			\lim_{\delta\downarrow0}\sup_{s\in[0,1]}\left|\mathcal{K}(s,\delta)\right|/q(\delta)&=0 \hskip0.1in\text{a.s.}\label{lowerp2}\\
			\text{and}\hskip0.1in \lim_{\delta\uparrow1}\sup_{s\in[0,1]}\left|\mathcal{K}(s,\delta)\right|/q(\delta)&=0 \hskip0.1in\text{a.s.},\label{higherp2}
		\end{align}
		if and only if $I_{0,1}(q,c)<\infty$ for all $c>0$. Thus, we obtain the Lemma \ref{helplemma2} result stated in Equation \ref{helplemma2result}.
	\end{proof}
	\par
We now use Lemmas \ref{helplemma} and \ref{helplemma2} to prove Theorem \ref{mainthm}. 
	\begin{proof}[Proof of Theorem \ref{mainthm}]
		We combine the result from Lemma \ref{helplemma} in Equation \ref{helplemmaresult} with that of Lemma \ref{helplemma2} in Equation \ref{helplemma2result} and can write
		\begin{align}
			\max_{1 \leq t < T}\sup_{s\in[0,1]}\sqrt{T}\, \frac{K_T(s,t)} {q\left(\frac{t}{T} \right)} \xrightarrow{D} \sup_{\delta \in [0,1]}  \sup_{s\in[0,1]} \left| \mathcal{K}(s, \delta)\right| / q(\delta), 
		\end{align}
		as shown in in Equation \ref{mainthmresult} provided the necessary condition, $I_{0,1}(q,c)<\infty$ for all $c>0$, is met with $I_{0,1}(q,c)$ defined in Equation \ref{condition}. 
		We expand $q(\delta)$, defined in in Theorem \ref{mainthm}, as a piecewise function on $\delta\in[0,1]$.
		\begin{align}
			q(\delta) = \begin{cases}
				\kappa &\text{for $0 \leq \delta < \frac{1}{2} - \frac{1}{2}\sqrt{1-4\kappa^{1/\nu}}$,}\\[6pt]
				\delta^\nu(1-\delta)^\nu &\text{for $\frac{1}{2} - \frac{1}{2}\sqrt{1-4\kappa^{1/\nu}} \leq \delta \leq \frac{1}{2} + \frac{1}{2}\sqrt{1-4\kappa^{1/\nu}}$,}\\[6pt]
				\kappa &\text{for $\frac{1}{2} + \frac{1}{2}\sqrt{1-4\kappa^{1/\nu}} < \delta \leq 1$}
			\end{cases}
		\end{align}
		The integral from Equation \ref{condition} becomes
		\begin{align}
			I_{0,1}(q,c) &= \int_0^{\frac{1}{2} - \frac{1}{2}\sqrt{1-4\kappa^{1/\nu}}} \frac{1}{\delta(1-\delta)}\exp\left\{-c\kappa^2\delta^{-1}(1-\delta)^{-1}\right\}d\delta\nonumber\\[6pt]
			&\hskip0.1in+ \int_{\frac{1}{2} - \frac{1}{2}\sqrt{1-4\kappa^{1/\nu}}}^{\frac{1}{2} + \frac{1}{2}\sqrt{1-4\kappa^{1/\nu}}} \frac{1}{\delta(1-\delta)}\exp\left\{-c\delta^{2\nu-1}(1-\delta)^{2\nu-1}\right\}d\delta \nonumber\\[6pt]
			&\hskip0.1in+ \int_{\frac{1}{2} + \frac{1}{2}\sqrt{1-4\kappa^{1/\nu}}}^1 \frac{1}{\delta(1-\delta)}\exp\left\{-c\kappa^2\delta^{-1}(1-\delta)^{-1}\right\}d\delta,
		\end{align}
		and for any $c>0$ the boundary terms are finite with $\kappa>0$.
		\begin{align}
			\int_0^{\frac{1}{2} - \frac{1}{2}\sqrt{1-4\kappa^{1/\nu}}} \frac{1}{\delta(1-\delta)}\exp\left\{-c\kappa^2\delta^{-1}(1-\delta)^{-1}\right\}d\delta &< \infty\\[6pt]
			\int_{\frac{1}{2} + \frac{1}{2}\sqrt{1-4\kappa^{1/\nu}}}^1 \frac{1}{\delta(1-\delta)}\exp\left\{-c\kappa^2\delta^{-1}(1-\delta)^{-1}\right\}d\delta &<\infty
		\end{align}
		For the middle term,
		\begin{align}
			\int_{\frac{1}{2} - \frac{1}{2}\sqrt{1-4\kappa^{1/\nu}}}^{\frac{1}{2} + \frac{1}{2}\sqrt{1-4\kappa^{1/\nu}}} \frac{1}{\delta(1-\delta)}\exp\left\{-c\delta^{2\nu-1}(1-\delta)^{2\nu-1}\right\}d\delta &< \infty
			\label{middleterm}
		\end{align}
		provided $\nu<1/2$ if $\kappa \rightarrow 0$. For $\kappa>0$, the range of values satisfying Equation \ref{middleterm} includes $\nu=1/2$. Thus, from the specification of $q(\delta)$ in Theorem \ref{mainthm} with $\nu=1/2$ and $\kappa>0$, $I_{0,1}(q,c)<\infty$ for all $c>0$.
	\end{proof}
	\par 
	Proof of Theorem \ref{consistent} follows the outline of Theorem 2.1 from \citet{newey94} for consistency of extremum estimators. To meet the first three of the four conditions, we show the statistic for selection of a change point is uniquely maximized at the true change point $\tau$, the set used for estimation is compact and bounded away from the endpoints, and the statistic is continuous. To show the estimate of the statistic converges uniformly in probability to the true values, we employ the almost sure convergence of the empirical CDF of a stationary, ergodic sequence from the Glivenko-Cantelli Theorem and the definition of stochastic equicontinuity and Theorem 1 from \citet{newey91}, along with Lemmas \ref{newseq} and \ref{c4}. We include the definition of stochastic equicontinuity from \citet{newey91} and restate Theorem 2.1, without proof, from \citet{newey94} at the end of the section. We first proceed with our proof of Lemma \ref{newseq}.
	\begin{lemma}
		A sequence of functions $\hat{Q}_T(\delta)$ is stochastically equicontinuous if there exists $\alpha>0$, $\hat{A}_T=o(1)$, and $\hat{B}_T=\mathcal{O}(1)$ such that for all $\tilde{\delta},\delta\in\Delta$, $|\hat{Q}_T(\tilde{\delta})-\hat{Q}_T(\delta)|\leq \hat{A}_T + \hat{B}_T\|\tilde{\delta}-\delta\|^\alpha$.
		\label{newseq}
	\end{lemma}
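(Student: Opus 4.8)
The plan is to verify the Newey (1991) definition of stochastic equicontinuity (restated at the end of the section) directly from the H\"older-type envelope bound in the hypothesis, with a truncation argument to accommodate the fact that $\hat{B}_T$ is only $\mathcal{O}_p(1)$ rather than uniformly bounded. Fix $\epsilon>0$ and $\eta>0$; the goal is to exhibit $\nu>0$ such that
\[
\limsup_{T\to\infty} P\!\left(\sup_{\|\tilde{\delta}-\delta\|<\nu}\bigl|\hat{Q}_T(\tilde{\delta})-\hat{Q}_T(\delta)\bigr|>\epsilon\right)<\eta .
\]

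First I would use the stochastic order assumptions to control the two envelope coefficients. Since $\hat{B}_T=\mathcal{O}_p(1)$, pick $M<\infty$ with $\limsup_T P(\hat{B}_T>M)<\eta/2$; since $\hat{A}_T=o_p(1)$, $P(\hat{A}_T>\epsilon/2)\to 0$. Then set $\nu=(\epsilon/(2M))^{1/\alpha}$, so that $M\nu^\alpha=\epsilon/2$. On the event $\{\hat{B}_T\le M\}\cap\{\hat{A}_T\le\epsilon/2\}$, the hypothesized inequality $|\hat{Q}_T(\tilde{\delta})-\hat{Q}_T(\delta)|\le \hat{A}_T+\hat{B}_T\|\tilde{\delta}-\delta\|^\alpha$ holds for \emph{every} pair $\tilde{\delta},\delta\in\Delta$ with the same (random but $\delta$-free) coefficients, so taking the supremum over $\|\tilde{\delta}-\delta\|<\nu$ gives the uniform bound $\hat{A}_T+\hat{B}_T\nu^\alpha\le \epsilon/2+M\nu^\alpha=\epsilon$. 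Hence the bad event is contained in $\{\hat{B}_T>M\}\cup\{\hat{A}_T>\epsilon/2\}$, whose probability has $\limsup$ at most $\eta/2+0<\eta$. Taking $\limsup_T$ completes the verification, and since $\epsilon,\eta$ were arbitrary the sequence $\hat{Q}_T$ is stochastically equicontinuous.

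If instead the paper intends $\hat{A}_T,\hat{B}_T$ as deterministic, the argument is even shorter: $\hat{A}_T=o(1)$ yields $\hat{A}_T<\epsilon/2$ for all large $T$, $\hat{B}_T=\mathcal{O}(1)$ supplies the bound $M$, and the same $\nu$ makes the supremum at most $\epsilon$ deterministically for large $T$, which trivially implies the probabilistic statement. I expect no genuine obstacle here: this lemma is a packaging step that isolates the H\"older envelope so that, later in the proof of Theorem \ref{consistent}, one need only exhibit such $\hat{A}_T$ and $\hat{B}_T$ for the specific $\hat{Q}_T$ built from the empirical CDFs $\hat{\mathcal{F}}_{1:\delta T}$ and $\hat{\mathcal{F}}_{\delta T:T}$. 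The only point requiring care is matching the precise order of quantifiers in the Newey (1991) definition and routing the $\mathcal{O}_p(1)$ term through the $\eta/2$ truncation rather than treating $\hat{B}_T$ as bounded.
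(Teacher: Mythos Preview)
Your proposal is correct and follows essentially the same route as the paper: split $\varepsilon$ in half, use $\hat{A}_T=o_p(1)$ to control the first piece and $\hat{B}_T=\mathcal{O}_p(1)$ to extract a bound $M$ for the second, then take the neighborhood radius so that $\|\tilde\delta-\delta\|^\alpha\le \varepsilon/(2M)$. The only cosmetic difference is that the paper packages the bound as an explicit random variable $\Gamma_T(\varepsilon,\eta)=\hat A_T+\hat B_T\,\varepsilon/(2M)$ to match the exact wording of Newey's Definition~\ref{sedef}, whereas you verify the equivalent $\limsup$ formulation; the underlying argument is identical (and is, as the paper notes, the strategy of Lemma~2.9 in Newey--McFadden).
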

	\begin{proof}[Proof of Lemma \ref{newseq}]
		We follow the strategy of the proof for Lemma 2.9 in \citet{newey94}. Pick $\varepsilon, \eta >0$. By $\hat{A}_T=o(1)$, $\mathbb{P}(|\hat{A}_T|>\frac{\varepsilon}{2})<\frac{\eta}{2}$ for $T$ large enough. Likewise, by $\hat{B}_T=\mathcal{O}(1)$, there is $M$ such that $\mathbb{P}(|\hat{B}_T|>M)<\frac{\eta}{2}$ for all $T$ large enough. Let $\Gamma_T(\varepsilon,\eta)=\hat{A}_T+\hat{B}_T\frac{\varepsilon}{2M}$ and $\mathcal{N}(\delta,\varepsilon,\eta)=\{\tilde{\delta}:\|\tilde{\delta}-\delta\|^\alpha \leq \frac{\varepsilon}{2M}\}$. Then, $\mathbb{P}(|\Gamma_T(\varepsilon,\eta)|>\varepsilon)=\mathbb{P}(|\hat{A}_T+\hat{B}_T\frac{\varepsilon}{2M}|>\varepsilon)\leq \mathbb{P}(|\hat{A}_T|+|\hat{B}_T\frac{\varepsilon}{2M}|>\varepsilon) \leq \mathbb{P}(|\hat{A}_T|>\frac{\varepsilon}{2})+\mathbb{P}(|\hat{B}_T|>M)<\eta$ and for all $\tilde{\delta},\delta\in\mathcal{N}(\delta,\varepsilon,\eta)$, $|\hat{Q}_T(\tilde{\delta})-\hat{Q}_T(\delta)|\leq \hat{A}_T+\hat{B}_T\|\tilde{\delta}-\delta\|^{\alpha}\leq \Gamma_T(\varepsilon,\eta)$.
	\end{proof}
	\par
	For the statement and proof of Lemma \ref{c4}, we scale the time domain to $\delta \in[0,1]$ such that $\delta = t / T$, with the true change point at $\delta_0 = \tau / T$, and 
	define $\mathcal{F}_{1:\delta T}(s)$ and $\mathcal{F}_{\delta T:T}(s)$ as the distribution functions on the intervals $(0,\delta]$ and $[\delta, 1)$, respectively, with $\hat{\mathcal{F}}_{1:\delta T}(s)$ and $\hat{\mathcal{F}}_{\delta T:T}(s)$ as their corresponding empirical estimates. Suppose the true change point occurs at $\delta_0\in\Delta$, and divides the sequence into two distinct pieces with distribution functions $\mathcal{F}_1(s)=\mathcal{F}_{1:\delta_0 T}(s)$ and $\mathcal{F}_T(s)=\mathcal{F}_{\delta_0 T: T}(s)$, where $\mathcal{F}_1(s_0)  \neq  \mathcal{F}_T(s_0)$ for some $s_0\in[0,1]$. Define the supremum of the difference between the two distributions $\theta>0$, where the quantity is maximized at $s_0$.
	\begin{align}
		\theta &= \sup_{s\in[0,1]}  \left|\mathcal{F}_1(s) - \mathcal{F}_T(s)\right| = \left|\mathcal{F}_1(s_0) - \mathcal{F}_T(s_0)\right|
		\label{theta}
	\end{align} 
	\vskip0in
	Write $Q_0(\delta)$ as below for $\delta \in \Delta = \left[\frac{1}{2}-\frac{1}{2}\sqrt{1-4\kappa^2}, \frac{1}{2} + \frac{1}{2}\sqrt{1-4\kappa^2}\right]$, $q(\delta) = \max\{\delta^{1/2}(1-\delta)^{1/2}, \kappa\}$, and $\kappa>0$ a small constant.
	\begin{align}
		Q_0(\delta) &= \frac{1}{q(\delta)}\left[\frac{\delta T \left(T-\delta T\right)}{T^2}\right] \sup_{s\in[0,1]} \left|\mathcal{F}_{1:\delta T}(s) - \mathcal{F}_{\delta T:T}(s)\right|
		\label{q0}
	\end{align}
	We construct $\hat{Q}_T(\delta)$ in Equation \ref{qt} comparably to $Q_0(\delta)$ while including the empirical CDFs $\hat{\mathcal{F}}_{1:\delta T}(s)$ and $\hat{\mathcal{F}}_{\delta T:T}(s)$.
	\begin{align}
		\hat{Q}_T(\delta) &= \frac{1}{q\left(\delta \right)}\left[\frac{\delta T(T-\delta T)}{T^2} \right] \sup_{s\in[0,1]}\left|\hat{\mathcal{F}}_{1:\delta T}(s) - \hat{\mathcal{F}}_{\delta T:T}(s)\right|
		\label{qt}
	\end{align}
	\begin{lemma}\label{c4}
		Suppose the sequence $S_t$, $1,\ldots,T$ divides into two stationary ergodic pieces on either side of the change point $\tau$, where $1\leq \tau < T$, and $\mathcal{F}_{1:\tau}(s_0) \neq \mathcal{F}_{(\tau+1):T}(s_0)$ for some $s_0\in[0,1]$. Then, $\hat{Q}_T(\delta)$ from Equation \ref{qt} converges uniformly in probability to $Q_0(\delta)$ from Equation \ref{q0} on the interval 
		\begin{align}
			\delta,\delta_0\in\Delta = \left[\frac{1}{2}-\frac{1}{2}\sqrt{1-4\kappa^2}, \frac{1}{2} + \frac{1}{2}\sqrt{1-4\kappa^2}\right]
		\end{align} with $q(\delta) = \max\{\delta^{1/2}(1-\delta)^{1/2}, \kappa\}$ for and some small $\kappa>0$.
		
	\end{lemma}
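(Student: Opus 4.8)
The plan is to deduce the uniform convergence $\sup_{\delta\in\Delta}|\hat{Q}_T(\delta)-Q_0(\delta)|\xrightarrow{P}0$ from pointwise convergence together with stochastic equicontinuity of $\{\hat{Q}_T\}$, invoking Theorem 1 of \citet{newey91} with Lemma \ref{newseq} supplying the equicontinuity. Two structural observations make this tractable. First, on $\Delta$ we have $q(\delta)\ge\kappa>0$ and $\delta(1-\delta)\le\frac14$, so the deterministic prefactor $g(\delta):=\delta(1-\delta)/q(\delta)$ is bounded above and Lipschitz on $\Delta$; thus $\hat{Q}_T(\delta)=g(\delta)\,D_T(\delta)$ and $Q_0(\delta)=g(\delta)\,D_0(\delta)$, where $D_T(\delta)=\sup_{s\in[0,1]}|\hat{\mathcal{F}}_{1:\delta T}(s)-\hat{\mathcal{F}}_{\delta T:T}(s)|$ and $D_0(\delta)$ is its population analogue. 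Second, $\Delta$ is bounded away from $0$ and $1$, with $\delta\ge\delta_{\min}:=\frac12-\frac12\sqrt{1-4\kappa^2}>0$ and $1-\delta\ge\delta_{\min}$, so both segments $(0,\delta T]$ and $(\delta T,T]$ always contain at least order $\delta_{\min}T\to\infty$ points; this is what lets the empirical CDFs converge and controls their sensitivity to $\delta$.

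For the pointwise step, fix $\delta\in\Delta$ and take $\delta\le\delta_0$ (the case $\delta\ge\delta_0$ is symmetric, with the roles of the two segments interchanged). The block $S_1,\ldots,S_{\lfloor\delta T\rfloor}$ lies entirely in the first stationary ergodic piece, so the Glivenko--Cantelli theorem for stationary ergodic sequences gives $\sup_{s}|\hat{\mathcal{F}}_{1:\delta T}(s)-\mathcal{F}_1(s)|\to0$ a.s. The block $S_{\lfloor\delta T\rfloor+1},\ldots,S_T$ splits at $\tau$ into $\sim(\delta_0-\delta)T$ points from the first piece and $\sim(1-\delta_0)T$ from the second; writing $\hat{\mathcal{F}}_{\delta T:T}$ as the corresponding convex combination of the two sub-block empirical CDFs and applying Glivenko--Cantelli to each sub-block together with convergence of the deterministic weights, we get $\sup_{s}|\hat{\mathcal{F}}_{\delta T:T}(s)-\mathcal{F}_{\delta T:T}(s)|\to0$ a.s., where $\mathcal{F}_{\delta T:T}=\frac{\delta_0-\delta}{1-\delta}\mathcal{F}_1+\frac{1-\delta_0}{1-\delta}\mathcal{F}_T$ is exactly the population law on $[\delta,1)$. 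Since $x\mapsto\sup_s|x(s)|$ is $1$-Lipschitz in the supremum norm, $|D_T(\delta)-D_0(\delta)|\to0$ a.s., and multiplying by the bounded deterministic $g(\delta)$ yields $\hat{Q}_T(\delta)\to Q_0(\delta)$ a.s., hence in probability, for every $\delta\in\Delta$; one checks along the way that $D_0$, hence $Q_0$, is continuous on $\Delta$.

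For stochastic equicontinuity I verify the hypothesis of Lemma \ref{newseq}. Adding or deleting $k$ points from a sample of size $n$ moves its empirical CDF by at most $k/n$ in supremum norm; passing from $\delta$ to $\tilde{\delta}$ changes $\lfloor\delta T\rfloor$ by at most $T|\tilde{\delta}-\delta|+1$ while the sample size stays $\ge\delta_{\min}T$, so $\sup_{s}|\hat{\mathcal{F}}_{1:\tilde{\delta} T}(s)-\hat{\mathcal{F}}_{1:\delta T}(s)|\le\delta_{\min}^{-1}|\tilde{\delta}-\delta|+(\delta_{\min}T)^{-1}$, and the same bound holds for the upper-segment empirical CDF. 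Combining these with $D_T\le1$, the Lipschitz bound on $g$, and the triangle inequality, a short computation gives $|\hat{Q}_T(\tilde{\delta})-\hat{Q}_T(\delta)|\le\hat{A}_T+\hat{B}_T\|\tilde{\delta}-\delta\|$ with $\hat{A}_T=\mathcal{O}(1/T)=o(1)$ (collecting the rounding contributions) and $\hat{B}_T=\mathcal{O}(1)$, so Lemma \ref{newseq} makes $\{\hat{Q}_T\}$ stochastically equicontinuous on $\Delta$. Since $\Delta$ is compact, $Q_0$ is continuous, $\hat{Q}_T\xrightarrow{P}Q_0$ pointwise, and $\{\hat{Q}_T\}$ is stochastically equicontinuous, Theorem 1 of \citet{newey91} delivers $\sup_{\delta\in\Delta}|\hat{Q}_T(\delta)-Q_0(\delta)|\xrightarrow{P}0$, which is the assertion of Lemma \ref{c4}.

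The main obstacle is the pointwise step in the regime where the trial split $\delta$ straddles the true change $\delta_0$ relative to the block being averaged: one must correctly identify the limiting population object as a convex combination of $\mathcal{F}_1$ and $\mathcal{F}_T$ (with weights $\frac{\delta_0-\delta}{1-\delta}$ and $\frac{1-\delta_0}{1-\delta}$, and symmetrically on the other side) rather than as either marginal alone, and check that the empirical weights converge fast enough that their products with the converging sub-block empirical CDFs still converge. The remaining care is in handling the $\lfloor\cdot\rfloor$ rounding uniformly in $\delta$ so that it contributes only to the $o(1)$ term $\hat{A}_T$ in the equicontinuity bound.
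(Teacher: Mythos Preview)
Your overall architecture matches the paper: invoke Theorem~1 of \citet{newey91} by establishing pointwise convergence $\hat{Q}_T(\delta)\to Q_0(\delta)$ and stochastic equicontinuity via Lemma~\ref{newseq}, with the pointwise step handled by writing $\hat{\mathcal{F}}_{\delta T:T}$ as a convex combination of sub-block empirical CDFs (weights $\frac{\delta_0-\delta}{1-\delta}$ and $\frac{1-\delta_0}{1-\delta}$) and applying the stationary-ergodic Glivenko--Cantelli theorem to each piece. That part is essentially the paper's argument.

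Where you genuinely diverge is in the equicontinuity verification. The paper splits into six cases according to the ordering of $\tilde{\delta},\delta,\delta_0$, inserts $\mathcal{F}_1,\mathcal{F}_T$ throughout, and produces bounds of the form $|\hat{Q}_T(\tilde{\delta})-\hat{Q}_T(\delta)|\le \hat{A}_T^{i}+\hat{B}_T|\tilde{\delta}-\delta|^{\kappa}$ with \emph{random} $\hat{A}_T^{i}$ built from Glivenko--Cantelli suprema and $\hat{B}_T=C\theta$ with $C=(2/\kappa)\sqrt{1-\kappa^2}+1$, taking $\alpha=\kappa$. You instead use the purely combinatorial fact that moving from $\lfloor\delta T\rfloor$ to $\lfloor\tilde{\delta}T\rfloor$ perturbs an empirical CDF by at most $(|\tilde{\delta}-\delta|T+1)/(\delta_{\min}T)$ in sup-norm, together with $|g(\tilde{\delta})-g(\delta)|\le(2\kappa)^{-1}|\tilde{\delta}-\delta|$ on $\Delta$ and the trivial bounds $g\le\frac12$, $D_T\le1$. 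This yields $\alpha=1$, a \emph{deterministic} $\hat{A}_T=O(1/T)$ from rounding, and a deterministic constant $\hat{B}_T$, which feed directly into Lemma~\ref{newseq}. Your route is shorter, avoids the six-case analysis entirely, and sidesteps the delicate issue that the paper's $\hat{A}_T^{i}$ contain terms like $\sup_s|\hat{\mathcal{F}}_{\tilde{\delta}T:\delta T}(s)-\mathcal{F}_1(s)|$ over segments whose length is not bounded below uniformly in $\tilde{\delta},\delta$. What the paper's decomposition buys is an explicit connection between the equicontinuity constants and $\theta=\sup_s|\mathcal{F}_1(s)-\mathcal{F}_T(s)|$, which your argument does not track; but for the bare uniform-convergence conclusion of Lemma~\ref{c4} your more elementary bound suffices.
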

	\begin{proof}[Proof of Lemma \ref{c4}]
		We use Theorem 1 from \citet{newey91} and show uniform convergence in probability with pointwise convergence and stochastic equicontinuity. For any $\delta,\delta_0\in\Delta$, we can write the distributions $\mathcal{F}_{1:\delta T}(s)$ and $\mathcal{F}_{\delta T:T}(s)$ as a mixture between the distributions $\mathcal{F}_1(s)$ and $\mathcal{F}_T(s)$.
		\begin{align}
			\mathcal{F}_{1:\delta T}(s) &= \mathbf{1}\{\delta \leq \delta_0\}\mathcal{F}_1(s) + \mathbf{1}\{\delta > \delta_0\}\left[\frac{\delta_0}{\delta}\mathcal{F}_1(s) + \frac{\delta-\delta_0}{\delta}\mathcal{F}_T(s)\right]\label{d1}\\[8pt]
			\mathcal{F}_{\delta T: T}(s) &= \mathbf{1}\{\delta \leq \delta_0\}\left[\frac{\delta_0-\delta}{1-\delta}\mathcal{F}_1(s) + \frac{
				1-\delta_0}{1-\delta}\mathcal{F}_T(s)\right] + \mathbf{1}\{\delta > \delta_0\}\mathcal{F}_T(s) \label{d2} 
		\end{align}
		\vskip0in
		Taking the supremum of the difference between the distributions in Equations \ref{d1} and \ref{d2} as it appears in $Q_0(\delta)$,
		\begin{align}
			\sup_{s\in[0,1]} &\left|\mathcal{F}_{1:\delta T}(s) - \mathcal{F}_{\delta T: T}(s)\right| \nonumber\\
			&= \sup_{s\in[0,1]} \bigg|  \mathbf{1}\{\delta \leq \delta_0\}\mathcal{F}_1(s) + \mathbf{1}\{\delta > \delta_0\}\left[\frac{\delta_0}{\delta}\mathcal{F}_1(s) + \frac{\delta-\delta_0}{\delta}\mathcal{F}_T(s)\right] \nonumber\\
			& \hskip0.4in -\mathbf{1}\{\delta \leq \delta_0\}\left[\frac{\delta_0-\delta}{1-\delta}\mathcal{F}_1(s) + \frac{
				1-\delta_0}{1-\delta}\mathcal{F}_T(s)\right] + \mathbf{1}\{\delta > \delta_0\}\mathcal{F}_T(s) \bigg|\\[8pt]
			&= \sup_{s\in[0,1]} \bigg| \mathbf{1}\{\delta \leq \delta_0\}\left[\frac{1-\delta_0}{1-\delta}\mathcal{F}_1(s) - \frac{
				1-\delta_0}{1-\delta}\mathcal{F}_T(s)\right] \nonumber\\
			&\hskip0.4in -\mathbf{1}\{\delta > \delta_0\}\left[\frac{\delta_0}{\delta}\mathcal{F}_1(s) - \frac{\delta_0}{\delta}\mathcal{F}_T(s)\right]\bigg| \\[8pt]
			&= \sup_{s\in[0,1]} \bigg| \left[\mathbf{1}\{\delta \leq \delta_0\}\frac{1-\delta_0}{1-\delta} - \mathbf{1}\{\delta > \delta_0\}\frac{\delta_0}{\delta}\right]\big(\mathcal{F}_1(s) -\mathcal{F}_T(s)\big)\bigg| \\[8pt]
			&= \left|\mathbf{1}\{\delta \leq \delta_0\}\frac{1-\delta_0}{1-\delta} - \mathbf{1}\{\delta > \delta_0\}\frac{\delta_0}{\delta}\right|\sup_{s\in[0,1]} \left|\mathcal{F}_1(s) - \mathcal{F}_T(s)\right| \\[8pt]
			&= \left|\mathbf{1}\{\delta \leq \delta_0\}\frac{1-\delta_0}{1-\delta} - \mathbf{1}\{\delta > \delta_0\}\frac{\delta_0}{\delta}\right|\theta.  \label{result1}
		\end{align}
		Considering the leading term in Equation \ref{q0},
		\begin{align}
			Q_0(\delta) =
			\begin{cases}
				{\displaystyle \theta \frac{\delta^{1/2}(1-\delta_0)}{(1-\delta)^{1/2}}} &\text{for $\delta,\delta_0 \in\Delta$ and $\delta \leq \delta_0$,}\\[14pt]
				{\displaystyle \theta \frac{\delta_0(1-\delta)^{1/2}}{\delta^{1/2}}} &\text{for $\delta,\delta_0 \in\Delta$ and $\delta > \delta_0$.}
			\end{cases}
			\label{q0pieces}
		\end{align}
		\vskip0in
		We expand the supremum term of $\hat{Q}_T(\delta)$ in a similar fashion depending on the relationship between $\delta$ and $\delta_0$. For $\delta\leq \delta_0$,
		\begin{align}
			\sup_{s\in[0,1]}\left|\hat{\mathcal{F}}_{1:\delta T}(s) - \hat{\mathcal{F}}_{\delta T:T}(s)\right| &= \sup_{s\in[0,1]}\left|\hat{\mathcal{F}}_{1:\delta T}(s) - \frac{\delta_0-\delta}{1-\delta}\hat{\mathcal{F}}_{\delta T:\delta_0T}(s) -  \frac{1-\delta_0}{1-\delta}\hat{\mathcal{F}}_{\delta_0 T:T}(s) \right|\nonumber\\[8pt]
			&= \sup_{s\in[0,1]}\bigg|\left(\hat{\mathcal{F}}_{1:\delta T}(s)-\mathcal{F}_1(s)\right) + \frac{\delta_0-\delta}{1-\delta}\left(\mathcal{F}_1(s)-\hat{\mathcal{F}}_{\delta T:\delta_0T}(s)\right) \nonumber\\[6pt]
			&\hskip0.2in+ \frac{1-\delta_0}{1-\delta}\left(\mathcal{F}_T(s)-\hat{\mathcal{F}}_{\delta_0 T:T}(s)\right) \bigg|\\[8pt]
			&\leq \sup_{s\in[0,1]}\left|\hat{\mathcal{F}}_{1:\delta T}(s)-\mathcal{F}_1(s)\right| + \frac{\delta_0-\delta}{1-\delta}\sup_{s\in[0,1]}\left|\hat{\mathcal{F}}_{\delta T:\delta_0 T}(s)-\mathcal{F}_1(s)\right|\nonumber\\[6pt]
			&\hskip0.2in + \frac{1-\delta_0}{1-\delta}\sup_{s\in[0,1]}\left|\hat{\mathcal{F}}_{\delta_0 T: T}(s)-\mathcal{F}_T(s)\right|\nonumber\\[6pt]
			&\hskip0.2in + \frac{1-\delta_0}{1-\delta}\sup_{s\in[0,1]}\left|\mathcal{F}_1(s)-\mathcal{F}_T(s)\right|.\label{qtn1}
		\end{align}
		And for $\delta > \delta_0$,
		\begin{align}
			\sup_{s\in[0,1]}\left|\hat{\mathcal{F}}_{1:\delta T}(s) - \hat{\mathcal{F}}_{\delta T:T}(s)\right| &\leq \frac{\delta_0}{\delta}\sup_{s\in[0,1]}\left|\hat{\mathcal{F}}_{1:\delta_0 T}(s)-\mathcal{F}_1(s)\right| + \frac{\delta-\delta_0}{\delta}\sup_{s\in[0,1]}\left|\hat{\mathcal{F}}_{\delta_0 T:\delta T}(s)-\mathcal{F}_T(s)\right|\nonumber\\[6pt]
			&\hskip0.2in + \sup_{s\in[0,1]}\left|\hat{\mathcal{F}}_{\delta T: T}(s)-\mathcal{F}_T(s)\right| + \frac{\delta_0}{\delta}\sup_{s\in[0,1]}\left|\mathcal{F}_1(s)-\mathcal{F}_T(s)\right|.\label{qtn2}
		\end{align}
		Using an extension of the Glivenko-Cantelli theorem to stationary and ergodic sequences, the supremum term almost surely converges to a scaled difference in the true distribution functions as the number of time points in each section gets large \citep{tucker59, yu93, dehling02}.
		\begin{align}
			\mathbb{P}\left[\left(\sup_{s\in[0,1]}\left|\hat{\mathcal{F}}_{1:\delta T}(s)  - \hat{\mathcal{F}}_{\delta T:T}(s)\right|-\frac{1-\delta_0}{1-\delta}\theta\right)\rightarrow 0\right]  &= 1
			\hskip0.1in\text{for $\delta \leq \delta_0$,} \label{n3}\\[8pt]
			\text{and} \hskip0.1in\mathbb{P}\left[\left(\sup_{s\in[0,1]}\left|\hat{\mathcal{F}}_{1:\delta T}(s)  - \hat{\mathcal{F}}_{\delta T:T}(s)\right|-\frac{\delta_0}{\delta}\theta\right)\rightarrow 0\right] &= 1\hskip0.1in\text{for $\delta > \delta_0$.}\label{n4}
		\end{align}
		Including the leading coefficient,
		\begin{align}
			\hat{Q}_T(\delta) \xrightarrow{a.s.} 
			\begin{cases}
				{\displaystyle \theta\frac{\delta^{1/2}(1-\delta_0)}{(1-\delta)^{1/2}}} &\text{for $\delta\leq \delta_0$,}\\[14pt]
				{\displaystyle\theta\frac{\delta_0(1-\delta)^{1/2}}{\delta^{1/2}}} &\text{for $\delta > \delta_0$,}
			\end{cases}
		\end{align}
		and we obtain $\hat{Q}_T(\delta) \rightarrow Q_0(\delta)$ with probability 1 for all $\delta,\delta_0 \in \Delta$. 
		\vskip0in
		To show $\hat{Q}_T(\delta)$ is stochastically equicontinuous, we use the the structure of Lemma \ref{newseq}. Define $\alpha=\kappa$, $\hat{B}_T = C\theta$, where $C=(2/\kappa)\sqrt{1-\kappa^2} + 1$, and $\hat{A}_T$ as below in Equations \ref{hatAT1} through \ref{hatAT6} depending on the relationship between $\tilde{\delta}$, $\delta$, and $\delta_0$ where each is contained in $\Delta$. For each case $i=1,\ldots,6$, $\hat{A}^i_T=o(1)$ is a weaker condition than the extended Glivenko-Cantelli result, and $\hat{B}_T=\mathcal{O}(1)$ is verified with $C<\infty$, $0< \theta \leq 1$ \citep{tucker59, yu93}.
		\begin{align}
			\hat{A}^1_T &= \sup_{s\in[0,1]}\left|\hat{\mathcal{F}}_{1:\tilde{\delta}T}(s) - \mathcal{F}_1(s)\right| + \frac{3}{2}\sup_{s\in[0,1]}\left|\hat{\mathcal{F}}_{\tilde{\delta}T:\delta T}(s) - \mathcal{F}_1(s)\right|\nonumber\\[6pt]
			&\hskip0.3in + \sup_{s\in[0,1]}\left|\hat{\mathcal{F}}_{\delta T:\delta_0 T}(s) - \mathcal{F}_1(s)\right| +
			\sup_{s\in[0,1]}\left|\hat{\mathcal{F}}_{\delta_0 T:T}(s) - \mathcal{F}_T(s)\right| \hskip0.1in\text{for $\tilde{\delta}<\delta\leq \delta_0$,}\label{hatAT1}\\[10pt]
			\hat{A}^2_T &= \sup_{s\in[0,1]}\left|\hat{\mathcal{F}}_{1:\delta T}(s) - \mathcal{F}_1(s)\right| + \frac{3}{2}\sup_{s\in[0,1]}\left|\hat{\mathcal{F}}_{\delta T:\tilde{\delta} T}(s) - \mathcal{F}_1(s)\right|\nonumber\\[6pt]
			&\hskip0.3in + \sup_{s\in[0,1]}\left|\hat{\mathcal{F}}_{\tilde{\delta}T:\delta_0 T}(s) - \mathcal{F}_1(s)\right| +
			\sup_{s\in[0,1]}\left|\hat{\mathcal{F}}_{\delta_0 T:T}(s) - \mathcal{F}_T(s)\right| \hskip0.1in\text{for $\delta<\tilde{\delta}\leq \delta_0$,}\label{hatAT2}\\[10pt]
			\hat{A}^3_T &= \sup_{s\in[0,1]}\left|\hat{\mathcal{F}}_{1:\tilde{\delta} T}(s) - \mathcal{F}_1(s)\right| + \frac{3}{2}\sup_{s\in[0,1]}\left|\hat{\mathcal{F}}_{\tilde{\delta} T:\delta_0 T}(s) - \mathcal{F}_1(s)\right|\nonumber\\[6pt]
			&\hskip0.3in + \frac{3}{2}\sup_{s\in[0,1]}\left|\hat{\mathcal{F}}_{\delta_0 T:\delta T}(s) - \mathcal{F}_T(s)\right| +
			\sup_{s\in[0,1]}\left|\hat{\mathcal{F}}_{\delta T:T}(s) - \mathcal{F}_T(s)\right| \hskip0.1in\text{for $\tilde{\delta}\leq \delta_0< \delta$,}\label{hatAT3}\\[10pt]
			\hat{A}^4_T &=\sup_{s\in[0,1]}\left|\hat{\mathcal{F}}_{1:\delta T}(s) - \mathcal{F}_1(s)\right| + \frac{3}{2}\sup_{s\in[0,1]}\left|\hat{\mathcal{F}}_{\delta T:\delta_0 T}(s) - \mathcal{F}_1(s)\right|\nonumber\\[6pt]
			&\hskip0.3in + \frac{3}{2}\sup_{s\in[0,1]}\left|\hat{\mathcal{F}}_{\delta_0 T:\tilde{\delta} T}(s) - \mathcal{F}_T(s)\right| +
			\sup_{s\in[0,1]}\left|\hat{\mathcal{F}}_{\tilde{\delta} T:T}(s) - \mathcal{F}_T(s)\right| \hskip0.1in\text{for $\delta\leq \delta_0< \tilde{\delta}$,}\label{hatAT4}
		\end{align}
		\begin{align}
			\hat{A}^5_T &=\sup_{s\in[0,1]}\left|\hat{\mathcal{F}}_{1:\delta_0 T}(s) - \mathcal{F}_1(s)\right| + \sup_{s\in[0,1]}\left|\hat{\mathcal{F}}_{\delta_0 T:\tilde{\delta} T}(s) - \mathcal{F}_T(s)\right|\nonumber\\[6pt]
			&\hskip0.3in + \frac{3}{2}\sup_{s\in[0,1]}\left|\hat{\mathcal{F}}_{\tilde{\delta} T:\delta T}(s) - \mathcal{F}_T(s)\right| +
			\sup_{s\in[0,1]}\left|\hat{\mathcal{F}}_{\delta T:T}(s) - \mathcal{F}_T(s)\right| \hskip0.1in\text{for $\delta_0< \tilde{\delta}< \delta$,}\label{hatAT5}\\[10pt]
			\hat{A}^6_T &=\sup_{s\in[0,1]}\left|\hat{\mathcal{F}}_{1:\delta_0 T}(s) - \mathcal{F}_1(s)\right| + \sup_{s\in[0,1]}\left|\hat{\mathcal{F}}_{\delta_0 T:\delta T}(s) - \mathcal{F}_T(s)\right|\nonumber\\[6pt]
			&\hskip0.3in + \frac{3}{2}\sup_{s\in[0,1]}\left|\hat{\mathcal{F}}_{\delta T:\tilde{\delta} T}(s) - \mathcal{F}_T(s)\right| +
			\sup_{s\in[0,1]}\left|\hat{\mathcal{F}}_{\tilde{\delta} T:T}(s) - \mathcal{F}_T(s)\right| \hskip0.1in\text{for $\delta_0< \delta< \tilde{\delta}$}\label{hatAT6}
		\end{align}
		\vskip0in
		We simplify the form of $\hat{Q}_T$ in Equation \ref{qt} and write $q(\delta)=\delta^{1/2}(1-\delta)^{1/2}$ for the domain restriction $\tilde{\delta},\delta\in\Delta$ such that
		\begin{align}
			\left|\hat{Q}_T(\tilde{\delta}) - \hat{Q}_T(\delta)\right| &= \bigg|q(\tilde{\delta}) \sup_{s\in[0,1]}\left|\hat{\mathcal{F}}_{1:\tilde{\delta} T}(s) - \hat{\mathcal{F}}_{\tilde{\delta} T:T}(s)\right| \nonumber\\[8pt]
			&\hskip0.3in- q(\delta)\sup_{s\in[0,1]}\left|\hat{\mathcal{F}}_{1:\delta T}(s) - \hat{\mathcal{F}}_{\delta T:T}(s)\right|\bigg|,\label{difference}
		\end{align}
		and then alter the expression to a convenient final form shown in Equation \ref{finaldiff}, where all absolute value terms are separated for easy manipulation.
		\begin{align}
			&\left|\hat{Q}_T(\tilde{\delta}) - \hat{Q}_T(\delta)\right| \nonumber\\
			&\hskip0.3in=\bigg|q(\tilde{\delta}) \sup_{s\in[0,1]}\left|\hat{\mathcal{F}}_{1:\tilde{\delta} T}(s) - \hat{\mathcal{F}}_{1:\delta T}(s) + \hat{\mathcal{F}}_{1:\delta T}(s) - \hat{\mathcal{F}}_{\tilde{\delta} T:T}(s) + \hat{\mathcal{F}}_{\delta T:T}(s) - \hat{\mathcal{F}}_{\delta T:T}(s)\right| \nonumber\\[8pt]
			&\hskip0.6in- q(\delta)\sup_{s\in[0,1]}\left|\hat{\mathcal{F}}_{1:\delta T}(s) - \hat{\mathcal{F}}_{\delta T:T}(s)\right|\bigg|\\[12pt]
			&\hskip0.3in\leq \bigg| q(\tilde{\delta})\sup_{s\in[0,1]}\left|\hat{\mathcal{F}}_{1:\tilde{\delta} T}(s) - \hat{\mathcal{F}}_{1:\delta T}(s)\right| + q(\tilde{\delta})\sup_{s\in[0,1]}\left|\hat{\mathcal{F}}_{\tilde{\delta} T:T}(s) - \hat{\mathcal{F}}_{\delta T:T}(s)\right|\nonumber\\[8pt]
			&\hskip0.6in + \left(q(\tilde{\delta}) - q(\delta)\right)\sup_{s\in[0,1]}\left|\hat{\mathcal{F}}_{1:\delta T}(s) - \hat{\mathcal{F}}_{\delta T:T}(s)\right|\bigg|\\[12pt]
			&\hskip0.3in\leq  q(\tilde{\delta})\sup_{s\in[0,1]}\left|\hat{\mathcal{F}}_{1:\tilde{\delta} T}(s) - \hat{\mathcal{F}}_{1:\delta T}(s)\right| + q(\tilde{\delta})\sup_{s\in[0,1]}\left|\hat{\mathcal{F}}_{\tilde{\delta} T:T}(s) - \hat{\mathcal{F}}_{\delta T:T}(s)\right|\nonumber\\[8pt]
			&\hskip0.6in + \left|q(\tilde{\delta}) - q(\delta)\right|\sup_{s\in[0,1]}\left|\hat{\mathcal{F}}_{1:\delta T}(s) - \hat{\mathcal{F}}_{\delta T:T}(s)\right|\label{finaldiff}
		\end{align}
		For each piece in Equation \ref{finaldiff}, we examine the result based on the relationship between $\tilde{\delta}$, $\delta$, and $\delta_0$. The full exposition is given below for the case where $\tilde{\delta}<\delta\leq \delta_0$ to obtain $\hat{A}^1_T$, and abbreviated versions are included for the remaining pieces $\hat{A}_T^2$ through $\hat{A}_T^6$ that follow a similar structure. 
		\vskip0in
		We proceed under condition one of six, where $\tilde{\delta}<\delta\leq \delta_0$. For the terms in Equation \ref{finaldiff}, we separate the form into distinct pieces of the empirical CDFs and introduce the true distribution functions $\mathcal{F}_1(s)$ and $\mathcal{F}_T(s)$.
		\begin{align}
			&q(\tilde{\delta})\sup_{s\in[0,1]}\left|\hat{\mathcal{F}}_{1:\tilde{\delta} T}(s) - \hat{\mathcal{F}}_{1:\delta T}(s)\right| \nonumber\\[8pt]
			&\hskip0.3in = q(\tilde{\delta})\sup_{s\in[0,1]}\left|\hat{\mathcal{F}}_{1:\tilde{\delta} T}(s) - \frac{\tilde{\delta}}{\delta}\hat{\mathcal{F}}_{1:\tilde{\delta} T}(s) - \frac{\delta-\tilde{\delta}}{\delta}\hat{\mathcal{F}}_{\tilde{\delta}T:\delta T}(s)\right|\\[8pt]
			&\hskip0.3in = q(\tilde{\delta})\sup_{s\in[0,1]}\left|\frac{\delta-\tilde{\delta}}{\delta}\hat{\mathcal{F}}_{1:\tilde{\delta} T}(s) - \frac{\delta-\tilde{\delta}}{\delta}\hat{\mathcal{F}}_{\tilde{\delta}T:\delta T}(s)\right|\\[8pt]
			&\hskip0.3in = q(\tilde{\delta})\sup_{s\in[0,1]}\left|\frac{\delta-\tilde{\delta}}{\delta}\left(\hat{\mathcal{F}}_{1:\tilde{\delta} T}(s)-\mathcal{F}_1(s)\right) - \frac{\delta-\tilde{\delta}}{\delta}\left(\hat{\mathcal{F}}_{\tilde{\delta}T:\delta T}(s)-\mathcal{F}_1(s)\right)\right|\\[8pt]
			&\hskip0.3in \leq q(\tilde{\delta})\sup_{s\in[0,1]}\left|\hat{\mathcal{F}}_{1:\tilde{\delta}T}(s) - \mathcal{F}_1(s)\right| + q(\tilde{\delta})\sup_{s\in[0,1]}\left|\hat{\mathcal{F}}_{\tilde{\delta}T:\delta T}(s)-\mathcal{F}_1(s)\right|\label{term1case1}
		\end{align}
		For the second term,
		\begin{align}
			&q(\tilde{\delta})\sup_{s\in[0,1]}\left|\hat{\mathcal{F}}_{\tilde{\delta} T:T}(s) - \hat{\mathcal{F}}_{\delta T:T}(s)\right|\nonumber\\[8pt]
			&\hskip0.3in = q(\tilde{\delta})\sup_{s\in[0,1]}\left|\frac{\delta-\tilde{\delta}}{1-\tilde{\delta}}\hat{\mathcal{F}}_{\tilde{\delta}T:\delta T}(s) + \frac{1-\delta}{1-\tilde{\delta}}\hat{\mathcal{F}}_{\delta T:T}(s) - \hat{\mathcal{F}}_{\delta T:T}(s)\right|\\[8pt]
			&\hskip0.3in= q(\tilde{\delta})\sup_{s\in[0,1]}\left|\frac{\delta-\tilde{\delta}}{1-\tilde{\delta}}\hat{\mathcal{F}}_{\tilde{\delta}T:\delta T}(s) - \frac{\delta-\tilde{\delta}}{1-\tilde{\delta}}\hat{\mathcal{F}}_{\delta T:T}(s) \right|\\[8pt]
			&\hskip0.3in= q(\tilde{\delta})\frac{\delta-\tilde{\delta}}{1-\tilde{\delta}}\sup_{s\in[0,1]}\left|\hat{\mathcal{F}}_{\tilde{\delta}T:\delta T}(s) -  \frac{\delta_0-\delta}{1-\delta}\hat{\mathcal{F}}_{\delta T:\delta_0 T}(s) - \frac{1-\delta_0}{1-\delta}\hat{\mathcal{F}}_{\delta_0 T:T}(s)\right|,
		\end{align}
		and following the same outline as above,
		\begin{align}
			&q(\tilde{\delta})\sup_{s\in[0,1]}\left|\hat{\mathcal{F}}_{\tilde{\delta} T:T}(s) - \hat{\mathcal{F}}_{\delta T:T}(s)\right|\nonumber\\[8pt]
			&\hskip0.3in \leq q(\tilde{\delta})\sup_{s\in[0,1]}\left| \hat{\mathcal{F}}_{\tilde{\delta}T:\delta T}(s)-\mathcal{F}_1(s)\right| + q(\tilde{\delta}) \sup_{s\in[0,1]}\left| \hat{\mathcal{F}}_{\delta T:\delta_0 T}(s)-\mathcal{F}_1(s)\right|\nonumber\\[8pt] 
			&\hskip0.6in + q(\tilde{\delta}) \sup_{s\in[0,1]}\left| \hat{\mathcal{F}}_{\delta_0 T:T}(s)-\mathcal{F}_T(s)\right| + q(\tilde{\delta})\frac{\delta-\tilde{\delta}}{1-\tilde{\delta}}\theta. \label{term2case1}
		\end{align}
		For the third term,
		\begin{align}
			&\left|q(\tilde{\delta}) - q(\delta)\right|\sup_{s\in[0,1]}\left|\hat{\mathcal{F}}_{1:\delta T}(s) - \hat{\mathcal{F}}_{\delta T:T}(s)\right|\nonumber\\[8pt]
			&\hskip0.3in \leq \left|q(\tilde{\delta}) - q(\delta)\right| \sup_{s\in[0,1]}\left|\hat{\mathcal{F}}_{1:\tilde{\delta} T}(s) - \mathcal{F}_1(s)\right| + \left|q(\tilde{\delta}) - q(\delta)\right| \sup_{s\in[0,1]}\left|\hat{\mathcal{F}}_{\tilde{\delta}T:\delta T} T(s) - \mathcal{F}_1(s)\right|\nonumber\\[8pt]
			&\hskip0.6in + \left|q(\tilde{\delta}) - q(\delta)\right| \sup_{s\in[0,1]}\left|\hat{\mathcal{F}}_{\delta T:\delta_0 T} T(s) - \mathcal{F}_1(s)\right|\nonumber\\[8pt]
			&\hskip0.6in + \left|q(\tilde{\delta}) - q(\delta)\right| \sup_{s\in[0,1]}\left|\hat{\mathcal{F}}_{\delta_0 T: T} T(s) - \mathcal{F}_T(s)\right| + \left|q(\tilde{\delta}) - q(\delta)\right|\theta\label{term3case1}.
		\end{align}
		Combining the terms from Equations \ref{term1case1}, \ref{term2case1}, and \ref{term3case1}, we obtain
		\begin{align}
			\left|\hat{Q}_T(\tilde{\delta}) - \hat{Q}_T(\delta)\right| &\leq \left(q(\tilde{\delta}) + \left|q(\tilde{\delta})-q(\delta)\right|\right)\sup_{s\in[0,1]}\left|\hat{\mathcal{F}}_{1:\tilde{\delta}T}(s) - \mathcal{F}_1(s)\right|\nonumber\\[6pt]
			&\hskip0.2in +\left(2q(\tilde{\delta}) + \left|q(\tilde{\delta})-q(\delta)\right|\right)\sup_{s\in[0,1]}\left|\hat{\mathcal{F}}_{\tilde{\delta}T:\delta T}(s) - \mathcal{F}_1(s)\right| \nonumber\\[6pt]
			&\hskip0.2in + \left(q(\tilde{\delta})+ \left|q(\tilde{\delta})-q(\delta)\right|\right)\sup_{s\in[0,1]} \left|\hat{\mathcal{F}}_{\delta T:\delta_0 T}(s)-\mathcal{F}_1(s)\right|\nonumber\\[6pt]
			&\hskip0.2in + \left(q(\tilde{\delta})+ \left|q(\tilde{\delta})-q(\delta)\right|\right)\sup_{s\in[0,1]} \left|\hat{\mathcal{F}}_{\delta_0 T: T}(s)-\mathcal{F}_T(s)\right| \nonumber\\[6pt]
			&\hskip0.2in+\left(q(\tilde{\delta})\frac{\delta-\tilde{\delta}}{1-\tilde{\delta}} + \left|q(\tilde{\delta})-q(\delta)\right|\right)\theta. \hskip0.2in\text{(Case 1: $\tilde{\delta}<\delta\leq\delta_0$)} 
			\label{combine}
		\end{align}
		Each $q$ term in the first four pieces of Equation \ref{combine} is bounded above: $q(\tilde{\delta})\leq 1/2$, $q(\delta)\leq 1/2$, and their difference $|q(\tilde{\delta})-q(\delta)|< 1/2$. We can substitute the coefficients $1$, $3/2$, $1$, and $1$ that appear below in Equation \ref{formthm7} to resemble $\hat{A}^1_T$ in Equation \ref{hatAT1}. All that remains is to manipulate the final term of Equation \ref{combine} to look like the form in Lemma \ref{newseq}.
		\vskip0in
		In the initial scenario $\tilde{\delta} < \delta \leq \delta_0$ where $\tilde{\delta}, \delta, \delta_0\in\left[\frac{1}{2} - \frac{1}{2}\sqrt{1-4\kappa^2}, \frac{1}{2} + \frac{1}{2}\sqrt{1-4\kappa^2}\right]$, we examine the final coefficient term of Equation \ref{combine}. For any  $q(\tilde{\delta})$ and $q(\delta)$, we can write $|q(\tilde{\delta})-q(\delta)| \leq \sqrt{|\tilde{\delta}(1-\tilde{\delta}) - \delta(1-\delta)|} \leq \sqrt{|\tilde{\delta}-\delta|}$, and the expression becomes
		\begin{align}
			q(\tilde{\delta})\frac{(\delta-\tilde{\delta})}{(1-\tilde{\delta})} + \left|q(\tilde{\delta})-q(\delta)\right| &\leq q(\tilde{\delta})\frac{(\delta-\tilde{\delta})}{(1-\tilde{\delta})} + (\delta-\tilde{\delta})^{1/2}.
			\label{finalpiece3}
		\end{align}
		For some constant $0\leq\alpha\leq1$, we note that $(\delta-\tilde{\delta}) \leq (\delta-\tilde{\delta})^\alpha$, and similarly for $\kappa < 1/2$, $(\delta-\tilde{\delta})^{1/2} < (\delta-\tilde{\delta})^\kappa$.
		\begin{align}
			q(\tilde{\delta})\frac{(\delta-\tilde{\delta})}{(1-\tilde{\delta})} + (\delta-\tilde{\delta})^{1/2} &\leq \frac{q(\tilde{\delta})}{(1-\tilde{\delta})}(\delta-\tilde{\delta})^{1/2} + (\delta-\tilde{\delta})^{1/2}\\
			&\leq \left(\frac{\tilde{\delta}^{1/2}}{(1-\tilde{\delta})^{1/2}} + 1\right)(\delta-\tilde{\delta})^{1/2}\\
			&< \left(\frac{\tilde{\delta}^{1/2}}{(1-\tilde{\delta})^{1/2}}+ 1\right)(\delta-\tilde{\delta})^{\kappa}
			\label{finalpiece4}
		\end{align}
		Equation \ref{finalpiece4} takes the form
		\begin{align}
			\left(\frac{\tilde{\delta}^{1/2}}{(1-\tilde{\delta})^{1/2}}+ 1\right)(\delta-\tilde{\delta})^{\kappa} & \leq C_1\left|\tilde{\delta}-\delta\right|^\kappa,
		\end{align}
		where the maximum value of $C_1$ will occur at large $\tilde{\delta}$. We use a slightly cleaner form of the domain restriction to write $\kappa^2\leq \frac{1}{2} - \frac{1}{2}\sqrt{1-4\kappa^2}\leq \tilde{\delta}, \delta \leq \frac{1}{2} + \frac{1}{2}\sqrt{1-4\kappa^2} \leq 1-\kappa^2$, and set $C_1 = (1/\kappa)\sqrt{1-\kappa^2} + 1$. With $C = (2/\kappa)\sqrt{1-\kappa^2} + 1<\infty$ defined above and $C_1\leq C$, we adjust the final term to look like that in Lemma \ref{newseq}.
		\begin{align}
			\left|\hat{Q}_T(\tilde{\delta}) - \hat{Q}_T(\delta)\right| &\leq \sup_{s\in[0,1]}\left|\hat{\mathcal{F}}_{1:\tilde{\delta}T}(s) - \mathcal{F}_1(s)\right|\nonumber\\[6pt]
			&\hskip0.2in +\frac{3}{2}\sup_{s\in[0,1]}\left|\hat{\mathcal{F}}_{\tilde{\delta}T:\delta T}(s) - \mathcal{F}_1(s)\right| \nonumber\\[6pt]
			&\hskip0.2in + \sup_{s\in[0,1]} \left|\hat{\mathcal{F}}_{\delta T:\delta_0 T}(s)-\mathcal{F}_1(s)\right|\nonumber\\[6pt]
			&\hskip0.2in + \sup_{s\in[0,1]} \left|\hat{\mathcal{F}}_{\delta_0 T: T}(s)-\mathcal{F}_T(s)\right| \nonumber\\[6pt]
			&\hskip0.2in+C\theta\left|\tilde{\delta}-\delta\right|^{\kappa}   \label{formthm7}\\[8pt]
			&\leq \hat{A}^1_T + \hat{B}_T\left\|\tilde{\delta}-\delta\right\|^{\kappa},
			\label{final}
		\end{align}
		where $\kappa>0$ is the small constant from $q(\delta)$ that appears in the domain restriction. The last term in Equation \ref{combine} is bounded above by the equivalent in Equation \ref{formthm7} for all $\tilde{\delta}<\delta\leq \delta_0$ where $\tilde{\delta},\delta,\delta_0\in\Delta$ and when the trivial boundary condition $\kappa<1/2$ is satisfied.
		\vskip0in
		In case two of six where $\delta<\tilde{\delta}\leq \delta_0$, we can write
		\begin{align}
			\left|\hat{Q}_T(\tilde{\delta}) - \hat{Q}_T(\delta)\right| &\leq \left(q(\tilde{\delta}) + \left|q(\tilde{\delta})-q(\delta)\right|\right)\sup_{s\in[0,1]}\left|\hat{\mathcal{F}}_{1:\delta T}(s) - \mathcal{F}_1(s)\right|\nonumber\\[6pt]
			&\hskip0.2in +\left(2q(\tilde{\delta}) + \left|q(\tilde{\delta})-q(\delta)\right|\right)\sup_{s\in[0,1]}\left|\hat{\mathcal{F}}_{\delta T:\tilde{\delta} T}(s) - \mathcal{F}_1(s)\right| \nonumber\\[6pt]
			&\hskip0.2in + \left(q(\tilde{\delta})+ \left|q(\tilde{\delta})-q(\delta)\right|\right)\sup_{s\in[0,1]} \left|\hat{\mathcal{F}}_{\tilde{\delta} T:\delta_0 T}(s)-\mathcal{F}_1(s)\right|\nonumber\\[6pt]
			&\hskip0.2in + \left(q(\tilde{\delta})+ \left|q(\tilde{\delta})-q(\delta)\right|\right)\sup_{s\in[0,1]} \left|\hat{\mathcal{F}}_{\delta_0 T: T}(s)-\mathcal{F}_T(s)\right| \nonumber\\[6pt]
			&\hskip0.2in+\left(q(\tilde{\delta})\frac{\delta-\tilde{\delta}}{1-\tilde{\delta}} + \left|q(\tilde{\delta})-q(\delta)\right|\right)\theta. \hskip0.2in\text{(Case 2: $\delta<\tilde{\delta}\leq\delta_0$)} 
			\label{combine2}
		\end{align}
		We obtain the coefficients for $\hat{A}^2_T$ in Equation \ref{hatAT2} of $1$, $3/2$, $1$, and $1$ from the bounds $q(\tilde{\delta})\leq 1/2$, $q(\delta)\leq 1/2$, and $|q(\tilde{\delta})-q(\delta)|< 1/2$. In the same process as above, we can show the final piece is bounded by $C\theta|\tilde{\delta}-\delta|^{\kappa}$ where $C=(2/\kappa)\sqrt{1-\kappa^2} + 1$.
		\vskip0in
		For case three,
		\begin{align}
			\left|\hat{Q}_T(\tilde{\delta}) - \hat{Q}_T(\delta)\right| &\leq \left(q(\tilde{\delta}) + \left|q(\tilde{\delta})-q(\delta)\right|\right)\sup_{s\in[0,1]}\left|\hat{\mathcal{F}}_{1:\tilde{\delta} T}(s) - \mathcal{F}_1(s)\right|\nonumber\\[6pt]
			&\hskip0.2in +\left(2q(\tilde{\delta}) + \left|q(\tilde{\delta})-q(\delta)\right|\right)\sup_{s\in[0,1]}\left|\hat{\mathcal{F}}_{\tilde{\delta} T:\delta_0 T}(s) - \mathcal{F}_1(s)\right| \nonumber\\[6pt]
			&\hskip0.2in + \left(2q(\tilde{\delta})+ \left|q(\tilde{\delta})-q(\delta)\right|\right)\sup_{s\in[0,1]} \left|\hat{\mathcal{F}}_{\delta_0 T:\delta T}(s)-\mathcal{F}_T(s)\right|\nonumber\\[6pt]
			&\hskip0.2in + \left(q(\tilde{\delta})+ \left|q(\tilde{\delta})-q(\delta)\right|\right)\sup_{s\in[0,1]} \left|\hat{\mathcal{F}}_{\delta T: T}(s)-\mathcal{F}_T(s)\right| \nonumber\\[6pt]
			&\hskip0.2in+\left(q(\tilde{\delta})\frac{\delta-\tilde{\delta}}{\delta}+q(\tilde{\delta})\frac{\delta-\tilde{\delta}}{1-\tilde{\delta}} + \left|q(\tilde{\delta})-q(\delta)\right|\right)\theta, \hskip0.2in\text{(Case 3: $\tilde{\delta}\leq \delta_0 < \delta$)} 
			\label{combine3}
		\end{align}
		we obtain the coefficients for $\hat{A}^3_T$ in Equation \ref{hatAT3} of $1$, $3/2$, $3/2$, and $1$, and $C=(2/\kappa)\sqrt{1-\kappa^2}+1$ as above when $\tilde{\delta}$ is large and $\delta$ small.
		\vskip0in
		For case four,
		\begin{align}
			\left|\hat{Q}_T(\tilde{\delta}) - \hat{Q}_T(\delta)\right| &\leq \left(q(\tilde{\delta}) + \left|q(\tilde{\delta})-q(\delta)\right|\right)\sup_{s\in[0,1]}\left|\hat{\mathcal{F}}_{1:\delta T}(s) - \mathcal{F}_1(s)\right|\nonumber\\[6pt]
			&\hskip0.2in +\left(2q(\tilde{\delta}) + \left|q(\tilde{\delta})-q(\delta)\right|\right)\sup_{s\in[0,1]}\left|\hat{\mathcal{F}}_{\delta T:\delta_0 T}(s) - \mathcal{F}_1(s)\right| \nonumber\\[6pt]
			&\hskip0.2in + \left(2q(\tilde{\delta})+ \left|q(\tilde{\delta})-q(\delta)\right|\right)\sup_{s\in[0,1]} \left|\hat{\mathcal{F}}_{\delta_0 T:\tilde{\delta} T}(s)-\mathcal{F}_T(s)\right|\nonumber\\[6pt]
			&\hskip0.2in + \left(q(\tilde{\delta})+ \left|q(\tilde{\delta})-q(\delta)\right|\right)\sup_{s\in[0,1]} \left|\hat{\mathcal{F}}_{\tilde{\delta} T: T}(s)-\mathcal{F}_T(s)\right| \nonumber\\[6pt]
			&\hskip0.2in+\left(q(\tilde{\delta})\frac{\delta-\tilde{\delta}}{\tilde{\delta}}+q(\tilde{\delta})\frac{\delta-\tilde{\delta}}{1-\delta} + \left|q(\tilde{\delta})-q(\delta)\right|\right)\theta, \hskip0.2in\text{(Case 4: $\delta\leq \delta_0 < \tilde{\delta}$)}
			\label{combine4}
		\end{align}
		we obtain the coefficients for $\hat{A}^4_T$ in Equation \ref{hatAT4} of $1$, $3/2$, $3/2$, and $1$, and $C=(2/\kappa)\sqrt{1-\kappa^2}+1$ as above when $\tilde{\delta}$ is small and $\delta$ large.
		\vskip0in
		For case five, 
		\begin{align}
			\left|\hat{Q}_T(\tilde{\delta}) - \hat{Q}_T(\delta)\right| &\leq \left(q(\tilde{\delta}) + \left|q(\tilde{\delta})-q(\delta)\right|\right)\sup_{s\in[0,1]}\left|\hat{\mathcal{F}}_{1:\delta_0 T}(s) - \mathcal{F}_1(s)\right|\nonumber\\[6pt]
			&\hskip0.2in +\left(q(\tilde{\delta}) + \left|q(\tilde{\delta})-q(\delta)\right|\right)\sup_{s\in[0,1]}\left|\hat{\mathcal{F}}_{\delta_0 T:\tilde{\delta} T}(s) - \mathcal{F}_T(s)\right| \nonumber\\[6pt]
			&\hskip0.2in + \left(2q(\tilde{\delta})+ \left|q(\tilde{\delta})-q(\delta)\right|\right)\sup_{s\in[0,1]} \left|\hat{\mathcal{F}}_{\tilde{\delta} T:\delta T}(s)-\mathcal{F}_T(s)\right|\nonumber\\[6pt]
			&\hskip0.2in + \left(q(\tilde{\delta})+ \left|q(\tilde{\delta})-q(\delta)\right|\right)\sup_{s\in[0,1]} \left|\hat{\mathcal{F}}_{\delta T: T}(s)-\mathcal{F}_T(s)\right| \nonumber\\[6pt]
			&\hskip0.2in+\left(q(\tilde{\delta})\frac{\delta-\tilde{\delta}}{\tilde{\delta}}+ \left|q(\tilde{\delta})-q(\delta)\right|\right)\theta, \hskip0.2in\text{(Case 5: $\delta_0<\tilde{\delta}<\delta$)}
			\label{combine5}
		\end{align}
		we obtain the coefficients for $\hat{A}^5_T$ in Equation \ref{hatAT5} of $1$, $1$, $3/2$, and $1$, and $C=(2/\kappa)\sqrt{1-\kappa^2}+1$ as above when $\tilde{\delta}$ is small.
		\vskip0in
		For the sixth and final case, 
		\begin{align}
			\left|\hat{Q}_T(\tilde{\delta}) - \hat{Q}_T(\delta)\right| &\leq \left(q(\tilde{\delta}) + \left|q(\tilde{\delta})-q(\delta)\right|\right)\sup_{s\in[0,1]}\left|\hat{\mathcal{F}}_{1:\delta_0 T}(s) - \mathcal{F}_1(s)\right|\nonumber\\[6pt]
			&\hskip0.2in +\left(q(\tilde{\delta}) + \left|q(\tilde{\delta})-q(\delta)\right|\right)\sup_{s\in[0,1]}\left|\hat{\mathcal{F}}_{\delta_0 T:\delta T}(s) - \mathcal{F}_T(s)\right| \nonumber\\[6pt]
			&\hskip0.2in + \left(2q(\tilde{\delta})+ \left|q(\tilde{\delta})-q(\delta)\right|\right)\sup_{s\in[0,1]} \left|\hat{\mathcal{F}}_{\delta T:\tilde{\delta} T}(s)-\mathcal{F}_T(s)\right|\nonumber\\[6pt]
			&\hskip0.2in + \left(q(\tilde{\delta})+ \left|q(\tilde{\delta})-q(\delta)\right|\right)\sup_{s\in[0,1]} \left|\hat{\mathcal{F}}_{\tilde{\delta} T: T}(s)-\mathcal{F}_T(s)\right| \nonumber\\[6pt]
			&\hskip0.2in+\left(q(\tilde{\delta})\frac{\delta-\tilde{\delta}}{\tilde{\delta}}+ \left|q(\tilde{\delta})-q(\delta)\right|\right)\theta, \hskip0.2in\text{(Case 6: $\delta_0<\delta<\tilde{\delta}$)}
			\label{combine6}
		\end{align}
		we obtain the coefficients for $\hat{A}^6_T$ in Equation \ref{hatAT6} of $1$, $1$, $3/2$, and $1$, and $C=(2/\kappa)\sqrt{1-\kappa^2}+1$ as above when $\tilde{\delta}$ is small.
		\vskip0in
		For all cases $\tilde{\delta},\delta,\delta_0\in\Delta$, the result of Lemma \ref{newseq} holds with the corresponding $\hat{A}^i_T$, $\hat{B}_T=\theta\left[(2/\kappa)\sqrt{1-\kappa^2}+1\right]$, and $\alpha = \kappa$. Thus, $\hat{Q}_T(\delta)$ is stochastically equicontinuous and uniformly converges in probability to $Q_0(\delta)$ on the interval $\Delta$.
	\end{proof}
	\par
	We now employ Theorem 2.1 from \citet{newey94} and Lemma \ref{c4} to complete the proof of Theorem \ref{consistent}.
	\begin{proof}[Proof of Theorem \ref{consistent}]
		We begin with condition (1). It can easily be seen that
		the quantity in Equation \ref{result1} is uniquely maximized to a value of $\theta$ at $\delta = \delta_0$, thus  
		$Q_0(\delta)$ in Equation \ref{q0pieces} has a unique maximum at $\delta = \delta_0$. If we relax the boundary restriction  $\delta\in\Delta$ and let $\delta\in(0,1)$, the unique maximum will still hold provided $\kappa \leq \delta_0^{1/2}(1-\delta_0)^{1/2}$, or $\delta_0\in\Delta$.  
		\vskip0in
		Condition (2) is satisfied as $\Delta$ is a closed and bounded set on $\mathbb{R}$. 
		\vskip0in
		For condition (3), the individual pieces of Equation \ref{q0pieces} are continuous, and we examine the extreme points of each subdomain. Because the set $\Delta$ is closed, continuity is trivially satisfied at the outward extremes. 
		For the point $\delta_0$,
		\begin{align}
			\lim_{\delta\rightarrow \delta_0^+} Q_0(\delta_0) &= Q_0(\delta_0)=\theta \delta_0^{1/2}(1-\delta_0)^{1/2}\label{llim}\\[8pt]
			\text{and}\hskip0.1in\lim_{\delta\rightarrow \delta_0^-} Q_0(\delta_0) &= \lim_{\delta\rightarrow \delta_0^-} \theta \frac{\delta_0(1-\delta)^{1/2}}{\delta^{1/2}} = \theta \delta_0^{1/2}(1-\delta_0)^{1/2}.\label{rlim}
		\end{align}
		\vskip0in
		Condition (4) is satisfied via Lemma \ref{c4}.
		\vskip0in
		The four conditions of Theorem 2.1 from \citet{newey94} are satisfied and $\hat{\delta}\xrightarrow{P}\delta_0$ implies $\hat{\tau}\xrightarrow{P}\tau$ for $t,\tau \in \left[\frac{T}{2}-\frac{T}{2}\sqrt{1-4\kappa^2}, \frac{T}{2}+\frac{T}{2}\sqrt{1-4\kappa^2}\right]$.
	\end{proof}
	\begin{ntheorem1}
		Let $\{S_t, \;t\in\mathbb{Z}\}$ be a stationary sequence such that $\mathcal{F}(s) = P(S_1\leq s)$ is Lipschitz continuous of order $C > 0$. Assume that $\{S_t, \;t\in\mathbb{Z}\}$ is $\mathcal{S}$-mixing and condition (1) of Definition \ref{def:smix} holds with $\gamma_m=m^{-AC}$, $\delta_m = m^{-A}$ for some $A>4$. Then the series
		\begin{align}
			\Gamma(s,s') = \sum_{-\infty < t < \infty} \mathbb{E}\left[S_1(s) S_t(s')\right]
		\end{align}
		converges absolutely for every choice of parameters $(s,s')\in\mathbb{R}^2$. Moreover, there exists a two-parameter Gaussian process $\mathcal{K}(s,t)$ such that $\mathbb{E}\left[\mathcal{K}(s,t)\right] = 0$,  $\mathbb{E}\left[\mathcal{K}(s,t)\;\mathcal{K}(s',t')\right] = (t \wedge t')\; \Gamma(s,s')$, and for some $\alpha>0$,
		\begin{align}
			\sup_{1 \leq t \leq T} \sup_{s\in[0,1]} \left| \sum_{i=1}^t \left(\mathbf{1}\{S_i \leq s\} - \mathcal{F}(s)\right) - \mathcal{K}(s,t)\right| = o\left(T^{1/2}(\log T)^{-\alpha}\right) \hskip0.1in \text{a.s.}
		\end{align}
	\end{ntheorem1}
	\begin{definition}
		A sequence of functions $\hat{Q}_T(\delta)$ is stochastically equicontinuous if for every $\varepsilon, \eta>0$ there exists a random quantity $\Gamma_T(\varepsilon,\eta)$ and a constant $T_0(\varepsilon,\eta)$ such that for $T\geq T_0(\varepsilon,\eta)$, $\mathbb{P}\left(\left|\Gamma_T(\varepsilon,\eta)\right|>\varepsilon\right)<\eta$ and for each $\delta$ there is an open set $\mathcal{N}(\delta,\varepsilon,\eta)$ containing $\delta$ with 
		\begin{align}
			\sup_{\tilde{\delta}\in\mathcal{N}(\delta,\varepsilon,\eta)}\left|\hat{Q}_T(\tilde{\delta})-\hat{Q}_T(\delta)\right| \leq \Gamma_T(\varepsilon,\eta),\hskip0.1in\text{for $T>T_0(\varepsilon,\eta)$.}
		\end{align}
		\label{sedef}
	\end{definition}
	\begin{ntheorem2}
		If there is a function $Q_0(\delta)$ such that 
		\begin{enumerate}
			\item[(1)] $Q_0(\delta)$ is uniquely maximized at $\delta_0$, $\delta_0 = \arg \max_{\delta\in\Delta} Q_0(\delta)$;
			\item[(2)] $\Delta$ is compact;
			\item[(3)] $Q_0(\delta)$ is continuous;
			\item[(4)] $\hat{Q}_T(\delta)$ converges uniformly in probability to $Q_0(\delta)$, $\sup_{\delta\in\Delta} \left|\hat{Q}_T(\delta)-Q_0(\delta)\right|\xrightarrow{P}0$;
		\end{enumerate}
		then $\hat{\delta} = \arg \max_{\delta\in\Delta} \hat{Q}_T(\delta) \xrightarrow{P} \delta_0$.
		\label{cp}
	\end{ntheorem2}
	
\end{document}